\DeclarePairedDelimiterX{\inp}[2]{\langle}{\rangle}{#1, #2}
\DeclarePairedDelimiterX{\abs}[1]{\lvert}{\rvert}{#1}
\DeclarePairedDelimiterX{\roundup}[1]{\lceil}{\rceil}{#1}
\DeclarePairedDelimiterX{\norm}[1]{\lVert}{\rVert}{#1}
\DeclarePairedDelimiterX{\cbr}[1]{\{}{\}}{#1} 
\DeclarePairedDelimiterX{\rbr}[1]{(}{)}{#1} 
\DeclarePairedDelimiterX{\sbr}[1]{[}{]}{#1} 
\providecommand{\tsum}{\textstyle\sum} 
\providecommand{\N}{\mathbb{N}} 
\DeclareMathOperator{\expect}{\mathbb{E}}
\DeclareMathOperator{\E}{\expect}
\DeclareMathOperator{\sgn}{sign}
\def\sign{\@ifnextchar*{\@sgnargscaled}{\@ifnextchar[{\sgnargscaleas}{\@ifnextchar{\bgroup}{\@sgnarg}{\sgn} }}}
\def\@sgnarg#1{\sgn\rbr{#1}}
\def\@sgnargscaled#1{\sgn\rbr*{#1}}
\def\@sgnargscaleas[#1]#2{\sgn\rbr[#1]{#2}}
\DeclareMathOperator*{\argmin}{arg\!min}
\DeclareMathOperator*{\argmax}{arg\!max}
\providecommand{\0}{\bm{0}}
\providecommand{\ee}{\bm{e}}
\let\ggg\gg
\renewcommand{\gg}{\bm{g}}
\providecommand{\mm}{\bm{m}}
\renewcommand{\vv}{\bm{v}}
\providecommand{\xx}{\bm{x}}
\providecommand{\yy}{\bm{y}}
\providecommand{\zz}{\bm{z}}
\providecommand{\xiv}{\boldsymbol{\xi}}
\providecommand{\muv}{\boldsymbol{\mu}}
\providecommand{\cB}{\mathcal{B}}
\providecommand{\cE}{\mathcal{E}}
\providecommand{\cG}{\mathcal{G}}
\providecommand{\cO}{\mathcal{O}}
\providecommand{\cS}{\mathcal{S}}
\newtheorem{theorem}{Theorem}
\newtheorem{lemma}{Lemma}
\newtheorem{remark}[lemma]{Remark}
\newtheorem{definition}{Definition}
\newcommand{\ignore}[1]{}
\definecolor{color1}{RGB}{228,26,28}
\definecolor{color2}{RGB}{55,126,184}
\definecolor{color3}{RGB}{77,175,74}
\definecolor{color4}{RGB}{152,78,163}
\definecolor{color5}{RGB}{255,127,0}
\definecolor{mygreen}{RGB}{77,175,74}
\definecolor{myorange}{RGB}{224,155,21}
\definecolor{myblue}{RGB}{55,126,184}
\definecolor{skyblue}{RGB}{117,187,253}
\definecolor{myred}{RGB}{228,26,28}
\newcommand{\myitem}[1]{%
  \item[\textbf{(#1)}]\protected@edef\@currentlabel{#1}%
}
\colorlet{client}{red!40}
\setlist{leftmargin=5.5mm}
\newcommand{\krum}{\textsc{Krum}\xspace}
\newcommand{\resample}{\textsc{Bucketing}\xspace}
\newcommand{\rfa}{\textsc{RFA}\xspace}
\newcommand{\rsa}{\textsc{Rsa}\xspace}
\newcommand{\cm}{\textsc{CM}\xspace}
\newcommand{\tm}{\textsc{TM}\xspace}
\newcommand{\cclip}{\textsc{CClip}\xspace}
\newcommand{\cI}{\mathcal{I}}
\newcommand{\iid}{{iid}\xspace}
\newcommand{\noniid}{{non-iid}\xspace}
\newcommand{\aggr}{\textsc{Aggr}\xspace}
\newcommand{\ragg}{\textsc{ARAgg}\xspace}
\newcommand{\alg}{\textsc{Alg}\xspace}
\algrenewcommand\algorithmicrequire{\textbf{Input:}}
\newtheorem*{rep@theorem}{\protect\rep@title}
\newcommand{\newreptheorem}[2]{%
\newenvironment{rep#1}[1]{%
 \def\rep@title{#2 \ref{##1}}%
 \begin{rep@theorem}}%
 {\end{rep@theorem}}}
\title{Byzantine-Robust Learning on Heterogeneous Datasets via Bucketing}
\author{Sai Praneeth Karimireddy\thanks{equal contribution.} \\
\texttt{sai.karimireddy@epfl.ch}\\
\And 
Lie He$^*$ \\
\texttt{lie.he@epfl.ch}\\
\And
Martin Jaggi \\
\texttt{martin.jaggi@epfl.ch}\\
}
\begin{document}

\maketitle
%

\begin{abstract}
    In Byzantine robust distributed or federated learning, a central server wants to train a machine learning model over data distributed across multiple workers. However, a fraction of these workers may deviate from the prescribed algorithm and send arbitrary messages. While this problem has received significant attention recently, most current defenses assume that the workers have identical data. For realistic cases when the data across workers are heterogeneous (\noniid), we design new attacks which circumvent current defenses, leading to significant loss of performance. We then propose a simple bucketing scheme that adapts existing robust algorithms to heterogeneous datasets at a negligible computational cost. We also theoretically and experimentally validate our approach, showing that combining bucketing with existing robust algorithms is effective against challenging attacks. Our work is the first to establish guaranteed convergence for the non-iid Byzantine robust problem under realistic assumptions.
\end{abstract}

\section{Introduction}

Distributed or federated machine learning, where the data is distributed across multiple workers, has become an increasingly important learning paradigm both due to growing sizes of datasets, as well as data privacy concerns. In such a setting, the workers collaborate to train a single model without directly transmitting their training data \citep{mcmahan2016communicationefficient,bonawitz2019towards,kairouz2019federated}. However, by decentralizing the training across a vast number of workers we potentially open ourselves to new security threats.
Due to the presence of agents in the network which are actively malicious, or simply due to system and network failures, some workers may disobey the protocols and send arbitrary messages%
; such workers are also known as \textit{Byzantine} workers \citep{lamport2019byzantine}.
Byzantine robust optimization algorithms attempt to combine the updates received from the workers using robust aggregation rules and ensure that the training is not impacted by the presence of a small number of malicious workers.

While this problem has received significant recent attention due to its importance, \citep{blanchard2017machine,yin2018byzantinerobust,alistarh2018byzantine,karimireddy2020learning}, most of the current approaches assume that the data present on each different worker has identical distribution. This assumption is very unrealistic in practice and heterogeneity is inherent in distributed and federated learning~\citep{kairouz2019federated}. In this work, we show that existing Byzantine aggregation rules catastrophically fail with very simple attacks (or sometimes even with no attacks) in realistic settings. We carefully examine the causes of these failures, and propose a simple solution which provably solves the Byzantine resilient optimization problem under heterogeneous workers.

Concretely, our contributions in this work are summarized below
\begin{itemize}[nolistsep,itemsep=1mm]
    \item We show that when the data across workers is heterogeneous, existing aggregation rules fail to converge, even when no Byzantine adversaries are present. We also propose a simple new attack, \textit{mimic}, which explicitly takes advantage of data heterogeneity and circumvents median-based defenses. Together, these highlight the fragility of existing methods in real world settings.
    \item We then propose a simple fix - a new bucketing step which can be used before any existing aggregation rule. We introduce a formal notion of a robust aggregator (\ragg) and prove that existing methods like \krum , coordinate-wise median (\cm), and geometric median aka robust federated averaging (\rfa)---though insufficient on their own---become provably robust aggregators when augmented with our bucketing.%
    \item We combine our notion of robust aggregator (\ragg) with worker momentum to obtain optimal rates for Byzantine robust optimization with matching lower bounds. Unfortunately, our lower bounds imply that convergence to an exact optimum may not be possible due to heterogeneity. We then circumvent this lower bound and show that when heterogeneity is mild (or when the model is overparameterized), we can in fact converge to an exact optimum. This is the first result establishing convergence to the optimum for heterogeneous Byzantine robust optimization.
    \item Finally, we evaluate the effect of the proposed techniques (bucketing and worker momentum) against known and new attacks showcasing drastic improvement on realistic heterogeneously distributed datasets.
\end{itemize}

\textbf{Setup and notations.}
Suppose that of the total $n$ workers, the set of good workers is denoted by $\cG \subseteq \{1,\dots,n\}$. Our objective is to minimize
\begin{equation}
    \label{eqn:loss-def}\textstyle
    f(\xx) := \frac{1}{\abs{\cG}} \sum_{i \in \cG} \!\big\{ f_i(\xx) := \E_{\xiv_i}[ F_i(\xx; \xiv_i)] \big\}
\end{equation}
where $f_i$ is the loss function on worker $i$ defined over its own (heterogeneous) data distribution~$\xiv_i$.
%
%
%
The (stochastic) gradient computed by a good worker $i \in \cG$ over minibatch $\xiv_i$ is given as $\gg_i(\xx, \xiv_i) := \nabla F_i(\xx; \xiv_i)$. The noise in every stochastic gradient is independent, unbiased with $\E_{\xiv_i} [\gg_i(\xx, \xiv_i)] = \nabla f_i(\xx)$, and has bounded variance $\E_{\xiv_i} \norm{ \gg_i(\xx, \xiv_i) - \nabla f_i(\xx)}^2 \leq \sigma^2$. Further, we assume that the data heterogeneity across the workers can be bounded as
\begin{equation*}
    \E_{j \sim \cG}\norm{\nabla f_j(\xx) - \nabla f(\xx)}^2 \leq \zeta^2\,, \quad \forall \xx\,.
\end{equation*}
We write $\gg_i^t$ or simply $\gg_i$ instead of $\gg_i(\xx^t, \xiv_i^t)$ when there is no ambiguity.
%
%

\textbf{Byzantine attack model.}
The set of Byzantine workers $\cB \subset [n]$ is fixed over time, with the remaining workers $\cG$ being good, i.e. $[n] = \cB \uplus \cG$. We write $\delta$ for the fraction of Byzantine workers, $\abs{\cB} =: q \leq \delta n$. The Byzantine workers can collude and deviate arbitrarily from our protocol, sending any update to the server.

Our modeling assumes that the practitioner picks a value of $\delta \in [0,0.5)$. This $\delta$ reflects the level of robustness required. A choice of a large $\delta$ (say near 0.5) would mean that the system is very robust and can tolerate a large fraction of attackers, but the algorithm becomes much more conservative and slow. On the flip side, if the practitioner knows that the the number of Byzantine agents are going to be few, they can pick a small $\delta$ (say 0.05--0.1) ensuring some robustness with almost no impact on convergence. The choice of $\delta$ can also be formulated as how expensive do we want to make an attack? To carry out a succesful attack the attacker would need to control $\delta$ fraction of all workers. We recommend implementations claiming robustness be transparent about their choice of $\delta$.



\section{Related work} %
\label{sec:literature_review}

\textbf{IID defenses.} There has been a significant amount of recent work on the case when all workers have identical data distributions.
\citet{blanchard2017machine} initiated the study of Byzantine robust learning and proposed a distance-based aggregation approach \krum and extended to \citep{mhamdi2018hidden,Damaskinos:265684}.
\citet{yin2018byzantinerobust} propose to use and analyze the coordinate-wise median (\cm), and \citet{pillutla2019robust} use approximate geometric median. \citet{bernstein2018signsgd} propose to use the signs of gradients and then aggregate them by majority vote, however, \citet{karimireddy2019error} show that it may fail to converge. Most recently, \cite{alistarh2018byzantine,allen2020byzantine,mhamdi2021momentum,karimireddy2020learning} showcase how to use past gradients to more accurately filter \iid Byzantine workers and specifically \textit{time-coupled} attacks. In particular, our work builds on top of \citep{karimireddy2020learning} and non-trivially extends to the \noniid setting.

\textbf{IID vs. Non-IID attacks.} For the \iid setting, the state-of-the-art attacks are \textit{time-coupled} attacks~\citep{baruch2019little,xie2019fall}. These attacks introduce a small but consistent bias at every step which is hard to detect in any particular round, but accumulates over time and eventually leads to divergence, breaking most prior robust methods.
Our work focuses on developing attacks (and defenses) which specifically take advantages of the \noniid setting. The \noniid setting also enables targeted \textit{backdoor} attacks which are designed to take advantage of heavy-tailed data \citep{bagdasaryan2018backdoor,bhagoji2018analyzing}. However, this is a challenging and open problem~\citep{sun2019can,wang2020attack}. Our focus is on the overall accuracy of the trained model, not on any subproblem.
%

\textbf{Non-IID defenses.} The \noniid defenses are relatively under-examined.
\citet{ghosh2019robust,9054676} use an outlier-robust clustering method. When the server has the entire training dataset, the \noniid-ness is automatically addressed \citep{xie2018zeno,chen2018draco,rajput2019detox}. Typical examples are parallel training of neural networks on public cloud, or volunteer computing \citep{Meeds_2015,miura2015implementation}. Note that \citet{rajput2019detox} use hierarchical aggregation over "vote group" which is similar to the bucketing techniques but their results are limited to the \iid setting.
However, none of these methods are applicable to the standard federated learning.
This is partially tackled in \citep{data2020byzantine,data2020byzantineFL} who analyze spectral methods for robust optimization. However, these methods require $\Omega(d^2)$ time, making them infeasible for large scale optimization.
\citet{li2019rsa} proposes an SGD variant (\rsa) with additional $\ell_p$ penalty which only works for strongly convex objectives.
In an independent recent work, \citet{acharya2021robust} analyze geometric median (GM) on \noniid data using sparsified gradients. However, they do not defend against time coupled attacks, and their analysis neither proves convergence to the optimum nor recovers the standard rate of SGD when $\delta\!\rightarrow\!0$. In contrast, our analysis of GM addresses both issues and is more general.
For decentralized training with \noniid data, a parallel work \citep{el2021collaborative} considers asynchronous communication and unconstrained topologies and tolerates a maximum number of Byzantine workers in their setting. However, no convergence rate is given. \citet{he2022byzantine} consider decentralized training on constrained topologies and establish the consensus and convergence theory for a clipping based algorithm which tolerates a $\delta$-fraction of Byzantine workers, limited by the spectral gap of the topology.
Finally, \citet{pmlr-v139-yang21e} propose to use bucketing for asynchronous Byzantine learning which is very similar to the bucketing trick proposed in this paper for \noniid setup. In \Cref{ssec:additional} we further compare these two methods in terms of purposes, techniques, and analysis.
\footnote{The previous version of this work uses resampling which has identical performance as bucketing. The detailed comparison is listed in \Cref{ssec:resampling_vs_bucketing}.
}

\paragraph{Strong growth condition.} The assumption that $\E_{j \sim \cG}\norm{\nabla f_j(\xx) - \nabla f(\xx)}^2 \leq B^2 \norm{\nabla f(\xx)}^2$ for some $B \geq 0$ is also referred to as the strong growth condition \citep{schmidt2013fast}. This has been extensively used to analyze and derive optimization algorithms for deep learning \citep{schmidt2013fast,ma2018power,vaswani2019fast,vaswani2019painless,meng2020fast}. This line of work shows that the strong growth assumption is both realistic and (perhaps more importantly) useful in understanding optimization algorithms in deep learning. However, this is stronger than the \emph{weak} growth condition which states that $\E_{j \sim \cG}\norm{\nabla f_j(\xx) - \nabla f(\xx)}^2 \leq B^2(f(\xx) - f^\star)$ for some $B \geq 0$. For a smooth function $f$, the strong growth condition always implies the weak growth condition. Further, for smooth convex functions this is equivalent to assuming that all the workers functions $\{f_i\}$ share a common optimum, commonly known as interpolation. Our work uses the stronger version of the growth condition and it remains open to extend our results to the weaker version. This latter condition is strictly necessary for heterogeneous Byzantine optimization~\citep{gupta2020resilience}.


\section{Attacks against existing aggregation schemes}\label{sec:attacks}
In this section we show that when the data across the workers is heterogeneous (\noniid), then we can design simple new attacks which take advantage of the heterogeneity, leading to the failure of existing aggregation schemes. 
We study three representative and widely used defenses:

\textbf{Krum.} For $i\neq j$, let $i\rightarrow j$ denote that~$\xx_j$ belongs to the $n-q-2$ closest vectors to $\xx_i$. Then,
\vspace{-1mm}
\[
    \krum(\xx_1, \ldots, \xx_n) := \argmin_i \tsum_{i\rightarrow j}\|\xx_i-\xx_j\|^2 \,.\vspace{-3mm}
\]
Krum is computationally expensive, requiring $\cO(n^2)$ work by the server \citep{blanchard2017machine}.
\textbf{CM.} Coordinate-wise median computes for the $k$th coordinate:
\vspace{-1mm}
\[
    [\cm(\xx_1, \ldots, \xx_n)]_k := \text{median}([\xx_1]_k, \dots, [\xx_n]_k) = \argmin_{i} \tsum_{j=1}^n | [\xx_i]_k -[\xx_j]_k| \,.\vspace{-3mm}
\]
Coordinate-wise median is fast to implement requiring only $\cO(n)$ time \citep{Chen_2017}.

\textbf{RFA.} Robust federated averaging (\rfa) computes the geometric median
\vspace{-1mm}
\[
    \rfa(\xx_1, \ldots, \xx_n) := \argmin_{\vv} \tsum_{i=1}^n \|\vv-\xx_i\|_2 \,.\vspace{-2mm}
\]
While the geometric median has no closed form solution, \citep{pillutla2019robust} approximate it using multiple iterations of smoothed Weiszfeld algorithm, each of which requires $\cO(n)$ computation.

\vspace{-4mm}
\subsection{Failure on imbalanced data without Byzantine workers}\label{ssec:representative}
\vspace{-2mm}
We show that when the data amongst the workers is imbalanced, existing aggregation rules \emph{fail} even in the \emph{absence} of any Byzantine workers.
Algorithms like \krum select workers who are \emph{representative} of a majority of the workers by relying on statistics such as pairwise differences between the various worker updates.
Our key insight is that when the data across the workers is heterogeneous, there is no single worker who is representative of the whole dataset. This is because each worker computes their local gradient over vastly different local data.

\textbf{Example.} Suppose that there are $2n+1$ workers with worker $i$ holding $(-1)^i \in \{\pm 1\}$. This means that the true mean is $\approx 0$, but \krum, \cm, and \rfa will output $\pm 1$. This motivates our next attack.

Hence, for convergence it is important to not only select a good (non-Byzantine) worker, but also ensure that each of the good workers is selected with roughly equal frequency.
In \Cref{tab:noattack:naive}, we demonstrate failures of such aggregators by training on MNIST with $n\!=\!20$ and no attackers ($\delta\!=\!0$). We construct an imbalanced dataset where each successive class has only a fraction of samples of the previous class.
We defer details of the experiments to \Cref{sec:additional_exps}.
As we can see, \krum, \cm and \rfa match the ideal performance of SGD in the \iid case, but only attain less than 90\% accuracy in the \noniid case. This corresponds to learning only the top 2--3 classes and ignoring the rest.

A similar phenomenon was observed when using batch-size 1 in the \iid case by \citep{karimireddy2020learning}. However, in the \iid case this can be easily overcome by increasing the batch-size. In contrast, when the data across the works is \noniid (e.g. split by class), increasing the batch-size does \emph{not} make the worker gradients any more similar and there remains a big drop in performance. Finally, note that in \Cref{tab:noattack:naive} a hitherto new algorithm (\cclip) maintains its performance both in the \iid and the \noniid setting. We will explore this in more detail in \Cref{sec:bucketing}.


\begin{table}[t]
    \scriptsize
    \parbox{.45\linewidth}{
        \centering
        \captionsetup{font=small}
        \vspace{-4mm}
        \caption{Test accuracy (\%) with no Byzantine workers ($\delta\!=\!0$) on imbalanced data.}
        \vspace{-4mm}
        \label{tab:noattack:naive}
        \begin{tabular}{lcc}
\toprule
Aggr& \iid& {\noniid}\\\midrule
\textsc{Avg} & $98.79\!\pm\!0.10$ & $98.75\!\pm\!0.02$ \\
\krum & $97.95\!\pm\!0.25$ & $89.90\!\pm\!4.75$ \\
\cm & $97.72\!\pm\!0.22$ & $80.36\!\pm\!0.05$ \\
\rfa & $98.62\!\pm\!0.08$ & $82.60\!\pm\!0.84$ \\
\cclip & $98.78\!\pm\!0.10$ & $98.78\!\pm\!0.06$ \\
\bottomrule
\end{tabular}

    }
    \hfill
    \parbox{.45\linewidth}{
        \centering
        \captionsetup{font=small}
        \vspace{-4mm}
        \caption{Test accuracy (\%) under mimic attack with $\delta=0.2$ fraction of Byzantine workers.}
        \vspace{-4mm}
        \label{tab:mimic_attack:naive}
        \begin{tabular}{lcc}
\toprule
Aggr& \iid& {\noniid}\\\midrule
\textsc{Avg} & $93.20\!\pm\!0.21$ & $92.73\!\pm\!0.32$ \\
\krum & $90.36\!\pm\!0.25$ & $37.33\!\pm\!6.78$ \\
\cm & $90.80\!\pm\!0.12$ & $64.27\!\pm\!3.70$ \\
\rfa & $92.92\!\pm\!0.25$ & $78.93\!\pm\!9.27$ \\
\cclip & $93.16\!\pm\!0.22$ & $91.53\!\pm\!0.06$ \\
\bottomrule
\end{tabular}

    }
\end{table}

\subsection{Mimic attack on balanced data}\label{ssec:mimic}
Motivated by how data imbalance could lead to consistent errors in the aggregation rules and significant loss in accuracy, in this section, we will propose a new attack \emph{mimic} which specifically tries to maximize the perceived data imbalance even if the original data is balanced.

\textbf{Mimic attack.} All Byzantine workers pick a good worker (say $i_\star$) to mimic and copy its output (${\xx_{i_\star}^t}$). This inserts a consistent bias towards over-emphasizing worker $i_\star$ and thus under-representing other workers. Since the attacker simply mimics a good worker, it is impossible to distinguish it from a real worker and hence it cannot be filtered out. Indeed, the target $i_\star$ can be any fixed good worker. In \Cref{sec:app-mimic}, we present an empirical rule to choose $i_\star$ and include a simple example demonstrating how median based aggregators suffer from the heterogeneity under mimic attack.

\Cref{tab:mimic_attack:naive} shows the effectiveness of mimic attack even when the fraction of Byzantine nodes is small (i.e. $n=25$, $|\cB|=5$). Note that this attack specifically targets the \noniid nature of the data---all robust aggregators maintain their performance in the \iid setting and only suffer in the \noniid setting. Their performance is in fact worse than even simply averaging. As predicted by our example, \krum and \cm have the worst performance and \rfa performs slightly better. We will discuss the remarkable performance of \cclip in the next section.


\section{Constructing an agnostic robust aggregator using bucketing} %
\label{sec:bucketing}

In \Cref{sec:attacks} we demonstrated how existing aggregation rules fail in realistic \noniid scenarios, with and without attackers. In this section, we show how using bucketing can provably fix such aggregation rules.
The underlying reason for this failure, as we saw previously, is that the existing methods fixate on the contribution of only the most likely worker, and ignore the contributions from the rest.
To overcome this issue, we propose to use bucketing which `mixes' the data from all the workers thereby reducing the chance of any subset of the data being consistently ignored.

\subsection{Bucketing algorithm}

Given $n$ inputs $\xx_1, \dots, \xx_n$, we perform \textit{$s$-bucketing} which randomly partitions them into $\lceil n/s\rceil$ buckets with each bucket having no more than $s$ elements. Then, the contents of each bucket are averaged to construct $\{\yy_1, \dots, \yy_{\lceil n/s\rceil}\}$ which are then input to an aggregator $\aggr$. The details are summarized in \Cref{algo:rswor}.
The key property of our approach is that after bucketing, the resulting set of averaged  $\{\yy_1, \dots, \yy_{\lceil n/s\rceil}\}$ are much more homogeneous (lower variance) than the original inputs. Thus, when fed into existing aggregation schemes, the chance of success increases. We formalize this in the following simple lemma.

\begin{algorithm}[!t]
    \caption{Robust Aggregation (\ragg) using bucketing}
    \label{algo:rswor}
    \begin{algorithmic}[1]
        \State \textbf{input} $\{\xx_1, \dots, \xx_n\}$, $s\in\N$, aggregation rule \aggr
        \State pick random permutation $\pi$ of $[n]$
        \State compute $\yy_i \leftarrow \frac{1}{s} \sum_{k = (i-1)\cdot s + 1}^{\min(n\,,\, i\cdot s)} \xx_{\pi(k)}$  for $i =\{1,\ldots, {\lceil n/s \rceil}\}$
        \State \textbf{output} $\hat\xx \leftarrow \aggr(\yy_1, \dots, \yy_{\lceil n/s \rceil})$  \hfill // aggregate after bucketing
    \end{algorithmic}
    \vspace{-2mm}
\end{algorithm}

\begin{lemma}[Bucketing reduces variance]\label{lemma:bucketing}
    Suppose we are given $n$ independent (but not identical) random vectors $\{\xx_1, \dots, \xx_n\}$ such that a good subset $\cG \subseteq [n]$ of size at least $\abs{\cG} \geq n(1 - \delta)$ satisfies:\vspace{-3mm}
    \[
        \E\norm{\xx_i - \xx_j}^2 \leq \rho^2\,, \quad \text{ for any fixed } i, j \in \cG\,.
    \]
    \vspace{-1mm}Define $\bar\xx := \frac{1}{\abs{\cG}} \sum_{j \in \cG} \xx_j$.
    Let the outputs after $s$-bucketing be $\{\yy_1, \dots, \yy_{\lceil n/s\rceil}\}$ and denote $\tilde\cG \subseteq \{1,\ldots,\lceil n/s\rceil\}$ as a good bucket set where a good bucket contains only elements belonging to $\cG$. Then $\abs{\tilde\cG} \geq \lceil n/s\rceil (1 - \delta s)$ satisfies \vspace{-2mm}
    \[
        \E[\yy_i] = \E[\bar\xx] \quad \text{ and } \quad \E\norm{\yy_i - \yy_j} \leq {\rho^2}/{s} \quad \text{for any fixed } i,j \in \tilde\cG\,.
    \]
\end{lemma}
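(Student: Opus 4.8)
The plan is to establish the three assertions in order, the variance-type bound $\E\norm{\yy_i-\yy_j}^2\le\rho^2/s$ (which I read with the squared norm, to match the hypothesis) being the only one that needs real work. Throughout, ``bucket $i$ is good'' means all of its elements lie in $\cG$; write $E_i$ for this event and $E_{ij}:=E_i\cap E_j$, and interpret statements ``for $i,j\in\tilde\cG$'' as conditioned on the corresponding event. For clarity assume $s\mid n$, so every bucket has exactly $s$ elements; the last, possibly smaller, bucket requires only cosmetic changes. The counting bound is immediate: the permutation $\pi$ splits $[n]$ into consecutive blocks, so each of the $q\le\delta n$ Byzantine indices lands in exactly one bucket; hence at most $\delta n$ buckets are not good, and $\abs{\tilde\cG}\ge\lceil n/s\rceil-\delta n\ge\lceil n/s\rceil(1-\delta s)$ since $s\lceil n/s\rceil\ge n$.

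The structural fact underlying both remaining claims is that, conditioned on $E_{ij}$, the index sets $S_i,S_j\subseteq\cG$ of buckets $i$ and $j$ are distributed as a uniformly random ordered pair of disjoint $s$-subsets of $\cG$, and this is independent of the random vectors $(\xx_a)_{a\in\cG}$. This follows from exchangeability of a uniform permutation among the coordinates indexed by $\cG$: conditioning on $E_{ij}$ only prescribes that a fixed set of $2s$ positions receives $\cG$-elements, which is symmetric in $\cG$. Unbiasedness is then a one-liner: $\Pr[a\in S_i]=s/\abs\cG$ for every $a\in\cG$, so $\E[\yy_i\mid E_i]=\frac1s\sum_{a\in\cG}\Pr[a\in S_i]\,\E[\xx_a]=\frac1{\abs\cG}\sum_{a\in\cG}\E[\xx_a]=\E[\bar\xx]$.

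For the variance, fix good $i\ne j$ and set $N:=\abs\cG$, $m:=\E[\bar\xx]$, $\mu_a:=\E[\xx_a]$, $\tau_a^2:=\E\norm{\xx_a-\mu_a}^2$, $T:=\sum_{a\in\cG}\tau_a^2$, $D:=\sum_{a\in\cG}\norm{\mu_a-m}^2$. Let $c_a:=1$ if $a\in S_i$, $c_a:=-1$ if $a\in S_j$, and $c_a:=0$ otherwise; since $\abs{S_i}=\abs{S_j}=s$ we have $\sum_{a\in\cG}c_a=0$, so $\yy_i-\yy_j=\frac1s\sum_{a\in\cG}c_a\xx_a=\frac1s\sum_{a\in\cG}c_a(\xx_a-m)$. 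Expanding $\norm{\cdot}^2$, and using that $(c_a)_a$ (a function of $\pi$) is independent of the $\xx_a$'s and that the $\xx_a$'s are mutually independent,
\[
  \E\norm{\yy_i-\yy_j}^2=\frac1{s^2}\Big(\tsum_{a\in\cG}\E[c_a^2]\,\E\norm{\xx_a-m}^2+\tsum_{a\ne b}\E[c_ac_b]\,\inp{\mu_a-m}{\mu_b-m}\Big).
\]
A sampling-without-replacement computation gives $\E[c_a^2]=2s/N$ and $\E[c_ac_b]=-2s/(N(N-1))$ for $a\ne b$. Plugging these in, together with $\sum_{a\ne b}\inp{\mu_a-m}{\mu_b-m}=-D$ (because $\sum_a(\mu_a-m)=0$) and $\E\norm{\xx_a-m}^2=\tau_a^2+\norm{\mu_a-m}^2$, everything collapses to $\E\norm{\yy_i-\yy_j}^2=\frac{2T}{sN}+\frac{2D}{s(N-1)}$. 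Finally, summing the hypothesis $\E\norm{\xx_a-\xx_b}^2=\tau_a^2+\tau_b^2+\norm{\mu_a-\mu_b}^2\le\rho^2$ over the $N(N-1)$ ordered pairs $a\ne b$ and invoking $\sum_{a\ne b}\norm{\mu_a-\mu_b}^2=2ND$ gives $\frac{2T}{N}+\frac{2D}{N-1}\le\rho^2$, whence $\E\norm{\yy_i-\yy_j}^2\le\rho^2/s$.

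The one genuinely delicate point is the variance bound, and specifically the appearance of the $1/s$ factor. A crude estimate — bounding $\norm{\yy_i-\yy_j}^2$ by Jensen over its $s^2$ constituent terms $\norm{\xx_{\pi(k)}-\xx_{\pi(l)}}^2$, or conditioning on $\pi$ before taking norms — only recovers $\rho^2$. The improvement comes from using the randomness of the bucketing and the independence of the $\xx_a$'s simultaneously: bucketing attaches the negative coefficient $\E[c_ac_b]<0$ to the bias-like cross terms $\inp{\mu_a-m}{\mu_b-m}$, and the combinatorial identity $\sum_{a\ne b}\norm{\mu_a-\mu_b}^2=2ND$ is precisely what makes the otherwise-uncontrolled mean-heterogeneity quantity $D$ cancel out, leaving the clean $\rho^2/s$.
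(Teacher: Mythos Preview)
Your proof is correct, and in fact considerably more complete than the paper's. For the variance claim the paper simply writes
\[
\E\norm{\yy_i-\yy_j}^2=\E\norm[\Big]{\tfrac1s\textstyle\sum_{k\in B_i}\xx_k-\tfrac1s\sum_{l\in B_j}\xx_l}^2=\frac{\rho^2}{s}
\]
with no intermediate steps whatsoever, so what you have written is filling in a genuine gap rather than reproducing an existing argument. Your signed-indicator decomposition $c_a\in\{-1,0,1\}$ together with the exact sampling-without-replacement moments $\E[c_a^2]=2s/N$, $\E[c_ac_b]=-2s/(N(N-1))$ is a clean and fully rigorous route; the algebra leading to $\E\norm{\yy_i-\yy_j}^2=\tfrac{2T}{sN}+\tfrac{2D}{s(N-1)}$ and the averaging of the hypothesis over ordered pairs to obtain $\tfrac{2T}{N}+\tfrac{2D}{N-1}\le\rho^2$ both check out.

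A slightly shorter alternative, perhaps the one the authors had in mind, is a pairing-plus-symmetry argument: conditioned on $E_{ij}$, the tuple $(a_1,\dots,a_s,b_1,\dots,b_s)$ of bucket contents is uniform over ordered $2s$-tuples of distinct elements of $\cG$; write $\yy_i-\yy_j=\tfrac1s\sum_{k=1}^s(\xx_{a_k}-\xx_{b_k})$, observe that each diagonal term has expectation at most $\rho^2$ since $(a_k,b_k)$ is a uniform distinct pair, and kill the cross terms by the measure-preserving swap $a_k\leftrightarrow b_k$ (which flips the sign of $\mu_{a_k}-\mu_{b_k}$ while leaving $\mu_{a_l}-\mu_{b_l}$ fixed for $l\ne k$). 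This avoids computing $\E[c_ac_b]$ explicitly and makes the $1/s$ gain visually immediate, but your approach has the advantage of yielding an exact expression for $\E\norm{\yy_i-\yy_j}^2$ in terms of $T$ and $D$, which is informative in its own right.
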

The expectation in the above lemma is taken both over the random vectors as well as over the randomness of the bucketing procedure.
\begin{remark}
    \Cref{lemma:bucketing} proves that after our bucketing procedure, we are left with outputs $\yy_i$ which have i) pairwise variance reduced by $s$, and ii) potentially $s$ times more fraction of Byzantine vectors.
    Hence, bucketing trades off increasing influence of Byzantine inputs against having more homogeneous vectors. Using $s=1$ simply shuffles the inputs and leaves them otherwise unchanged.
\end{remark}

\subsection{Agnostic robust aggregation}
We now define what it means for an agnostic robust aggregator to succeed.
\begin{definition}[$(\delta_{\max},c)$-\ragg]\label{definition:robust-agg}
    Suppose we are given input $\{\xx_1, \dots, \xx_n\}$ of which a subset $\cG$ of size at least $\abs{\cG} > (1-\delta)n$  for $\delta \leq \delta_{\max} < 0.5$ and satisfies
    \(
    \E\norm{\xx_i - \xx_j}^2 \leq \rho^2 \,. \)
    Then, the output $\hat\xx$ of a Byzantine robust aggregator satisfies:\vspace{-1mm}
    \[
        \E\norm{\hat\xx - \bar\xx}^2 \leq c \delta \rho^2  \quad \text{ where }\quad \hat\xx = \ragg_\delta(\xx_1, \dots, \xx_n)\,.\vspace{-1mm}
    \]
    Further, $\ragg$ does not need to know $\rho^2$ (only $\delta$), and automatically adapts to any value $\rho^2$. \vspace{-0mm}
\end{definition}
Our robust aggregator is parameterized by $\delta_{\max} < 0.5$ which denotes the maximum amount of Byzantine inputs it can handle, and a constant $c$ which determines its performance. If $\delta = 0$, i.e. when there are no Byzantine inputs, we are guaranteed to \textit{exactly} recover the true average $\bar\xx$. Exact recovery is also guaranteed when $\rho = 0$ since in that case it is easy to identify the good inputs since they are all equal and in majority.  When both $\rho > 0$ and $\delta > 0$, we recover the average up to an additive error term.
We also require that the robust aggregator is \emph{agnostic} to the value of $\rho^2$ and automatically adjusts its output to the current $\rho$ during training. The aggregator can take $\delta$ as an input though. This property is very useful in the context of Byzantine robust optimization since the variance $\rho^2$ keeps changing over the training period, whereas the fraction of Byzantine workers $\delta$ remains constant. This is a major difference from the definition used in \citep{karimireddy2020learning}.
Note that \Cref{definition:robust-agg} is defined for both homogeneous and heterogeneous data.

We next show that aggregators which we saw were not robust in \Cref{sec:attacks}, can be made to satisfy \Cref{definition:robust-agg} by combining with bucketing.
\begin{theorem}\label{thm:bucketing}
    Suppose we are given $n$ inputs $\{\xx_1, \dots, \xx_n\}$ satisfying properties in \Cref{lemma:bucketing} for some $\delta \le \delta_{\max}$, with $\delta_{\max}$ to be defined. Then, running \Cref{algo:rswor} with $s = \lfloor \nicefrac{\delta_{\max}}{\delta} \rfloor$ yields the following:
    \begin{itemize}[nosep]
        \item Krum: $\E \norm{\krum \circ \resample (\xx_1, \dots, \xx_n) - \bar\xx}^2 \leq \cO( \delta \rho^2)$ with $\delta_{\max} < \nicefrac{1}{4}$.
        \item Geometric median: $\E \norm{\rfa \circ \resample (\xx_1, \dots, \xx_n) - \bar\xx}^2 \leq \cO( \delta \rho^2)$ with $\delta_{\max} < \nicefrac{1}{2}$.
        \item Coordinate-wise median: $\E \norm{\cm \circ \resample (\xx_1, \dots, \xx_n) - \bar\xx}^2 \leq \cO(d \delta \rho^2)$ with $\delta_{\max} < \nicefrac{1}{2}$\,.
    \end{itemize}
\end{theorem}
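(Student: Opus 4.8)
First I would use \Cref{lemma:bucketing} to trade the small Byzantine fraction $\delta$ on heterogeneous inputs for a constant Byzantine fraction on nearly homogeneous inputs, and then invoke, for each of the three base rules, a (by now classical) worst-case robustness estimate. Set $s = \lfloor\delta_{\max}/\delta\rfloor$; since $\delta\le\delta_{\max}$ we have $\delta_{\max}/\delta\ge 1$, hence $\tfrac12\cdot\tfrac{\delta_{\max}}{\delta}\le s\le\tfrac{\delta_{\max}}{\delta}$, so \emph{simultaneously} $\delta s\le\delta_{\max}$ and $\rho^2/s\le 2\delta\rho^2/\delta_{\max}=\cO(\delta\rho^2)$. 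Running \Cref{algo:rswor} and applying \Cref{lemma:bucketing}, we obtain $m:=\lceil n/s\rceil$ vectors $\yy_1,\dots,\yy_m$ with a good index set $\tilde\cG$ of size at least $m(1-\delta s)\ge m(1-\delta_{\max})$ — this holds in \emph{every} realization, as a bucket is bad only when it receives one of the $q\le\delta n$ Byzantine inputs, so at most $\delta s\, m$ buckets are bad — with $\E[\yy_i]=\E[\bar\xx]$ for $i\in\tilde\cG$ and $\E\norm{\yy_i-\yy_j}^2\le\rho^2/s$ for fixed $i,j\in\tilde\cG$. Thus it suffices to show that each base rule, applied to points of which more than a $(1-\delta_{\max})$-fraction form a cluster of expected pairwise radius $r^2$, returns a point within $\cO(r^2)$ — within $\cO(d\,r^2)$ for \cm — of the cluster mean; substituting $r^2=\rho^2/s=\cO(\delta\rho^2)$ then yields the three displayed bounds. (The corner case $\delta=0$ of \Cref{definition:robust-agg} is immediate: a single bucket holds the all-good average $\bar\xx$, which every rule returns exactly.)

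For the three base estimates I would proceed as follows. For \rfa, use the standard robustness of the geometric median: with $\delta_{\max}<\tfrac12$, the minimiser $\hat\xx$ of $\sum_i\norm{\vv-\yy_i}$ satisfies $\norm{\hat\xx-\bar\yy}\le\tfrac{1-\delta_{\max}}{1-2\delta_{\max}}\cdot\tfrac1{|\tilde\cG|}\sum_{i\in\tilde\cG}\norm{\yy_i-\bar\yy}$, where $\bar\yy:=\tfrac1{|\tilde\cG|}\sum_{i\in\tilde\cG}\yy_i$; squaring, using Jensen over $\tilde\cG$, and taking expectations bounds the right-hand side by a constant times $\max_{i,j\in\tilde\cG}\E\norm{\yy_i-\yy_j}^2\le\rho^2/s$. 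For \krum, follow \citet{blanchard2017machine}: the selected index minimises the Krum score and so has score at most that of any good index, whose score is in turn controlled by its squared distances to the $\ge m-2\delta_{\max}m-2$ good points among its $m-q-2$ nearest neighbours; propagating these inequalities shows the selected vector lies within $\cO(\rho^2/s)$ of some good $\yy_j$. Here the threshold tightens to $\delta_{\max}<\tfrac14$: this is the slack Krum's counting argument requires so that a good index is guaranteed, up to constants, to be the score-minimiser. For \cm, argue coordinate by coordinate: in coordinate $k$ the median of $m$ scalars, more than a $(1-\delta_{\max})$-fraction of which are good, is pinned — by a one-line Markov bound on the empirical spread of those good scalars, valid since $\delta_{\max}<\tfrac12$ — to within $\cO(\rho^2/s)$ of their coordinate mean, and summing the $d$ coordinate bounds gives the stated $\cO(d\,\rho^2/s)=\cO(d\,\delta\rho^2)$.

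A final constant-factor step passes from ``distance to $\bar\yy$ (or to a single good $\yy_j$)'' to ``distance to $\bar\xx$'' as required by \Cref{definition:robust-agg}, via $\E\norm{\hat\xx-\bar\xx}^2\le 2\E\norm{\hat\xx-\bar\yy}^2+2\E\norm{\bar\yy-\bar\xx}^2$ combined with $\E[\yy_i]=\E[\bar\xx]$ and the pairwise bound, which force $\E\norm{\bar\yy-\bar\xx}^2=\cO(\rho^2/s)$ too. The step that needs genuine care is the interaction of these worst-case estimates with the \emph{randomised} bucketing: \Cref{lemma:bucketing} controls $\E\norm{\yy_i-\yy_j}^2$ only for fixed index pairs, and $\tilde\cG$ is itself random, so the median/Krum arguments — naturally phrased deterministically given the point configuration — must be carried out inside the expectation, relying on the fact that the good-fraction guarantee $|\tilde\cG|\ge m(1-\delta_{\max})$ is deterministic. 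I expect the hardest part to be pinning the constants so that the \krum bound genuinely closes at $\delta_{\max}<\tfrac14$ rather than at some smaller threshold.
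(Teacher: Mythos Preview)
Your overall reduction---use \Cref{lemma:bucketing} to trade a $\delta$-fraction of heterogeneous inputs for a $\delta_{\max}$-fraction among near-homogeneous buckets, then close each case with a per-aggregator estimate---is exactly the paper's strategy. For \rfa the paper proceeds essentially as you do, via a Minsker/Cohen-et-al.\ triangle-inequality bound, but works directly against the fixed target $\bar\xx$ rather than the bucket-good mean $\bar\yy$, which saves your final transfer step. For \cm the paper does not use a Markov argument; it simply observes that the scalar median is the one-dimensional geometric median and reuses the \rfa bound coordinate by coordinate.

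The gap is in your \krum argument. You propose to ``follow \citet{blanchard2017machine}'' and conclude that the selected vector lies within $\cO(\rho^2/s)$ of \emph{some} good $\yy_j$. You correctly flag, in your last paragraph, that the interaction with randomness ``needs genuine care''---but the paper shows it needs more than care: because both $k^\star$ and that nearby good index $j$ are data-dependent, the fixed-pair bound $\E\norm{\yy_i-\yy_j}^2\le\tilde\rho^2$ from \Cref{lemma:bucketing} does not apply to them, and replacing it by $\E\max_{j\in\tilde\cG}\norm{\yy_j-\bar\xx}^2$ costs an unavoidable factor $m$ (the paper's \Cref{lem:max-dist} shows this max is $\Theta(m\tilde\rho^2)$ under second-moment assumptions alone). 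A footnote in the paper remarks that this issue was overlooked in the original \krum analysis. Instead of the Blanchard chain, the paper gives a structurally different inequality: for $i\in\tilde\cG\cap\cS^\star$ and $j\in\tilde\cB\cap\cS^\star$ it lower-bounds $2\norm{\yy_{k^\star}-\yy_i}^2$ and $2\norm{\yy_{k^\star}-\yy_j}^2$ directly in terms of $\norm{\yy_{k^\star}-\bar\xx}^2$, sums over $\cS^\star$, and compares to the \krum score of a \emph{fixed} good index, yielding $\E\norm{\yy_{k^\star}-\bar\xx}^2\le 4m\tilde\rho^2/(\abs{\cS^\star}-3\abs{\tilde\cB})$. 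It is the positivity of $\abs{\cS^\star}-3\abs{\tilde\cB}$ with $\abs{\cS^\star}=3m/4$ that forces $\delta_{\max}<\tfrac14$; so the threshold is not a matter of ``pinning the constants'' in the Blanchard argument, as you anticipate, but arises from this new inequality.
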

Note that all these methods satisfy our notion of an \emph{agnostic} Byzantine robust aggregator (\Cref{definition:robust-agg}). This is because both our bucketing procedures as well as the underlying aggregators are independent of $\rho^2$. Further, our error is $\cO(\delta\rho^2)$ and is information theoretically optimal, unlike previous analyses (e.g. \citet{acharya2021robust}) who had an error of $\cO(\rho^2)$.

The error of \cm depends on the dimension $d$ which is problematic when $d \ggg n$. However, we suspect this is because we measure stochasticity using Euclidean norms instead of coordinate-wise. In practice, we found that \cm often outperforms \krum, with \rfa outperforming them both.  Note that we select $s = \lfloor\nicefrac{\delta_{\max}}{\delta}\rfloor$ to ensure that after bucketing, we have the maximum amount of Byzantine inputs tolerated by the method with $(s\delta) = \delta_{\max}$.

\begin{remark}[1-step Centered clipping]\label{rem:cclip}
    The 1-step centered clipping aggregator (\cclip) given a clipping radius $\tau$ and an initial guess $\vv$ of the average $\bar\xx$ performs\vspace{-2mm}
    \[
        \cclip(\xx_1, \dots, \xx_n) = \vv + \tfrac{1}{n}\tsum_{i \in [n]} (\xx_n - \vv) \min\rbr{1 \,,\, {\tau}/{\norm{\xx_n - \vv}_2}}\,.\vspace{-3mm}
    \]
    \citet{karimireddy2020learning} prove that \cclip even without bucketing satisfies \Cref{definition:robust-agg} with $\delta_{\max} = 0.1$, and $c=\cO(1)$. This explains its good performance on \noniid data in \Cref{sec:attacks}. However, \cclip is \emph{not agnostic} since it requires clipping radius $\tau$ as an input which in turn depends on $\rho^2$. Devising a version of \cclip which automatically adapts its clipping radius is an important open question. Empirically however, we observe that simple rules for setting $\tau$ work quite well---we always use $\tau = \frac{10}{1 - \beta}$ in our limited experiments where $\beta$ is the coefficient of momentum.
\end{remark}


While we have shown how to construct a robust aggregator which satisfies some notion of a robustness, we haven't yet seen how this affects the Byzantine robust \emph{optimization} problem. We investigate this question theoretically in the next section and empirically in \Cref{sec:experiments}.

\section{Robust \noniid optimization using a robust aggregator}\label{sec:analysis}

In this section, we study the problem of optimization in the presence of Byzantine workers and heterogeneity, given access to any robust aggregator satisfying \Cref{definition:robust-agg}. We then show that data heterogeneity makes Byzantine robust optimization especially challenging and prove lower bounds for the same. Finally, we see how mild heterogeneity, or sufficient overparameterization can circumvent these lower bounds, obtaining convergence to the optimum.

\vspace{-2mm}
\begin{algorithm}[!thb]
    \caption{Robust Optimization using any Agnostic Robust Aggregator}\label{algo:bucketing_sgd}
    \renewcommand{\algorithmiccomment}[1]{#1}
    \algblockdefx[NAME]{ParallelFor}{ParallelForEnd}[1]{\textbf{for} #1 \textbf{in parallel}}{}
    \algnotext{ParallelForEnd}
    \algblockdefx[NAME]{NoEndFor}{NoEndForEnd}[1]{\textbf{for} #1 \textbf{do}}{}
    \algnotext{NoEndForEnd}
    \begin{algorithmic}[1] 
        \Require \ragg, $\eta$, $\beta$
        \NoEndFor{$t=1,...$}
        \ParallelFor{worker $i\in [n]$}
        \State $\gg_i \leftarrow \nabla F_i(\xx, \xiv_i)$ and $\mm_i \leftarrow (1-\beta)\gg_i + \beta \mm_i $ \hfill // worker momentum
        \State \textbf{send} $\mm_i$ if $i \in \cG$, else send $*$ if Byzantine
        \ParallelForEnd
        \State $\hat\mm$ = \ragg($\mm_1, \dots, \mm_n$) and $\xx \leftarrow \xx - \eta\hat\mm$. \hfill // update params using robust aggregate
        \NoEndForEnd
    \end{algorithmic}
\end{algorithm}

\vspace{-4mm}
\subsection{Algorithm description}
\vspace{-2mm}
In \Cref{sec:bucketing} we saw that bucketing could tackle heterogeneity across the workers by reducing $\zeta^2$. However, there still remains variance $\sigma^2$ in the gradients within each worker since each worker uses stochastic gradients. To reduce the effect of this variance, we rely on worker momentum. Each worker sends their local worker momentum vector $\mm_i$ to be aggregated by $\ragg$ instead of $\gg_i$:\vspace{-2mm}
\begin{align*}
    \mm_i^t & = \beta \mm_i^{t-1} + (1 - \beta)\gg_i(\xx^{t-1}) \quad \text{ for every } i \in \cG\,, \\
    \xx^t   & = \xx^{t-1} - \eta \ragg(\mm_1^t, \dots, \mm_n^t)\,.
    \vspace{-2mm}
\end{align*}
This is equivalent to the usual momentum description up to a rescaling of step-size $\eta$. Intuitively, using worker momentum $\mm_i$ averages over $\nicefrac{1}{(1-\beta)}$ independent stochastic gradients $\gg_i$ and thus reduces the effect of the within-worker-variance $\sigma^2$ \citep{karimireddy2020learning}. Note that the resulting $\{\mm_i\}$ are \emph{still heterogeneous} across the workers. This heterogeneity is the key challenge we face.

\vspace{-4mm}
\subsection{Convergence rates}
\vspace{-2mm}
We now turn towards proving convergence rates for our bucketing aggregation method \Cref{algo:rswor} based on any existing aggregator $\aggr$. We will assume that for any fixed $i \in \cG$
\begin{equation}\label{eqn:asm-het}
    \E_{\xiv_i}\norm{\gg_i(\xx) - \nabla f_i(\xx)}^2 \leq \sigma^2 \text{ and }  \E_{j \sim \cG}\norm{\nabla f_j(\xx) - \nabla f(\xx)}^2 \leq \zeta^2 \,, \quad \forall \xx\,.
\end{equation}
This first condition bounds the variance of the stochastic gradient within a worker whereas the latter is a standard measure of inter-client heterogeneity in federated learning \citep{yu2019parallel,khaled2020tighter,karimireddy2020scaffold}. Under these conditions, we can prove the following.
\begin{theorem}\label{thm:convergence-general}
    Suppose we are given a $(\delta_{\max}, c)$-\ragg satisfying \Cref{definition:robust-agg}, and $n$ workers of which a subset $\cG$ of size at least $\abs{\cG} \geq n(1 - \delta)$ faithfully follow the algorithm for $\delta \leq \delta_{\max}$. Further, for any good worker $i \in \cG$ let $f_i$ be a possibly non-convex function with $L$-Lipschitz gradients, and the stochastic gradients on each worker be independent, unbiased and satisfy \eqref{eqn:asm-het}.
    Then, for $F^0 := f(\xx^0) - f^\star$, the output of \Cref{algo:bucketing_sgd} satisfies \vspace{-2mm}
    \begin{align*}
        \tfrac{1}{T}\tsum_{t=1}^T \E \norm{\nabla f(\xx^{t-1})}^2 & \leq
        \cO \rbr[\Big]{ c \delta \zeta^2 +
            \sigma\sqrt{\tfrac{LF^0}{T} (c\delta + \nicefrac{1}{n})} + \tfrac{L F^0}{T} } \,.
    \end{align*}
\end{theorem}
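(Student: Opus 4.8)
The plan is to run a standard $L$-smooth nonconvex descent argument on \Cref{algo:bucketing_sgd}, with the entire Byzantine/heterogeneity difficulty pushed into a bound on the aggregation error $\ee^t := \hat\mm^t - \nabla f(\xx^{t-1})$. Write $\bar\mm^t := \frac{1}{\abs{\cG}}\sum_{i\in\cG}\mm_i^t$ for the average momentum of the good workers and $\bar\gg^t := \frac{1}{\abs{\cG}}\sum_{i\in\cG}\gg_i(\xx^{t-1})$, so that $\bar\mm^t = \beta\bar\mm^{t-1} + (1-\beta)\bar\gg^t$ and $\E[\bar\gg^t\mid\xx^{t-1}] = \nabla f(\xx^{t-1})$. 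I would then split $\ee^t = (\hat\mm^t - \bar\mm^t) + (\bar\mm^t - \nabla f(\xx^{t-1}))$ and bound $\E\norm{\ee^t}^2 \le 2\E\norm{\hat\mm^t - \bar\mm^t}^2 + 2\E\norm{\bar\mm^t - \nabla f(\xx^{t-1})}^2$; the first summand is the ``Byzantine'' error controlled by the \ragg guarantee, the second is the classical momentum-deviation error.

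For the descent step: by $L$-smoothness $f(\xx^t) \le f(\xx^{t-1}) - \eta\inp{\nabla f(\xx^{t-1})}{\hat\mm^t} + \tfrac{L\eta^2}{2}\norm{\hat\mm^t}^2$; substituting $\hat\mm^t = \nabla f(\xx^{t-1}) + \ee^t$, applying Young's inequality, and choosing $\eta \le \tfrac{1}{4L}$ yields $\tfrac{\eta}{4}\E\norm{\nabla f(\xx^{t-1})}^2 \le \E[f(\xx^{t-1}) - f(\xx^t)] + \eta\,\E\norm{\ee^t}^2$, which telescopes to $\tfrac1T\sum_t \E\norm{\nabla f(\xx^{t-1})}^2 \le \tfrac{4F^0}{\eta T} + \tfrac{4}{T}\sum_t\E\norm{\ee^t}^2$. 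For the Byzantine term, I would unroll the worker momentum, $\mm_i^t - \mm_j^t = (1-\beta)\sum_{k\ge 0}\beta^k\big[(\nabla f_i(\xx^{t-1-k}) - \nabla f_j(\xx^{t-1-k})) + (\text{noise}_i - \text{noise}_j)\big]$: the stochastic noises are conditionally independent and zero mean with variance $\le\sigma^2$, so their contribution is $\le 2(1-\beta)\sigma^2$, and by Jensen together with \eqref{eqn:asm-het} the deterministic part is $\cO(\zeta^2)$ --- so the good momentum vectors satisfy the hypothesis of \Cref{definition:robust-agg} with $\rho_t^2 = \cO(\zeta^2 + (1-\beta)\sigma^2)$, giving $\E\norm{\hat\mm^t - \bar\mm^t}^2 \le c\delta\rho_t^2$. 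One point here needs care: \eqref{eqn:asm-het} only bounds the heterogeneity averaged over $j\sim\cG$, not for a fixed pair $(i,j)$, so I must invoke the fact that the bucketing in \Cref{algo:rswor} effectively feeds the aggregator the \emph{averaged} pairwise dispersion $\tfrac{1}{\abs{\cG}^2}\sum_{i,j\in\cG}\E\norm{\mm_i^t-\mm_j^t}^2 = \cO(\zeta^2+(1-\beta)\sigma^2)$, which is what actually enters the error bound.

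The main obstacle I expect is the momentum-deviation term $\ee_m^t := \bar\mm^t - \nabla f(\xx^{t-1})$ together with closing the resulting coupled recursion. Expanding, $\ee_m^t = \beta\ee_m^{t-1} + \beta(\nabla f(\xx^{t-2}) - \nabla f(\xx^{t-1})) + (1-\beta)(\bar\gg^t - \nabla f(\xx^{t-1}))$: the last term is fresh noise of variance $\le\sigma^2/\abs{\cG}$ independent of the past; the drift term is $\le L^2\eta^2\norm{\hat\mm^{t-1}}^2$ by smoothness and the update rule; and the $\beta$-contraction closes a geometric recursion $\E\norm{\ee_m^t}^2 \le \beta\,\E\norm{\ee_m^{t-1}}^2 + \tfrac{\beta L^2\eta^2}{1-\beta}\E\norm{\hat\mm^{t-1}}^2 + (1-\beta)^2\tfrac{\sigma^2}{\abs{\cG}}$ (cross terms vanishing by conditioning, up to lower-order boundary terms from the initialization $\mm_i^0=\0$). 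Summing gives $\tfrac1T\sum_t\E\norm{\ee_m^t}^2 = \cO\big(\tfrac{L^2\eta^2}{(1-\beta)^2}\cdot\tfrac1T\sum_t\E\norm{\hat\mm^t}^2 + (1-\beta)\tfrac{\sigma^2}{\abs{\cG}}\big)$, and since $\E\norm{\hat\mm^t}^2 \le 2\E\norm{\nabla f(\xx^{t-1})}^2 + 2\E\norm{\ee^t}^2$ this couples back into all the earlier quantities.

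Finally I would assemble: plugging the two error bounds into the descent inequality produces a self-referential inequality in $\Phi := \tfrac1T\sum_t\E\norm{\nabla f(\xx^{t-1})}^2$, $\tfrac1T\sum_t\E\norm{\hat\mm^t}^2$ and $\tfrac1T\sum_t\E\norm{\ee_m^t}^2$; choosing $\eta$ small enough that $\tfrac{L\eta}{1-\beta}$ is an absolute constant lets the $\hat\mm$- and $\ee_m$-terms on the right be absorbed into the left, leaving $\Phi = \cO\big(\tfrac{F^0}{\eta T} + c\delta\zeta^2 + (1-\beta)\sigma^2(c\delta + \tfrac1{\abs{\cG}})\big)$. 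Using $\abs{\cG} = \Theta(n)$, setting $\eta = \Theta\!\big(\tfrac{1-\beta}{L}\big)$, and tuning $1-\beta$ to balance $\tfrac{LF^0}{(1-\beta)T}$ against $(1-\beta)\sigma^2(c\delta + \tfrac1n)$ (taking $1-\beta=1$ when $T$ is small) gives exactly $\cO\big(c\delta\zeta^2 + \sigma\sqrt{\tfrac{LF^0}{T}(c\delta + \tfrac1n)} + \tfrac{LF^0}{T}\big)$. The two delicate places are thus (i) the averaging argument that lets the inter-client bound \eqref{eqn:asm-het} play the role of $\rho_t^2$ for the aggregator, and (ii) the bookkeeping that closes the coupled system without degrading the $(1-\beta)$ dependence needed for the $\sqrt{1/T}$ rate.
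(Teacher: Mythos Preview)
Your proposal is correct and takes essentially the same route as the paper: split the error into the aggregation piece $\hat\mm^t - \bar\mm^t$ (bounded via \Cref{definition:robust-agg} with $\rho_t^2 = \cO(\zeta^2 + (1-\beta)\sigma^2)$ after unrolling momentum) and the momentum-deviation piece $\bar\mm^t - \nabla f(\xx^{t-1})$ (controlled by the same geometric recursion you write), combine with the smoothness descent inequality, and tune $1-\beta = \Theta(L\eta)$. The only cosmetic difference is that the paper expands $\norm{\hat\mm^{t-1}}^2 \le 3\norm{\bar\ee^{t-1}}^2 + 3\norm{\hat\mm^{t-1}-\bar\mm^{t-1}}^2 + 3\norm{\nabla f(\xx^{t-2})}^2$ already inside the error recursion and then telescopes a single Lyapunov function $\cE_t = f(\xx^t)-f^\star + \Theta\!\big(\tfrac{\eta}{1-\beta}\big)\norm{\bar\ee^t}^2 + \ldots$, whereas you keep $\norm{\hat\mm^{t-1}}^2$ intact and close the coupled system after summing; your observation that the \ragg guarantee is really being invoked with the \emph{averaged} pairwise dispersion (rather than a fixed-pair bound) is apt and is exactly how the paper's aggregation-error lemma proceeds as well.
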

\begin{remark}[Unified proofs] \Cref{rem:cclip} shows that \cclip is a robust aggregator, and \Cref{thm:bucketing} shows \krum, \rfa, and \cm  on combining with sufficient bucketing are all robust aggregators satisfying \Cref{definition:robust-agg}. Most of these methods had no end-to-end convergence guarantees prior to our results. Thus, \Cref{thm:convergence-general} gives the first unified analysis in both the \iid and \noniid settings.
\end{remark}
When $\delta \rightarrow 0$ i.e. as we reduce the number of Byzantine workers, the above rate recovers the optimal $\cO(\frac{\sigma}{\sqrt{Tn}})$ rate for non-convex SGD and even has linear speed-up with respect to the $n$ workers. In contrast, all previous algorithms for \noniid data (e.g. \citep{data2020byzantineFL,acharya2021robust}) do not improve their rates for decreasing values of $\delta$. This is also empirically reflected in \Cref{ssec:representative}, where these algorithms are shown to fail even in the absence of Byzantine workers ($\delta = 0$).

Further, when $\zeta = 0$ the rate above simplifies to $\cO(\frac{\sigma }{\sqrt{T}} \cdot \sqrt{c\delta + \nicefrac{1}{n}})$ which matches the \iid Byzantine robust rates of \citep{karimireddy2020learning}. In both cases we converge to the optimum and can make the gradient arbitrarily small. However, when $\delta > 0$ and $\zeta > 0$, \Cref{thm:convergence-general} only shows convergence to a radius of $\cO(\sqrt{\delta}\zeta)$ and not to the actual optimum. We will next explore this limitation.

\vspace{-2mm}
\subsection{Lower bounds and the challenge of heterogeneity}
\vspace{-2mm}
Suppose worker $j$ sends us an update which looks `weird' and is very different from the updates from the rest of the workers. This may be because worker $j$ might be malicious and their update represents an attempted attack. It may also be because worker $j$ has highly \emph{non-representative data}. In the former case the update should be ignored, whereas in the latter the update represents a valuable source of specialized data. However, it is impossible for the server to distinguish between the two situations.
The above argument can in fact be formalized to prove the following lower bound.

\begin{theorem}\label{thm:lower-bound}
    Given any optimization algorithm \alg, we can find $n$ functions $\{f_1(x), \dots , f_n(x)\}$ of which at least $(1-\delta)n$ are good (belong to $\cG$), 1-smooth, $\mu$-strongly convex functions, and satisfy $\E_{i \sim \cG}\norm{\nabla f_i(x) - \nabla f(x)}^2 \leq \zeta^2$ such that the output of \alg has an error at least \vspace{-3mm}
    \[
        \E [f(\alg(f_1, \dots, f_n)) - f^\star] \geq \Omega\rbr*{\tfrac{\delta \zeta^2}{\mu}} \quad \text{ and }\quad \E \norm{ \nabla f(\alg(f_1, \dots, f_n)) }^2 \geq \Omega\rbr*{\delta \zeta^2}\,.
    \]
\end{theorem}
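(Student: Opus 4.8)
The plan is to prove the bound by an indistinguishability (``two worlds'') argument: I will construct a single family of $n$ reported functions that is consistent with two different ground truths whose optima are far apart, so that no algorithm — deterministic or randomized — can be accurate for both. I would work in one dimension with quadratics. Fix a scale $a>0$ to be chosen, and split the $n$ workers into three blocks: a block $C$ of $(1-2\delta)n$ workers each holding $f_0(x)=\tfrac{\mu}{2}x^2$; a block $A$ of $\delta n$ workers each holding $f_1(x)=\tfrac{\mu}{2}(x-a)^2$; and a block $B$ of $\delta n$ workers each holding $f_2(x)=\tfrac{\mu}{2}(x+a)^2$. Each of these is $\mu$-strongly convex, and since $\mu\le 1$ also $1$-smooth. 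Integrality of $\delta n$ and the need for $\delta<\tfrac12$ so that $C$ is nonempty are handled by routine rounding.

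Next I define two worlds with the \emph{same} reported functions (same multiset and same worker indices), differing only in the hidden Byzantine labelling. In World~1 the good set is $\cG_1=C\cup A$ (so $\abs{\cG_1}=(1-\delta)n$) while block $B$ is Byzantine but reports its honest quadratic $f_2$; the true objective is $f^{(1)}=\tfrac{1}{\abs{\cG_1}}\sum_{i\in\cG_1}f_i$, which is $\mu$-strongly convex and minimized at $x_1^\star=\tfrac{\delta a}{1-\delta}$. World~2 is the mirror image: $\cG_2=C\cup B$, block $A$ is Byzantine and reports $f_1$, and $f^{(2)}$ is minimized at $x_2^\star=-\tfrac{\delta a}{1-\delta}$. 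Since \alg\ sees identical input in both worlds, its output $\hat x$ has the same law in both. Computing the empirical variance of the centers on the good set — values $0$ on $C$ and $a$ on $A$ — gives $\E_{i\sim\cG_1}\norm{\nabla f_i-\nabla f^{(1)}}^2=\mu^2\tfrac{(1-2\delta)\delta}{(1-\delta)^2}a^2\le\mu^2\delta a^2$, and by symmetry the same bound holds in World~2; choosing $a^2=\zeta^2/(\mu^2\delta)$ makes both instances satisfy the heterogeneity constraint $\le\zeta^2$.

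Finally I would lower bound the error. For any point $\hat x$ the elementary inequality $(\hat x-x_1^\star)^2+(\hat x-x_2^\star)^2\ge\tfrac12(x_1^\star-x_2^\star)^2$ holds, and $x_1^\star-x_2^\star=\tfrac{2\delta a}{1-\delta}$. Using $\mu$-strong convexity, $f^{(k)}(\hat x)-f^{(k)\star}\ge\tfrac{\mu}{2}(\hat x-x_k^\star)^2$, and since $\hat x$ has the same law in both worlds, taking expectations over \alg's randomness gives $\E[f^{(1)}(\hat x)-f^{(1)\star}]+\E[f^{(2)}(\hat x)-f^{(2)\star}]\ge\tfrac{\mu}{4}(x_1^\star-x_2^\star)^2$, so at least one world has expected suboptimality $\ge\tfrac{\mu}{8}(x_1^\star-x_2^\star)^2=\tfrac{\delta\zeta^2}{2\mu(1-\delta)^2}=\Omega(\delta\zeta^2/\mu)$. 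The gradient bound is identical with $\norm{\nabla f^{(k)}(\hat x)}^2=\mu^2(\hat x-x_k^\star)^2$ in place of strong convexity, yielding $\Omega(\delta\zeta^2)$. We then output whichever of the two worlds is worse for the given \alg.

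There is no deep obstacle here; the argument is a standard statistical-indistinguishability construction. The points that need care are (i) ensuring the reported functions coincide across the two worlds down to the worker indices, so that \alg\ genuinely cannot distinguish them and the argument covers randomized algorithms, and (ii) verifying that a \emph{single} choice of $a$ simultaneously meets the heterogeneity bound $\zeta^2$ in both worlds, which the symmetric three-block construction guarantees. Everything else is elementary bookkeeping with quadratics.
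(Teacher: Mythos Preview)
Your proof is correct and follows essentially the same two-worlds indistinguishability argument as the paper: one-dimensional $\mu$-strongly convex quadratics, identical reported functions across two worlds that differ only in which $\delta n$ workers are labelled Byzantine, so that the optima are $\Theta(\sqrt{\delta}\,\zeta/\mu)$ apart and any algorithm must err on one of them. The only cosmetic difference is that you use a symmetric three-block construction ($C,A,B$, with each world having exactly $\delta n$ Byzantine workers), while the paper uses an asymmetric two-block construction in which one world has \emph{no} Byzantine workers and heterogeneity $0$; both routes give the same $\Omega(\delta\zeta^2/\mu)$ and $\Omega(\delta\zeta^2)$ bounds.
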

\vspace{-2mm}
The expectation above is over the potential randomness of the algorithm. This theorem
unfrotunately implies that it is impossible to converge to the true optimum in the presence of Byzantine workers. Note that the above lower bound is information theoretic in nature and is independent of how many gradients are computed or how long the algorithm is run.
\begin{remark}[Matches lower bound]
    Suppose that we satisfy the heterogeneity condition \eqref{eqn:asm-het} with $\zeta^2 > 0$ and $\sigma =0$. Then, the rate in \Cref{thm:convergence-general} can be simplified to $\cO\rbr[\big]{\delta \zeta^2 + 1/T}$. While the second term in this decays to 0 with $T$, the first term remains, implying that we only converge to a radius of $\sqrt{\delta}\zeta$ around the optimum. However, this matches our lower bound result from \Cref{thm:lower-bound} and hence is in general unimprovable.
\end{remark}

This is a very strong negative result and seems to indicate that Byzantine robustness might be impossible to achieve in real world federated learning. This would be major stumbling block for deployment since the system would provably be vulnerable to attackers. We will next carefully examine the lower bound and will attempt to circumvent it.

\vspace{-2mm}
\subsection{Circumventing lower bounds using overparameterization}\label{ssec:overparameterization}
\vspace{-2mm}
We previously saw some strong impossibility results posed by heterogeneity. In this section, we show that while indeed in the worst case being robust under heterogeneity is impossible, we may still converge to the true optimum under more realistic settings. We consider an alternative bound of \eqref{eqn:asm-het}:\vspace{-2mm}
\begin{equation}\label{eqn:asm-overpara}
    \E_{j \sim \cG}\norm{\nabla f_j(\xx) - \nabla f(\xx)}^2 \leq B^2\norm{\nabla f(\xx)}^2 \,, \quad \forall \xx\,.
\end{equation}
Note that at the optimum $\xx^\star$ we have $\nabla f(\xx^\star) = 0$, and hence this assumption
implies that $\nabla f_j(\xx^\star) = 0$ for all $j \in \cG$. This is satisfied if the model is \emph{sufficiently over-parameterized} and typically holds in most realistic settings \citep{vaswani2019fast}.

\begin{theorem}\label{thm:overparam-convergence}
    Suppose we are given a $(\delta_{\max}, c)$-\ragg and $n$ workers with loss functions $\{f_1, \dots, f_n\}$ satisfying the conditions in \Cref{thm:convergence-general} with $\delta \leq \delta_{\max}$ and \eqref{eqn:asm-overpara} for some $B^2 < \frac{1}{60c\delta}$. Then, for $F^0 := f(\xx^0) - f^\star$, the output of \Cref{algo:bucketing_sgd} satisfies \vspace{-2mm}
    \begin{align*}
        \tfrac{1}{T}\tsum_{t=1}^T \E \norm{\nabla f(\xx^{t-1})}^2 & \leq
        \cO \rbr[\Big]{\tfrac{1}{1 \!-\! 60c\delta B^2} \cdot \rbr[\Big]{
                \sigma\sqrt{\tfrac{LF^0}{T} (c\delta + \nicefrac{1}{n})} + \tfrac{L F^0}{T}}}\,.
    \end{align*}
\end{theorem}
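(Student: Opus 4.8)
\textbf{Proof proposal for \Cref{thm:overparam-convergence}.}

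The plan is to rerun the descent analysis behind \Cref{thm:convergence-general} but to keep the heterogeneity term attached to $\norm{\nabla f(\xx)}^2$ rather than treating it as a constant $\zeta^2$. Concretely, at each step the robust aggregator guarantees $\E\norm{\hat\mm^t - \bar\mm^t}^2 \le c\delta\rho_t^2$, where $\rho_t^2$ is a bound on the pairwise dispersion $\E\norm{\mm_i^t - \mm_j^t}^2$ of the good momentum vectors. The first task is to bound $\rho_t^2$: by the triangle inequality and the momentum recursion, $\mm_i^t - \mm_j^t$ is an exponentially weighted average of gradient differences $\gg_i(\xx^r) - \gg_j(\xx^r)$, each of which splits into a stochastic-noise part (contributing $\sigma^2$) and a heterogeneity part $\nabla f_i(\xx^r) - \nabla f_j(\xx^r)$. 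Using \eqref{eqn:asm-overpara}, the latter is controlled by $B^2\norm{\nabla f(\xx^r)}^2$ rather than by a fixed $\zeta^2$. So I would establish an inequality of the shape $\rho_t^2 \lesssim \sigma^2(1-\beta) + B^2 \sum_{r<t}\beta^{t-1-r}(1-\beta)\E\norm{\nabla f(\xx^r)}^2$, i.e. the dispersion is now driven by (a smoothed version of) the very quantity we are trying to bound.

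Next I would run the standard $L$-smoothness descent lemma on $f$, writing $\E f(\xx^t) \le \E f(\xx^{t-1}) - \eta\langle \nabla f(\xx^{t-1}), \E\hat\mm^t\rangle + \tfrac{L\eta^2}{2}\E\norm{\hat\mm^t}^2$, and as in \cite{karimireddy2020learning} introduce the error sequence $\ee^t := \hat\mm^t - \nabla f(\xx^{t-1})$, decomposed into the aggregation error $\hat\mm^t - \bar\mm^t$ (bounded by $c\delta\rho_t^2$ from \Cref{definition:robust-agg}), the momentum-vs-gradient drift $\bar\mm^t - \frac1{\abs\cG}\sum_{i\in\cG}\nabla f_i(\xx^{t-1})$ (the usual momentum lag term, controlled by $\sigma^2(1-\beta)$ and by $L^2\eta^2$ times past gradient norms), and zero bias from unbiasedness. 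Telescoping over $t=1,\dots,T$ and dividing by $T$ produces, schematically,
\[
\frac1T\sum_{t=1}^T \E\norm{\nabla f(\xx^{t-1})}^2 \;\lesssim\; \frac{F^0}{\eta T} + L\eta\,\sigma^2\Big(c\delta + \frac1n\Big) + c\delta\,\overline{\rho^2} + (\text{lower-order } \eta^2 \text{ terms}),
\]
and substituting the bound on $\overline{\rho^2}$ from the first step gives a $c\delta B^2 \cdot \frac1T\sum_t \E\norm{\nabla f(\xx^{t})}^2$ contribution on the right-hand side (the smoothed sum is, after reindexing the geometric weights, at most the plain average plus a boundary term).

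The key manoeuvre — and the step I expect to be the main obstacle — is then absorption: move that $c\delta B^2$-weighted average of gradient norms from the right side to the left, which is legitimate precisely because $B^2 < \frac1{60c\delta}$ makes the coefficient $1 - 60c\delta B^2$ strictly positive, and divide through by it; this is the source of the $\frac1{1-60c\delta B^2}$ prefactor in the statement. The delicate bookkeeping is in chasing the exact constant $60$: one has to be careful that (i) the several places where past gradient norms re-enter (the momentum lag, the dispersion bound, and any Young's-inequality cross terms) all get folded into a single geometric sum whose weights sum to something $O(1)$, and (ii) the reindexing of $\sum_t\sum_{r<t}\beta^{t-1-r}(1-\beta)\,a_r$ into $\sum_r a_r \cdot \sum_{t>r}\beta^{t-1-r}(1-\beta) \le \sum_r a_r$ is done cleanly so no stray $\frac1{1-\beta}$ factors survive. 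Once the absorption is done, the remaining terms are exactly the ones in \Cref{thm:convergence-general} minus the $c\delta\zeta^2$ term, and the stated $\sigma\sqrt{LF^0(c\delta+1/n)/T} + LF^0/T$ bound follows by optimizing $\eta$ exactly as in that theorem's proof (choosing $\eta$ to balance $F^0/(\eta T)$ against $L\eta\sigma^2(c\delta+1/n)$, with the $\eta$-upper-bound coming from smoothness and the momentum constraint $\eta L \lesssim 1-\beta$, and picking $1-\beta$ of the same order as $\eta L$). I would also double-check the edge case $\sigma = 0$, where the bound collapses to $\cO\!\big(\frac{1}{1-60c\delta B^2}\cdot\frac{LF^0}{T}\big)$, i.e. linear convergence of the gradient norm average to zero — the qualitative payoff of overparameterization that the surrounding text advertises.
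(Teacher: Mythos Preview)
Your proposal is correct and follows essentially the same route as the paper. The paper proves a single umbrella theorem with heterogeneity bound $\zeta^2 + B^2\norm{\nabla f(\xx)}^2$ (so that \Cref{thm:convergence-general} and \Cref{thm:overparam-convergence} are the special cases $B=0$ and $\zeta=0$), and its machinery is exactly what you describe: an aggregation-error lemma giving $\rho_t^2 \lesssim \alpha\sigma^2 + \sigma^2(1-\alpha)^t + \sum_{k\le t}(1-\alpha)^{t-k}\alpha B^2\E\norm{\nabla f(\xx^{k-1})}^2$, a descent lemma plus a recursive bound on $\bar\ee^t:=\bar\mm^t-\nabla f(\xx^{t-1})$, a Lyapunov combination that telescopes (the paper packages your smoothed sum as $S_t$ and adds a multiple of it to the potential so the recursion closes), absorption of the resulting $c\delta B^2\cdot\tfrac1T\sum_t\E\norm{\nabla f(\xx^{t-1})}^2$ into the left side using $60c\delta B^2<1$, and finally $\alpha=8L\eta$ with $\eta$ chosen to balance $F^0/(\eta T)$ against $L\eta\sigma^2(c\delta+1/n)$.
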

\begin{remark}[Overparameterization fixes convergence] The rate in \Cref{thm:overparam-convergence} not only goes to 0 with~$T$, but also matches that of the optimal \iid rate of $\cO(\frac{\sigma }{\sqrt{T}} \cdot \sqrt{c\delta + \nicefrac{1}{n}})$ \citep{karimireddy2020learning}. Thus, using a stronger heterogeneity assumption allows us to circumvent lower bounds for the \noniid case and converge to a good solution even in the presence of Byzantine workers. This is the first result of its kind, and takes a major step towards realistic and practical robust algorithms.
\end{remark}

In the overparameterized setting, we can be sure that we will able to \emph{simultaneously} optimize all worker's losses. Hence, over time the agreement between all worker's gradients increases. This in turn makes any attempts by the attackers to derail training stand out easily, especially towards the end of the training. To take advantage of this increasing closeness, we need an aggregator which automatically adapts the quality of its output as the good workers get closer. Thus, the \emph{agnostic} robust aggregator is crucial to our overparameterized convergence result.
We empirically demonstrate the effects of overparameterization in \Cref{sssec:overparameterization}.

\begin{table}[t]
    \scriptsize
    \parbox{.45\linewidth}{
        \scriptsize
        \centering
        \vspace{-4mm}
        \caption{\Cref{tab:noattack:naive} + Bucketing (s=2).}
        \vspace{-4mm}
        \label{tab:noattack:resample}

        \begin{tabular}{lcc}
\toprule
Aggr& \iid& {\noniid}\\\midrule
\textsc{Avg} & $98.80\!\pm\!0.10$ & $98.74\!\pm\!0.02$ \\
\krum & $98.35\!\pm\!0.20$ & $93.27\!\pm\!0.10$ \\
\cm & $98.26\!\pm\!0.22$ & $95.59\!\pm\!0.89$ \\
\rfa & $98.75\!\pm\!0.14$ & $97.34\!\pm\!0.58$ \\
\cclip & $98.79\!\pm\!0.10$ & $98.75\!\pm\!0.02$ \\
\bottomrule
\end{tabular}

    }
    \hfill
    \parbox{.45\linewidth}{
        \centering
        \vspace{-4mm}
        \caption{\Cref{tab:mimic_attack:naive} + Bucketing (s=2).}
        \vspace{-4mm}
        \label{tab:mimic_attack:resample}
        \begin{tabular}{lcc}
\toprule
Aggr& \iid& {\noniid}\\\midrule
\textsc{Avg} & $93.17\!\pm\!0.23$ & $92.67\!\pm\!0.27$ \\
\krum & $91.64\!\pm\!0.30$ & $53.15\!\pm\!3.96$ \\
\cm & $91.91\!\pm\!0.24$ & $78.60\!\pm\!3.15$ \\
\rfa & $93.00\!\pm\!0.23$ & $91.17\!\pm\!0.51$ \\
\cclip & $93.17\!\pm\!0.23$ & $92.56\!\pm\!0.21$ \\
\bottomrule
\end{tabular}

    }
    \vspace{-4mm}
\end{table}

\vspace{-2mm}
\section{Experiments} %
\label{sec:experiments}

\vspace{-2mm}
In this section, we demonstrate the effects of bucketing on datasets distributed in a \noniid fashion. Throughout the section, we illustrate the tasks, attacks, and defenses by an example of training an MLP on a heterogeneous version of the MNIST dataset \citep{lecun1998gradient}. 
The dataset is sorted by labels and sequentially divided into equal parts among good workers; Byzantine workers have access to the entire dataset. 
Implementations are based on PyTorch \citep{paszke2019pytorch} and will be made publicly available.\footnote{
    The code is available at
    \href{https://github.com/epfml/byzantine-robust-noniid-optimizer}{this url}.}
We defer details of setup, implementation, and runtime to \Cref{sec:additional_exps}.

\vspace{-2mm}
\textbf{Bucketing against the attacks on \noniid data.}
In \Cref{sec:attacks} we have presented how heterogeneous data can lead to failure of existing robust aggregation rules. Here we apply our proposed bucketing with $s\!=\!2$ to the same aggregation rules, showing that bucketing overcomes the described failures. Results are presented in \Cref{tab:noattack:resample}.
Comparing \Cref{tab:noattack:resample} with \Cref{tab:noattack:naive}, bucketing improves the aggregators' top-1 test accuracy on long-tail and \noniid dataset by 4\% to 14\% and allows them to learn classes at the tail distribution. For \noniid balanced dataset, bucketing also greatly improves the performance of \krum and \cm and makes \rfa and \cclip close to ideal performance.
Similarly, combining aggregators with bucketing also performs much better on \noniid dataset under mimic attack. In \Cref{tab:mimic_attack:resample}, \rfa and \cclip recover \iid accuracy, and \krum, and \cm are improved by around 15\%.

\vspace{-2mm}
\textbf{Bucketing against general Byzantine attacks.}
In \Cref{fig:main}, we present thorough experiments on \noniid data over 25 workers with 5 Byzantine workers, under different attacks. In each subfigure, we compare an aggregation rule with its variant with bucketing.
The aggregation rules compared are \krum, \cm, \rfa, \cclip.
\begin{figure*}[!t]
    \vspace{-3mm}
    \centering
    \includegraphics[
        width=\linewidth
    ]{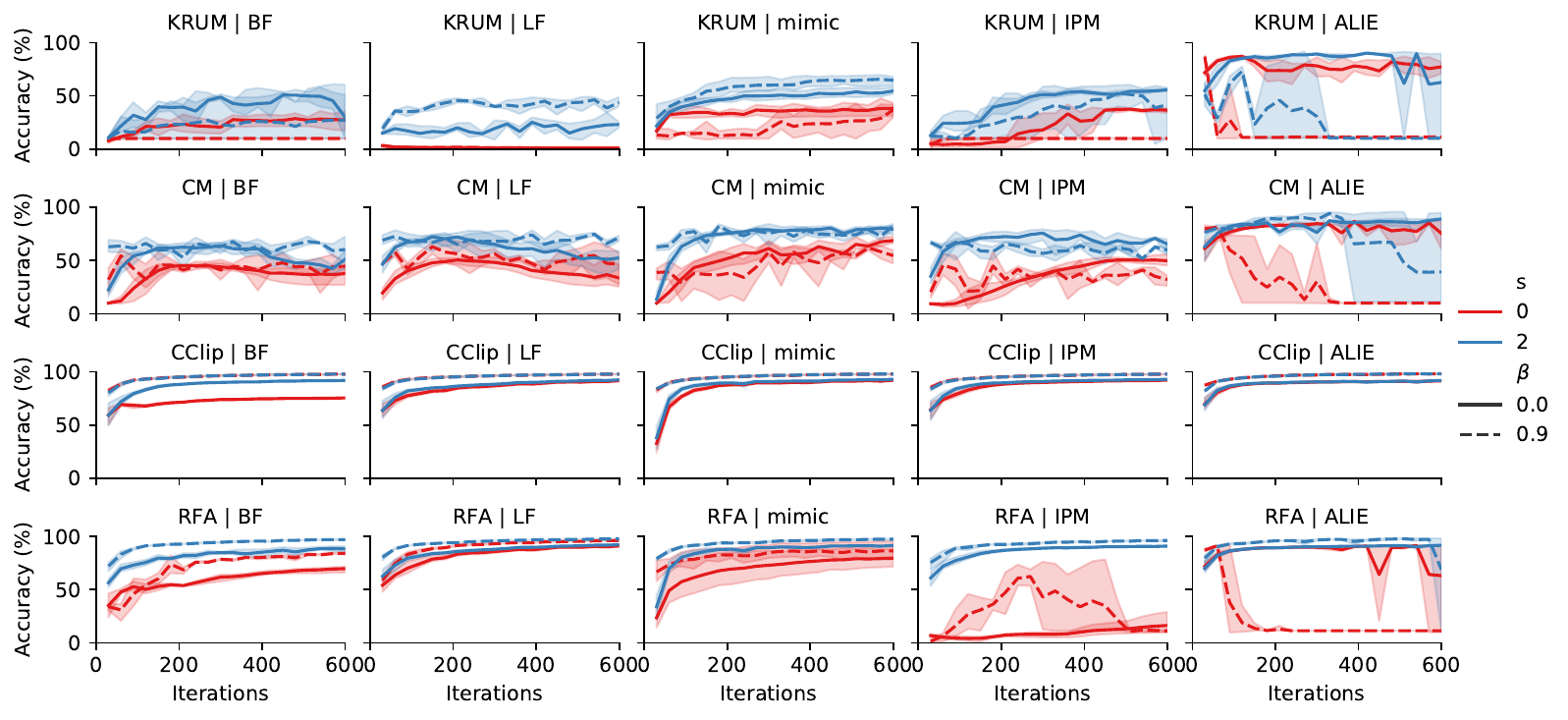}
    \vspace{-8mm}
    \caption{
        Top-1 test accuracies of \krum, \cm, \cclip, \rfa, under 5 attacks on \noniid datasets.
    }
    \vspace{-2mm}
    \label{fig:main}
\end{figure*}
%
5 different kinds of attacks are applied (one per column in the figure): bit flipping (BF), label flipping (LF), \textit{mimic} attack, as well as inner product manipulation (IPM) attack \citep{xie2019fall} and the ``a little is enough'' (ALIE) attack \citep{baruch2019little}.
\begin{itemize}[nosep]
    \item \textbf{Bit flipping}: A Byzantine worker
          sends $-\nabla f(\xx)$ instead of $\nabla f(\xx)$ due to hardware failures etc.

    \item \textbf{Label flipping}: Corrupt MNIST dataset by transforming labels by $\mathcal{T}(y) := 9 - y$.

    \item \textbf{Mimic}: Explained in \Cref{ssec:mimic}.

    \item \textbf{IPM}: The attackers send $-\frac{\epsilon}{|\cG|}\sum_{i\in\cG} \nabla f(\xx_i)$ where $\epsilon$ controls the strength of the attack.
    \item \textbf{ALIE}: The attackers estimate the mean $\mu_{\cG}$ and standard deviation $\sigma_{\cG}$ of the good gradients, and send $\mu_{\cG}-z \sigma_{\cG}$ to the server where $z$ is a small constant controlling the strength of the attack.
\end{itemize}
Both IPM and ALIE are the state-of-the-art attacks in the \iid distributed learning setups which takes advantage of the variances among workers. These attacks are much stronger in the \noniid setup. In the last two columns of \Cref{fig:main} we show that worker momentum and bucketing reduce such variance while momentum alone is not enough. Overall, \Cref{fig:main} shows that bucketing improves the performances of almost all aggregators under all kinds of attacks.
%
Note that $\tau$ of \cclip is not finetuned for each attack but rather fixed to $\frac{10}{1-\beta}$ for all attacks. This scaling is required because \cclip is \emph{not agnostic}. We defer the discussion to \Cref{sssec:clipping_radius}.


\begin{figure*}[!t]
    \centering
    \vspace{-1em}
    \begin{subfigure}[!t]{.4\textwidth}
        \centering
        \includegraphics[width=\linewidth]{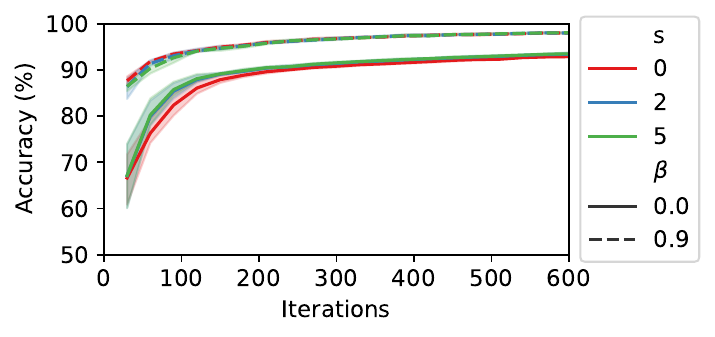}
        \vspace{-6mm}
        \caption{Fixed $q\!=\!5$ IPM attackers, varying $s$}
        \vspace{-2mm}
        \label{fig:hp:fix_f_vary_s}
    \end{subfigure}
    ~
    \begin{subfigure}[!t]{.4\textwidth}
        \centering
        \includegraphics[width=\linewidth]{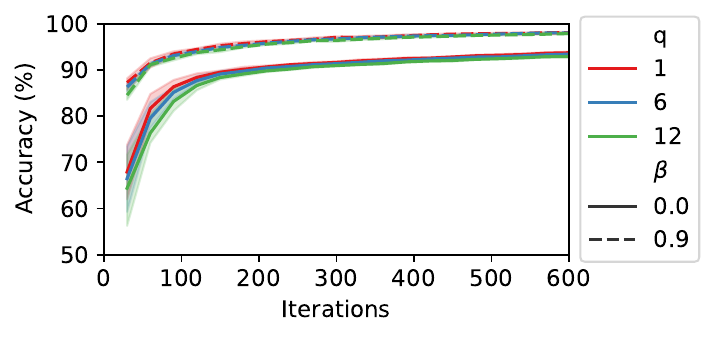}
        \vspace{-6mm}
        \caption{Fixed $s\!=\!2$, varying number $q$}
        \vspace{-2mm}
        \label{fig:hp:fix_s_vary_f}
    \end{subfigure}%
    \vspace{-2mm}
    \caption{Top-1 accuracies of \cclip with varying $q$ and $s$ when training on a cluster of $n\!=\!53$ nodes}
    \vspace{-8mm}
    \label{fig:hp}
\end{figure*}

\vspace{-2mm}
\textbf{Bucketing hyperparameter.}\label{ssec:bucketing_coefficient}
Finally we study the influence of $s$ and $q$ on the heterogeneous MNIST dataset. We use \cclip as the base aggregator and apply IPM attack. The \Cref{fig:hp:fix_f_vary_s} confirms that larger $s$ gives faster convergence but $s\!=\!2$ is sufficient. \Cref{fig:hp:fix_s_vary_f} shows that $s\!=\!2$ still behaves well when increasing $q$ close to 25\%. The complete evaluation of the results are deferred to \Cref{sec:additional_exps}.

\vspace{-2mm}
\textbf{Discussion.}
In all our experiments, we consistently observe: i) mild bucketing ($s\!=\!2$) improves performance, ii) worker momentum further stabilizes training, and finally iii) \cclip recovers the ideal performance. Given its ease of implementation, this leads us to strongly recommend using \cclip in practical federated learning to safeguard against actively malicious agents or passive failures.
\rfa combined with bucketing and worker momentum also nearly recovers ideal performance and can instead be used when a proper radius $\tau$ is hard to find.
Designing an \emph{automatic} and \emph{adaptively} clipping radius as well as its large scale empirical study is left for future work.

\vspace{-3mm}
\section{Conclusion}
\vspace{-3mm}

%

Heterogeneity poses unique challenges for Byzantine robust optimization. The first challenge is that existing defenses attempt to pick a ``representative'' update, which may not exist in the non-iid setting. This, we showed, can be overcome by using bucketing. A second more fundamental challenge is that it is difficult to distinguish between a ``weird'' but good worker from an actually Byzantine attacker. In fact, we proved strong impossibility results in such a setting. For this we showed how overparameterization (which is prevalent in real world deep learning) provides a solution, ensuring convergence to the optimum even in the presence of attackers. Together, our results yield a practical provably Byzantine robust algorithms for the \noniid setting.



\paragraph{Acknowledgement.} This project was supported by SNSF grant 200020\_200342.

\bibliography{iclr_robust-noniid}
\bibliographystyle{iclr2022_conference}

\clearpage
\appendix
\appendix

\onecolumn

\clearpage
\section{Experiment setup and additional experiments} \label{sec:additional_exps}

\subsection{Experiment setup} \label{ssec:setups}

\subsubsection{General setup}
The default experiment setup is listed in \Cref{tab:setup:mnist:default}.
\begin{table}[h]
    \caption{Default experimental settings for MNIST}
    \centering
    \small
    \label{tab:setup:mnist:default}%
    \begin{tabular}{ll}
        \toprule
        Dataset              & MNIST                                           \\
        Architecture         & CONV-CONV-DROPOUT-FC-DROPOUT-FC                 \\
        Training objective   & Negative log likelihood loss                    \\
        Evaluation objective & Top-1 accuracy                                  \\
        \midrule
        Batch size           & $32 \times \text{number of workers}$            \\
        Momentum             & 0 or 0.9                                        \\
        Learning rate        & 0.01                                            \\
        LR decay             & No                                              \\
        LR warmup            & No                                              \\
        \# Iterations        & 600 or 4500                                     \\
        Weight decay         & No                                              \\
        \midrule
        Repetitions          & 3, with varying seeds                           \\
        Reported metric      & Mean test accuracy over the last 150 iterations \\
        \bottomrule
    \end{tabular}
\end{table}

By default the hyperparameters of the aggregators are summarized as follows

    {
        \hfill
        \begin{tabular}{ll}
            \toprule
            Aggregators & Hyperparameters           \\
            \midrule
            \krum       & N/A                       \\
            \cm         & N/A                       \\
            \rfa        & $T=8$                     \\
            \tm         & $b=q$                     \\
            \cclip      & $\tau=\frac{10}{1-\beta}$ \\
            \bottomrule
        \end{tabular}
        \hfill
    }

\subsubsection{Constructing datasets}
The MNIST dataset has 10 classes each with similar amount of samples. In this part, we discuss how to process and distribute MNIST to each workers in order to achieve long-tailness and heterogeneity.

\paragraph{Long-tailness.} The long-tailness (*-LT) is achieved by sampling class with exponentially decreasing portions $\gamma\in(0, 1]$. That is, for class $i\in[10]$, we only randomly sample $\gamma^i$ portion of all samples in class $i$. We define $\alpha$ as the ratio of the largest class over the smallest class, which can be written as $\alpha=\tfrac{1}{\gamma^9}$.
    For example, if $\gamma=1$, then all classes have same amount of samples and thus $\alpha=1$; if $\gamma=0.5$ then $\alpha=2^9 = 512$.
    Note that the same procedure has to be applied to the test dataset.

    \paragraph{Heterogeneity.} Steps to construct IID/\noniid dataset from MNIST dataset
    \begin{enumerate}[nosep]
            \item Sort the training dataset by its labels.
            \item Evenly divide the sorted training dataset into chunks of same size. The number of chunks equals the number of good workers. If the last chunk has fewer samples, we augment it with samples from itself.
            \item Shuffle the samples within the same worker.
        \end{enumerate}

    \paragraph{Heterogeneity + Long-tailness.} First transform the training dataset into long-tail dataset, then feed it to the previous procedure to introduce heterogeneity.

    \paragraph{About dataset on Byzantine workers.} The training set is divided by the number of good workers. So the good workers has to full information of training dataset. The Byzantine worker has access to the whole training dataset.

    \subsubsection{Setup for each experiment}
    In \Cref{tab:setups:mnist:all}, we list the hyperparameters for the experiments.
    \begin{table}[!h]
            \caption{Setups for each experiment.}
            \centering
            \label{tab:setups:mnist:all}%
            \begin{tabular}{lllll ll}
                \toprule
                                                 & n  & q & momentum             & Iters & LT                       & NonIID         \\\midrule
                \Cref{tab:noattack:naive}        & 24 & 0 & 0                    & 4500  & $\alpha=1$, $\alpha=500$ & \iid / \noniid \\
                \Cref{tab:mimic_attack:naive}    & 25 & 5 & 0                    & 600   & $\alpha=1$   (balanced)  & \iid / \noniid \\
                \Cref{tab:noattack:resample}     & 24 & 0 & 0                    & 4500  & $\alpha=1$, $\alpha=500$ & \iid / \noniid \\
                \Cref{tab:mimic_attack:resample} & 25 & 5 & 0                    & 600   & $\alpha=1$ (balanced)    & \iid / \noniid \\
                \Cref{fig:main}                  & 25 & 5 & 0 / 0.9              & 600   & $\alpha=1$ (balanced)    & \noniid        \\
                \Cref{fig:hp}                    & 53 & 5 & 0 / 0.9              & 600   & $\alpha=1$ (balanced)    & \noniid        \\
                \Cref{fig:grad-size}             & 25 & 5 & 0 / 0.5 / 0.9 / 0.99 & 600   & $\alpha=1$ (balanced)    & \noniid        \\
                \Cref{fig:cp-scaling-tau10}      & 25 & 5 & 0 / 0.5 / 0.9 / 0.99 & 1200  & $\alpha=1$ (balanced)    & \noniid        \\
                \Cref{fig:krum-selections}       & 20 & 3 & 0                    & 1200  & $\alpha=1$ (balanced)    & \noniid        \\
                \Cref{fig:overparameterization}  & 20 & 3 & 0                    & 3000  & $\alpha=1$ (balanced)    & \noniid        \\
                \Cref{fig:bucketing}             & 24 & 3 & 0                    & 1200  & $\alpha=1$ (balanced)    & \noniid        \\
                \bottomrule
            \end{tabular}
            \hfill
        \end{table}

    \paragraph{IPM Attack in \Cref{fig:main} and \Cref{fig:hp}.} We set the strength of the attack $\epsilon=0.1$.

    \paragraph{ALIE Attack in in \Cref{fig:main}.} The hyperparameter $z$ for ALIE is computed according to  \citep{baruch2019little}
    $$z=\max_{z} \left(\phi(z) < \frac{n-q-s}{n-q} \right)$$
    where $s=\lfloor \frac{n}{2}+1\rfloor-q$ and $\phi$ is the cumulative standard normal function. In our setup, the $z\approx 0.25$.

    \subsubsection{Running environment}
    We summarize the running environment of this paper as in \Cref{tab:runtime}.
    \begin{table}[!h]
            \caption{Runtime hardwares and softwares.}
            \centering
            \label{tab:runtime}%
            \begin{tabular}{p{0.22\linewidth} p{0.7\linewidth}}
                \toprule
                CPU                       &                                             \\
                \hspace{1em} Model name   & Intel (R) Xeon (R) Gold 6132 CPU @ 2.60 GHz \\
                \hspace{1em} \# CPU(s)    & 56                                          \\
                \hspace{1em} NUMA node(s) & 2                                           \\ \midrule
                GPU                       &                                             \\
                \hspace{1em} Product Name & Tesla V100-SXM2-32GB                        \\
                \hspace{1em} CUDA Version & 11.0                                        \\ \midrule
                PyTorch                   &                                             \\
                \hspace{1em} Version      & 1.7.1                                       \\
                \bottomrule
            \end{tabular}
        \end{table}



    \subsection{Additional experiments}\label{ssec:additional_exps}
    \subsubsection{Clipping radius scaling}\label{sssec:clipping_radius}
    The radius $\tau$ of \cclip depends on the norm of good gradients. However, PyTorch implements SGD with momentum using the following formula
    \begin{align*}
            \mm_i^t = \beta \mm_i^{t-1} + \gg_i(\xx^{t-1}) \quad \text{ for every } i \in \cG
        \end{align*}
    which may leads to the increase in the gradient norm.

    \paragraph{Gradient norms.} In \Cref{fig:grad-size} we present the averaged gradient norm from all good workers. Here we use \cclip as the aggregator and $\tau=\frac{10}{1-\beta}$. The norm of gradients are computed before aggregation. Even though the dataset on workers are \noniid, the gradient norms are roughly of same order. The gradient dissimilarity $\zeta^2$ also increases accordingly.
    \begin{figure}[H]
            \vspace{-3mm}
            \centering
            \includegraphics[
                width=\linewidth
            ]{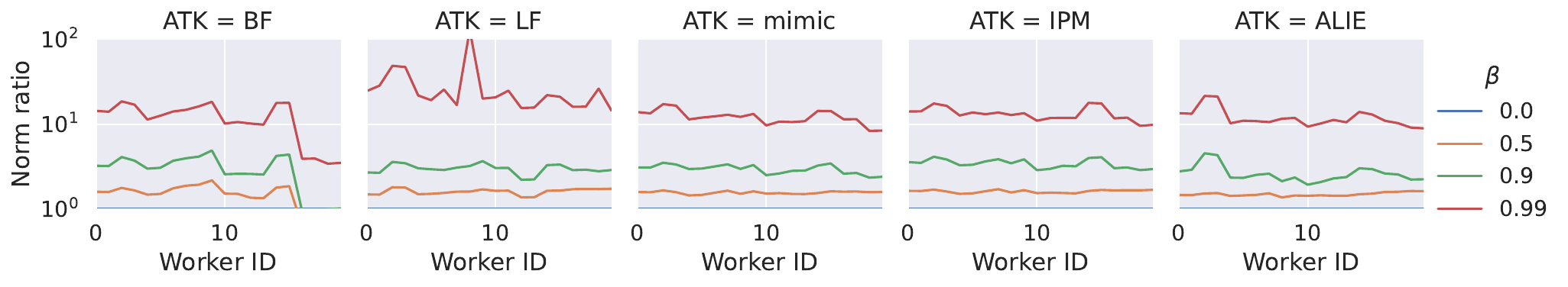}
            \caption{
                The ratio of norm of good gradients with momentum $\beta$ over no momentum under different attacks.
            }
            \vspace{-2mm}
            \label{fig:grad-size}
        \end{figure}

    \paragraph{Scaled clipping radius.} As the gradient norm increases with momentum $\beta$, the clipping radius should increase accordingly. In \Cref{fig:cp-scaling-tau10} we compare 3 schemes: 1) no scaling ($\tau=10$, $\beta=0$); 2) \textit{linear} scaling $\frac{10}{1-\beta}$; 3) \textit{sqrt} scaling $\frac{10}{\sqrt{1-\beta}}$. The no scaling scheme convergences but slower while with momentum. The linear scaling is usually better than \emph{sqrt} scaling and with bucketing it becomes more stable. However, The scaled clipping radius fails for $\beta=0.99$ under label flipping attack. This is because the gradient can be very large and $\zeta^2$ dominates. So in general, a linear scaling of clipping radius with momentum $\beta=0.9$ would be a good choice.
    \begin{figure}[H]
        \vspace{-3mm}
        \centering
        \includegraphics[
            width=\linewidth
        ]{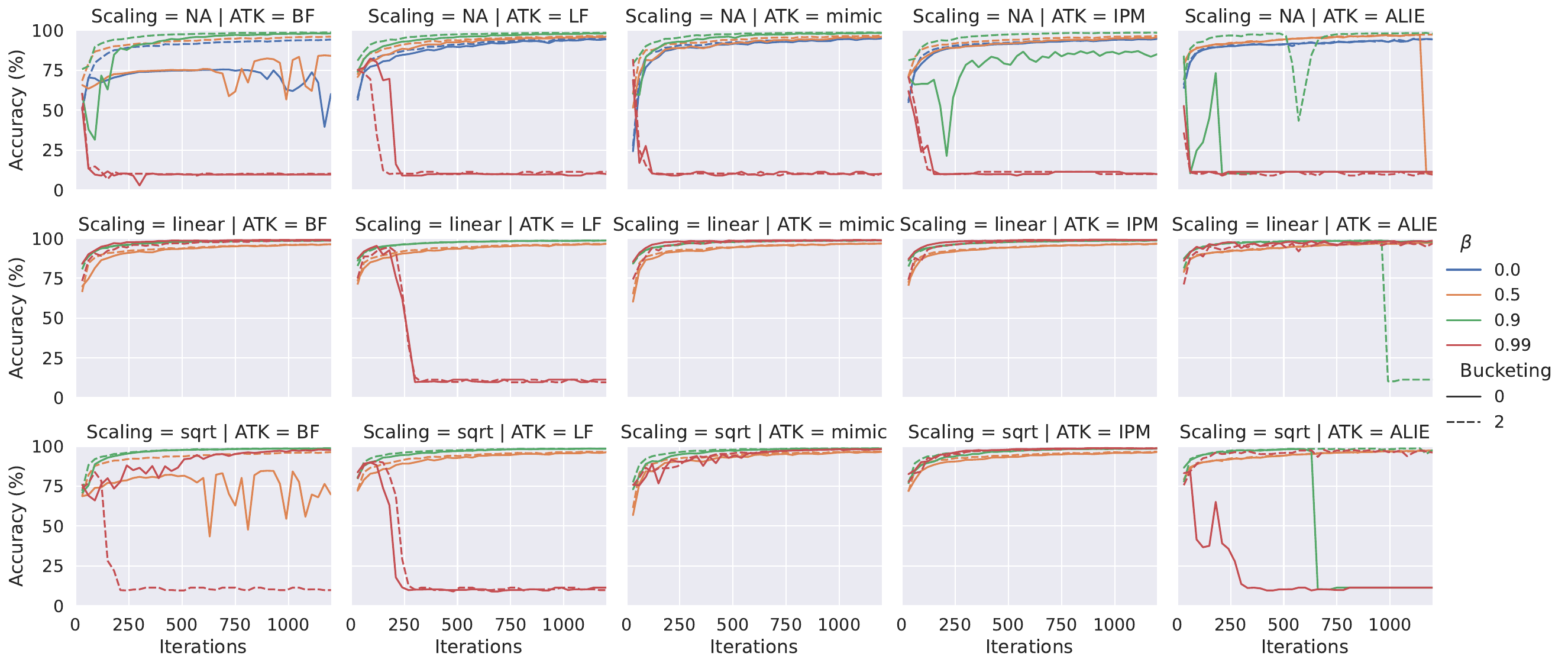}
        \caption{
            Convergence of \cclip with $\tau=10, \frac{10}{1-\beta}, \frac{10}{\sqrt{1-\beta}}$ for $\beta=0, 0.5, 0.9, 0.99$. The $s$ is the bucketing hyperparameter.
        }
        \vspace{-2mm}
        \label{fig:cp-scaling-tau10}
    \end{figure}

    \subsubsection{Demonstration of effects of bucketing through the selections of \krum}
    In the main text we have theoretically show that bucketing helps aggregators alleviate the impact of \noniid. In this section we empirically show that after bucketing aggregators can incorporate updates more evenly from good workers and therefore the problem of \noniid among good workers is less significant.  Since \krum outputs the id of the selected device, it is very convenient to record the frequency of each worker being selected. Since bucketing replicates each worker for $s$ times, we divide their frequencies by $s$ for normalization. From \Cref{fig:krum-selections}, we can see that without bucketing \krum basically almost always selects updates from Byzantine workers while with larger $s$, the selection becomes more evenly distributed.

    \begin{figure}[H]
        \vspace{-3mm}
        \centering
        \includegraphics[
            width=0.9\linewidth
        ]{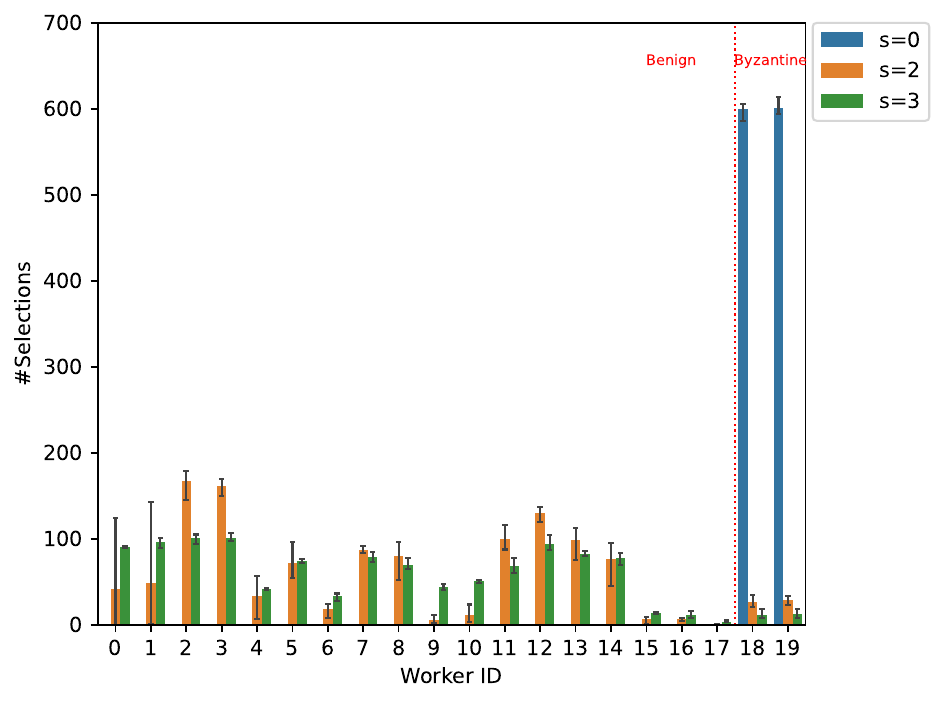}
        \caption{
            The selected workers of \krum for bucketing coefficient $s=0,2,3$. There are 20 workers and the last 2 workers (worker id=18,19) are Byzantine with label-flipping attack.
        }
        \vspace{-2mm}
        \label{fig:krum-selections}
    \end{figure}

    \subsubsection{Overparameterization}\label{sssec:overparameterization}
    The architecture of the neural net used in the experiments can be scaled to make it overparameterized. We add more parameters to the model by multiplying the channels of 2D \verb|Conv| layer and fully connected layer by a factor of `scale'. So the original model has a scale of 1. We show the training losses decrease faster for overparameterized models in \Cref{fig:overparameterization}. As we can see, the convergence behaviors are similar for different model scales with overparameterized models having smaller training loss despite the existence of Byzantine workers.
    \begin{figure}[H]
        \vspace{-3mm}
        \centering
        \includegraphics[
            width=\linewidth
        ]{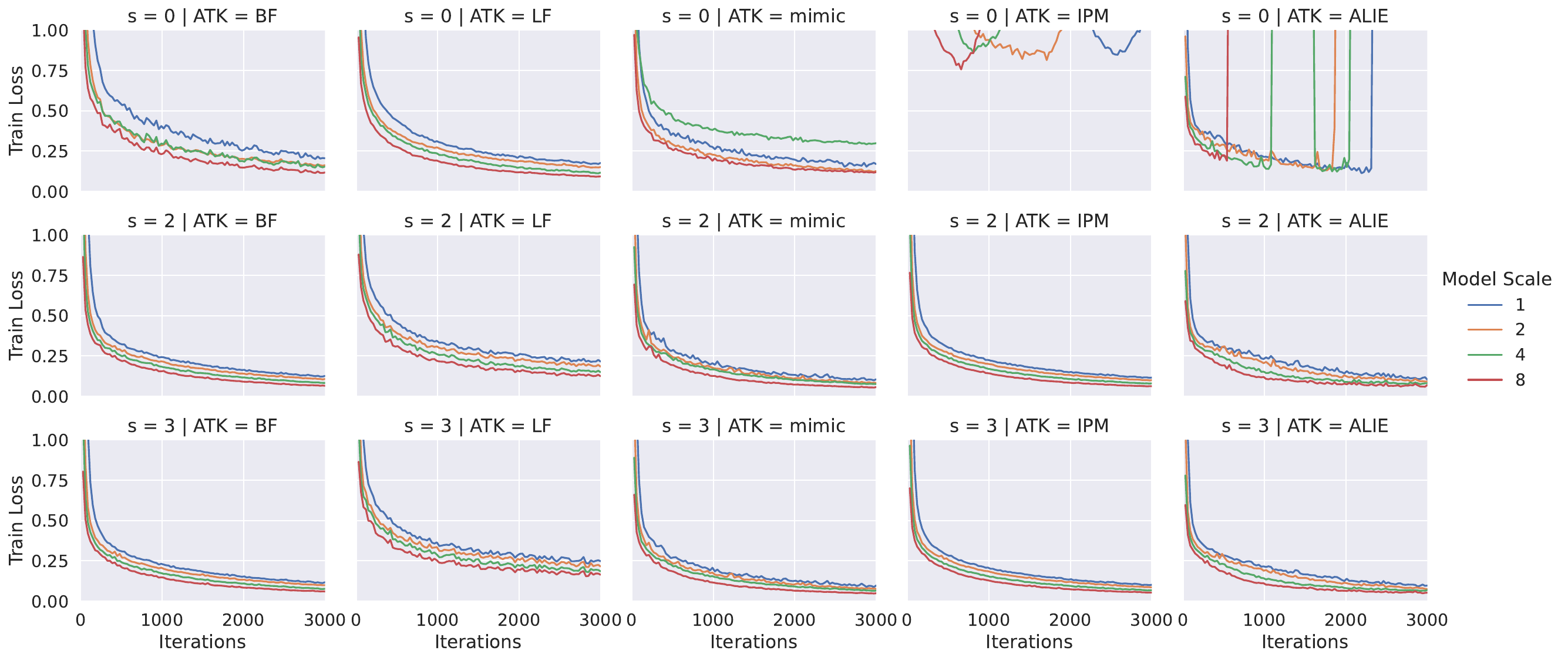}
        \caption{
            The training loss of models of different levels of overparameterization.
        }
        \vspace{-2mm}
        \label{fig:overparameterization}
    \end{figure}
    In \Cref{fig:overparameterization:B2}, we explicitly investigate the influence of overparameterization on $B^2$ defined in \eqref{eqn:asm-overpara}. As we can see, heterogeneity bound $B^2$ decreases with increasing level of overparameterization, showcasing how overparameterization minimizes the local objectives in the presence of Byzantine workers. It supports our theory in \Cref{ssec:overparameterization} that overparameterization can fix the convergence, making it possible to achieve practical Byzantine-robust learning.
    The underlying base aggregator is \rfa.
    \begin{figure}[H]
        \vspace{-3mm}
        \centering
        \includegraphics[
            width=\linewidth
        ]{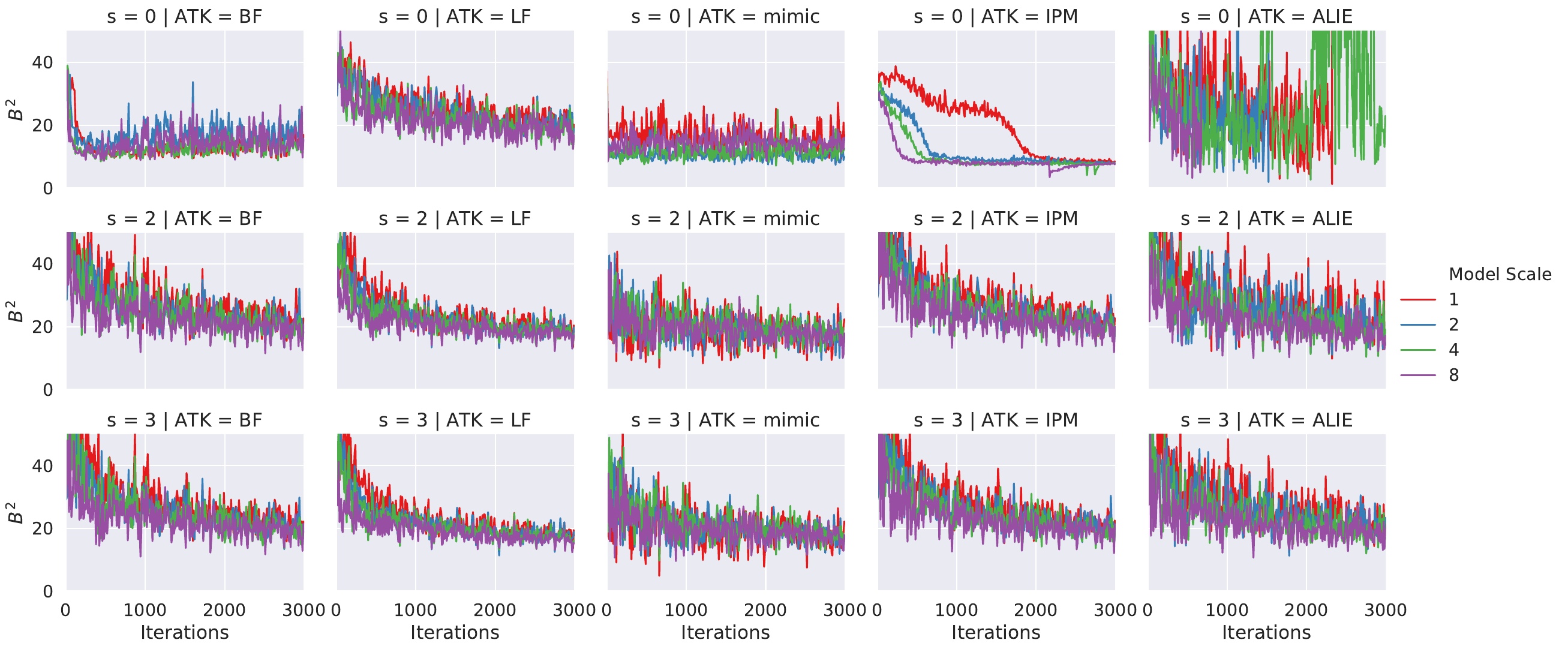}
        \caption{
            The $B^2$ in \eqref{eqn:asm-overpara} for different levels of overparameterization.
        }
        \vspace{-2mm}
        \label{fig:overparameterization:B2}
    \end{figure}


    \subsubsection{Resampling - variant of bucketing}\label{ssec:resampling_vs_bucketing}
    In the previous version of this work we repeat the gradients for $s$ times and then put $sn$ gradients into $n$ buckets.
    The results in \Cref{fig:bucketing} suggest that the convergence rate of bucketing and resampling is almost the same. So aggregators can benefit more from bucketing as it reduces the number of input gradients and therefore reduce the complexity.
    \begin{figure}[H]
        \vspace{-3mm}
        \centering
        \includegraphics[
            width=\linewidth
        ]{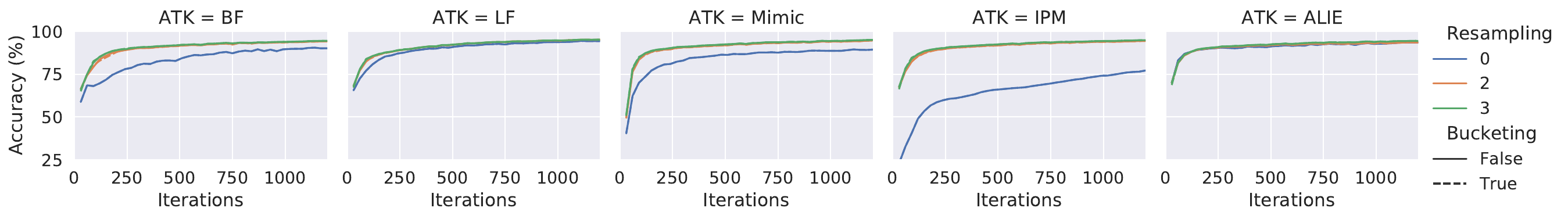}
        \caption{
            The convergence SGD with bucketing and resampling under different attacks. The underlying aggregator is \rfa.
        }
        \vspace{-2mm}
        \label{fig:bucketing}
    \end{figure}

    \section{Implementing the mimic attack}\label{sec:app-mimic}
    The \Cref{ssec:mimic} describes the idea and formulation of the mimic attack. In this section, we discuss how to pick $i_\star$ and implement the mimic attack efficiently. To pick $i_\star$, we use an initial phase ($\cI^0 \approx$ 1 epoch) to compute a direction $\zz$ of maximum variance of the outputs of the good workers:
    \[
        \zz = \argmax_{\norm{\zz}=1}~ \zz^\top \Big(\sum_{t \in \cI_0} \sum_{i \in \cG} (\xx_i^t - \muv) (\xx_i^t - \muv)^{\top}\Big) \zz \quad \text{ where } \quad \muv = \frac{1}{\abs{\cG} \abs{\cI_0}} \sum_{i \in \cG, t \in \cI_0} \xx_i^t\,.
    \]
    Then we pick a worker $i^\star$ to mimic by computing\vspace{-1mm}
    \[ i_\star = \argmax_{i \in \cG} \Big|\sum_{t \in \cI_0}\zz^\top\xx_i^t\Big| \,.\]
    In the following steps, we show how to solve the optimization problem.

    First, rewrite the mimic attack in its online version at time $t\in\cI_0$
    \[
        \zz^t = \argmax_{\norm{\zz}=1}~ h^t(\zz)
    \]
    where $\muv^t = \frac{1}{\abs{\cG} t} \sum_{\tau \le t} \sum_{i \in \cG}\xx_i^{\tau}$ and
    \[
        h^t(\zz) = \zz^\top \rbr*{\sum_{\tau \le t} \sum_{i \in \cG} (\xx_i^{\tau} - \muv^t) (\xx_i^{\tau} - \muv^t)^{\top}} \zz.
    \]
    Thus we can iteratively update $\muv^t$ by
    \[
        \muv^{t+1} = \frac{t}{1+t} \muv^t + \frac{1}{1+t} \frac{1}{\abs{\cG}} \sum_{i \in \cG}\xx_i^{t+1},
    \]
    and then
    \begin{align*}
        \argmax_{\norm{\zz}=1} h^{t+1}(\zz) \approx
                & \frac{t}{1+t}\zz^{t} + \frac{1}{1+t} \argmax_{\norm{\zz}=1} \zz^\top \rbr*{\sum_{i \in \cG} (\xx_i^{t+1} - \muv^{t+1}) (\xx_i^{t+1} - \muv^{t+1})^{\top}} \zz \\
        \approx &
        \frac{t}{1+t}\zz^{t} + \frac{1}{1+t} \rbr*{\sum_{i \in \cG} (\xx_i^{t+1} - \muv^{t+1}) (\xx_i^{t+1} - \muv^{t+1})^{\top}} \zz^{t}.
    \end{align*}
    The above algorithm corresponds to Oja's method for computing the top eigenvector in a streaming fashion \citep{oja1982simplified}.
    Then, in each subsequent iteration $t$, we pick
    \[ i_\star^t = \argmax_{i \in \cG} \zz^\top\xx_i^t \,.\]

    \begin{figure}
        \centering
        \begin{subfigure}{.3\columnwidth}
            \centering
            \includegraphics[width=\linewidth]{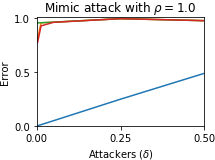}
        \end{subfigure}
        \begin{subfigure}{.3\columnwidth}
            \centering
            \includegraphics[width=\linewidth]{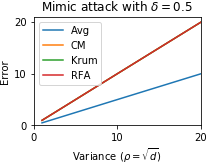}
        \end{subfigure}
        \captionsetup{font=small}

        \caption{Error with random vectors with variance $\rho^2 = d$ and $\delta$ fraction of Byzantine workers imitating a fixed good worker (say worker $1 \in \cG$). \rfa performs slightly better than \cm and \krum, but all have \emph{higher error} than simply averaging across various settings of $\delta$ and $\rho$.}
        \label{fig:demo-mimic}\vspace{-3mm}
    \end{figure}

    \textbf{Example.} Each of the good workers $i \in \cG \subseteq [n]$ has an input a $\xx_i \in \{\pm 1\}^d$ where each coordinate is an independent Rademacher random variable. The inputs then have mean $\0$ and variance $\E\norm{\xx_i}^2 = \rho^2 = d$. Now, the Byzantine attackers $j \in \cB$ have dual goals: i) escape detection, and ii) increase data imbalance. For this, we propose the following simple passive attack: pick some fixed worker $i_\star \in \cG$ (say 1) and every Byzantine worker $j \in \cB$ outputs $\xx_j = \xx_1$. The attackers cannot be filtered as they imitate an existing good worker, but still can cause imbalance in the data distribution.
    This serves as the intuition for our attack.

    \section{Constructing a robust aggregator using bucketing}

    \subsection{Supporting lemmas}
    We first start with proving the main bucketing \Cref{lemma:bucketing} restated below.

    \begin{replemma}{lemma:bucketing}
        Suppose we are given $n$ independent (but not identical) random vectors $\{\xx_1, \dots, \xx_n\}$ such that a good subset $\cG \subseteq [n]$ of size at least $\abs{\cG} \geq n(1 - \delta)$ satisfies:
        \[
            \E\norm{\xx_i - \xx_j}^2 \leq \rho^2\,, \quad \text{ for any fixed } i, j \in \cG\,.
        \]
        Define $\bar\xx := \frac{1}{\abs{\cG}} \sum_{j \in \cG} \xx_j$ and $m = \lceil n/s\rceil$.
        Let the outputs after $s$-bucketing be $\{\yy_1, \dots, \yy_{m}\}$. Then, a subset of the outputs $\tilde\cG \subseteq \{1,\ldots,m\}$ of size at least $\abs{\tilde\cG} \geq m (1 - \delta s)$ satisfies 
        \[
            \E[\yy_i] = \E[\bar\xx] \quad \text{ and } \quad \E\norm{\yy_i - \yy_j} \leq {\rho^2}/{s} \quad \text{for any fixed } i,j \in \tilde\cG\,.
        \]
    \end{replemma}\vspace{-4mm}
    \begin{proof}
        Let us define the buckets used to compute $\yy_i$ as
        \[
            B_i := \{\pi(s(i-1) + 1), \dots, \pi( \min\{s \cdot i, n\} )\} \,.
        \]

        Recall that for some permutation $\pi$ over $[n]$ and for every $i =\{1,\ldots, {m}\}$, we defined $m = \lceil n/s\rceil$ and
        \[
            \yy_i \leftarrow \frac{1}{|B_i|} \sum_{k = (i-1)\cdot s + 1}^{\min(n\,,\, i \cdot s)} \xx_{\pi(k)}  \,.
        \]
        Then, define the \emph{new} good set
        \[
            \tilde\cG = \{ i \in [m]\ |\ B_i \subseteq \cG\}
        \]
        $\tilde \cG$ contains the set of all the resampled vectors which are made up of only good vectors i.e. are uninfluenced by any Byzantine vector. Since $\abs{\cB} \leq \delta n$ and each can belong to only 1 bucket, we have that $\abs{\tilde \cG} \geq (1 - \delta s)m$. Now, for any fixed $i \in \tilde\cG$, let us look at the conditional expectation over the random permutation $\pi$ we have
        \[
            \E_{\pi}[\yy_i | i \in \tilde \cG ]= 
            \frac{1}{|B_i|} \sum_{k = (i-1)\cdot s + 1}^{\min(n\,,\, i \cdot s)} \E_{\pi} [\xx_{\pi(k)}| \pi(k)\in\cG] 
            =
            \frac{1}{\abs{\cG}} \sum_{j \in \cG} \xx_j = \bar\xx\,.
        \]
        This yields the first part of the lemma. Now we analyze the variance. 
        Thus, we can write $\yy_i = \frac{1}{s} \sum_{k\in B_i} \xx_k$. Further, $\abs{B_i} = s$ for any $i$, and $B_i \subseteq \cG$ if $i \in \tilde\cG$. With this, for any fixed $i,j \in \tilde\cG$ the variance can be written as
        \begin{align*}
            \E \norm*{\yy_i - \yy_j}^2 & = \E \norm*{\frac{1}{s} \sum_{k \in B_i}\xx_k - \frac{1}{s} \sum_{l \in B_j} \xx_l}^2                       \\
                                       &= \frac{\rho^2}{s}\,.
        \end{align*}
    \end{proof}

    This additional lemma about the maximum expected distance between good workers will also be useful later.
    \begin{lemma}[maximum good distance] \label{lem:max-dist}
        Suppose we are given the output of bucketing $\yy_1, \dots, \yy_m$ which for $m = \lceil n/s\rceil$ satisfy for any fixed $i\in \tilde\cG$, $\E[\yy_i] = \muv$ and $\E\norm{\yy_i - \muv}^2 \leq \rho^2/s$. Then, we have
        \[
            \E\sbr*{\max_{i\in \tilde\cG} \norm{\yy_i - \muv}^2} \leq n \rho^2/s^2\,.
        \]
        Further, there exist instances where
        \[
            \E\sbr*{\max_{i\in \tilde\cG} \norm{\yy_i - \muv}^2} \geq \Omega(n\rho^2/s^2)\,.
        \]
    \end{lemma}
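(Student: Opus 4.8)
The plan is to prove the upper and lower bounds separately; the upper bound is essentially a one-line union bound, while the lower bound requires an explicit heavy-tailed example, and that is where the actual work lies.

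\emph{Upper bound.} I would use that each $\norm{\yy_i-\muv}^2$ is nonnegative, so the maximum over the finite set $\tilde\cG$ is at most the sum. Then by linearity of expectation and the hypothesis $\E\norm{\yy_i-\muv}^2\le\rho^2/s$,
\[
\E\Big[\max_{i\in\tilde\cG}\norm{\yy_i-\muv}^2\Big]\;\le\;\sum_{i\in\tilde\cG}\E\norm{\yy_i-\muv}^2\;\le\;\abs{\tilde\cG}\,\frac{\rho^2}{s}.
\]
It then remains to observe that the good buckets are pairwise-disjoint size-$s$ subsets of $[n]$, so $\abs{\tilde\cG}\le n/s$ (the last, possibly undersized, bucket affecting this only through rounding), which gives the bound $n\rho^2/s^2$.

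\emph{Lower bound.} It suffices to exhibit one family meeting the lemma's hypotheses for which the maximum is this large. I would take $\muv=\0$, set $m=\lceil n/s\rceil$, fix a unit vector $\uu$, and let $\yy_1,\dots,\yy_m$ be independent with $\yy_i=\pm\sqrt{m\rho^2/s}\,\uu$ each with probability $\tfrac1{2m}$ and $\yy_i=\0$ otherwise. Then $\E[\yy_i]=\0$ and $\E\norm{\yy_i-\muv}^2=\tfrac1m\cdot\tfrac{m\rho^2}{s}=\tfrac{\rho^2}{s}$, so the hypotheses hold with $\tilde\cG=[m]$ (and for $s=1$, $\delta=0$ these are literally the shuffled inputs, so this is a genuine bucketing instance). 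On the event $E$ that at least one $\yy_i\ne\0$ we have $\max_{i\in\tilde\cG}\norm{\yy_i-\muv}^2=m\rho^2/s$; by independence $\Pr[E]=1-(1-\tfrac1m)^m\ge1-\tfrac1e$. Hence $\E[\max_{i\in\tilde\cG}\norm{\yy_i-\muv}^2]\ge(1-\tfrac1e)\,\tfrac{m\rho^2}{s}=\Omega(n\rho^2/s^2)$, using $m\ge n/s$.

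\emph{Main obstacle.} The upper bound is routine; the only care needed is the count $\abs{\tilde\cG}\le n/s$. The substance is in the lower bound, where the key design choice is to make an outlier of probability $\tfrac1m$ sit at squared distance $m\rho^2/s$ from $\muv$: this is precisely the regime in which $\E[\max]$ exceeds the typical per-index deviation by the factor $m\approx n/s$, which is what forces the union-bound estimate to be tight and shows $n\rho^2/s^2$ is unimprovable in general.
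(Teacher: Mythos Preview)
Your proof is correct and follows essentially the same approach as the paper: the upper bound via the trivial sum bound $\E[\max]\le\sum\E$ together with $\abs{\tilde\cG}\le m\approx n/s$, and the lower bound via an explicit heavy-tailed construction where each $\yy_i$ sits at squared distance $m\rho^2/s$ with probability $1/m$, so that the birthday-type calculation $1-(1-1/m)^m\ge 1-1/e$ gives the claim. Your symmetric $\pm$ construction is in fact slightly cleaner than the paper's one-sided Bernoulli, since it makes $\E[\yy_i]=\muv$ hold exactly rather than only approximately.
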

    \begin{proof}
        For the upper bound, we simply use
        \[
            \E\sbr*{\max_{i\in \tilde\cG} \norm{\yy_i - \muv}^2} \leq \sum_{i \in \tilde\cG} \E \norm{\yy_i - \muv}^2 \leq m \rho^2/s\,.
        \]
        For the lower bound, let $\tilde\cG = [m]$ and consider $\yy_i \sim \tilde\rho\sqrt{m}\text{Bern}(p = \frac{1}{m})$. This means $\yy_i$ is either 0 or $\tilde\rho \sqrt{m}$. Further, its variance is clearly bounded by $\tilde\rho^2$. Upon drawing $m$ samples, the probability of seeing at least 1 $\yy_j = \tilde\rho \sqrt{m}$ is
        \[
            1-\Pr(\yy_i = 0\ \forall i \in [m]) = 1 - (1-\tfrac{1}{m})^m \geq 1 - \nicefrac{1}{e} \geq 0.5\,.
        \]
        Thus, with probability at least 0.5 we have
        \[
            \max_{i\in [n]} \norm{\yy_i - \muv}^2 \geq m\tilde\rho^2/2 \,.
        \]
        This directly proves our lower bound by defining $\tilde\rho^2 := \rho^2/s$ and recalling that $m = \lceil n/s\rceil$.
        Note that this lemma can be tightened if we make stronger assumptions on the noise such as $\E\norm{\yy_i - \muv}^r \leq \rbr*{\nicefrac{\rho}{\sqrt{s}}}^r$ for some large $r \geq 2$. However, we focus on using standard stochastic assumptions ($r=2$) in this work.
    \end{proof}

    \subsection{Proofs of robustness}

    Let $\{\yy_1\, \dots, \yy_m\}$ be the resampled vectors with bucketing using $s = \frac{\delta_{\max}}{\delta}$. By \Cref{lemma:bucketing}, we have that there is a $\tilde \cG \subseteq [m]$ of size $\abs{\tilde \cG} > m(1 - \delta_{\max})$ which satisfies for any fixed $i,j \in \tilde \cG$
    \[
        \E\norm{\yy_i - \yy_j}^2 \leq \frac{\delta \rho^2}{\delta_{\max}} =: \tilde\rho^2\,.
    \]
    This observation will be combined with each of the algorithms to obtain robustness guarantees.

    \paragraph{Robustness of \krum.}
    We now prove that \krum when combined with bucketing is a robust aggregator. We can rewrite the output of \krum as the following for $\delta_{\max} = 1/4 - \nu$ for some arbitrarily small positive number $\nu \in (0, 1/4)$:
    \[
        \krum(\yy_1, \dots, \yy_m)    = \argmin_{\yy_i} \min_{\abs{\cS} = {3m/4}} \sum_{j\in\cS} \norm{\yy_i - \yy_j}^2\,.
    \]
    Let $\cS^\star$ and $k^\star$ be the quantities which minimize the optimization problem solved by \krum.

    The main difficulty of analyzing \krum is that even if we succeed in selecting a $k^\star \in \tilde\cG$, $k^\star$ depends on the sampling. Hence, we \textbf{cannot} claim that the error is bounded by $\tilde\rho^2$ i.e \footnote{This issue was incorrectly overlooked in the original analysis of \krum \citep{blanchard2017machine}}
    \[
        \E\norm{\yy_{k^\star} - \yy_j}^2 \nleq \tilde\rho^2 \text{ for some fixed } j\in\tilde\cG\,.
    \]
    This is because the variance is bounded by $\tilde\rho^2$ only for a \emph{fixed} i, and not a data dependent $k^\star$. Instead, we will have to rely on \Cref{lem:max-dist} that
    \[
        \E\norm{\yy_{k^\star} - \yy_j}^2  \leq \E \max_{i \in \tilde\cG}\norm{\yy_{i} - \yy_j}^2 \leq m\tilde\rho^2\,.
    \]
    \Cref{lem:max-dist} shows that this inequality is essentially tight and hence relying on it necessarily incurs an extra factor of $m$ which can be very large. Instead, we show an alternate analysis which works for a smaller breakdown point of $\delta_{\max} = 1/4$, but \emph{does not} incur the extra $m$ factor.

    For any good input $i \in \tilde\cG$, we have
    \begin{align*}
        \norm{\yy_{k^\star}  - \bar\xx}^2           & \leq  2\norm{\yy_{k^\star} - \yy_i}^2 + 2\norm{\yy_i - \bar\xx}^2    \\
        \Rightarrow 2\norm{\yy_{k^\star} - \yy_i}^2 & \geq  \norm{\yy_{k^\star}  - \bar\xx}^2  - 2\norm{\yy_i - \bar\xx}^2
        \,.
    \end{align*}
    Further, for a bad worker $j \in \tilde\cB$ we can write
    \[
        2\norm{\yy_{k^\star} - \yy_j}^2 \geq  \norm{\yy_j - \bar\xx}^2 - 2\norm{\yy_{k^\star}  - \bar\xx}^2\,.
    \]
    Combining both and summing over $\cS^\star$,
    \begin{align*}
        \sum_{i \in \cS^\star}2  \norm{\yy_{k^\star} - \yy_i}^2 & =  \sum_{i \in \tilde\cG \cap \cS^\star}2  \norm{\yy_{k^\star} - \yy_i}^2 +  \sum_{j \in \tilde\cB \cap \cS^\star}2  \norm{\yy_{k^\star} - \yy_j}^2 \\
                                                                & \geq
        \sum_{j \in \tilde\cB \cap \cS^\star} \norm{\yy_{j} - \bar\xx}^2 - 2\sum_{i \in \tilde\cG \cap \cS^\star} \norm{\yy_{i} - \bar\xx}^2                                                                          \\
                                                                & \hspace{2cm} + (\abs{\tilde\cG \cap \cS^\star} - 2 \abs{\tilde\cB \cap \cS^\star})\norm{\yy_{k^\star}  - \bar\xx}^2\,.
    \end{align*}
    We can rearrange the above equation as
    \begin{align*}
        \norm{\yy_{k^\star}  - \bar\xx}^2 & \leq \frac{1}{(\abs{\tilde\cG \cap \cS^\star} - 2 \abs{\tilde\cB \cap \cS^\star})}(\sum_{i \in \cS^\star}2  \norm{\yy_{k^\star} - \yy_i}^2 + \sum_{i \in \tilde\cG \cap \cS^\star} 2\norm{\yy_{i} - \bar\xx}^2) \\
                                          & \leq \frac{1}{(\abs{\cS^\star} - 3 \abs{\tilde\cB })}(\sum_{i \in \cS^\star}2  \norm{\yy_{k^\star} - \yy_i}^2 + \sum_{i \in \tilde\cG \cap \cS^\star} 2\norm{\yy_{i} - \bar\xx}^2)                              \\
                                          & \leq  \frac{1}{(\abs{\cS^\star} - 3 \abs{\tilde\cB })}(2\min_{k, |\cS|=3m/4}\sum_{i \in \cS}  \norm{\yy_{k} - \yy_i}^2 + \sum_{i \in \tilde\cG } 2\norm{\yy_{i} - \bar\xx}^2).
    \end{align*}
    Taking expectation now on both sides yields
    \[
        \E\norm{\yy_{k^\star}  - \bar\xx}^2 \leq \frac{4m \tilde\rho^2}{\abs{\cS^\star} - 3\abs{\tilde\cB}}\,.
    \]
    Now, recall that we used a bucketing value of $s = \nicefrac{\delta_{\max}}{\delta}$ where for \krum we have $\delta_{\max} = \nicefrac{1}{4} - \nu$. Then, the number of Byzantine workers can be bounded as  $\abs{\tilde\cB} \leq m(1/4 - \nu)$. This gives the final result that
    \[
        \E\norm{\yy_{k^\star}  - \bar\xx}^2 \leq \frac{4m \tilde\rho^2}{3m/4 - 3(m/4 -\nu m)} = \frac{4\tilde\rho^2}{3\nu} \leq \frac{4}{3\nu (1/4 - \nu)} \delta \rho^2\,.
    \]
    Thus, \krum with bucketing indeed satisfies \Cref{definition:robust-agg} with $\delta_{\max} = (\nicefrac{1}{4} - \nu)$ and $c = 4/(3\nu(\nicefrac{1}{4}-\nu))$.

    \paragraph{Robustness of Geometric median.} Geometric median computes the minimum of the following optimization problem
    \[
        \yy^\star = \argmin_{\yy} \sum_{i \in [m]} \norm{\yy - \yy_i}_2\,.
    \]
    We will adapt Lemma~24 of \citet{cohen2016geometric}, which itself is based on \citep{minsker2015geometric}. For a good bucket $i \in \tilde\cG$ and bad bucket $j \in \tilde\cB$:
    \begin{align*}
        \norm{\yy^\star - \yy_i}_2 & \geq \norm{\yy^\star - \bar\xx}_2 - \norm{\yy_i - \bar\xx}_2 \text{ for } i\in\tilde\cG \text{, and } \\
        \norm{\yy^\star - \yy_j}_2 & \geq \norm{\yy_j - \bar\xx}_2 - \norm{\yy^\star - \bar\xx}_2 \text{ for } j\in\tilde\cB\,.
    \end{align*}
    Summing this over all buckets we have
    \begin{align*}
        \sum_{i \in [n]}\norm{\yy^\star - \yy_i} & \geq (\abs{\tilde\cG} - \abs{\tilde\cB})\norm{\yy^\star - \bar\xx}  + \sum_{j \in \tilde\cB}\norm{\yy_j - \bar\xx} - \sum_{i \in \tilde\cG}\norm{\yy_i - \bar\xx}                                 \\
        \Rightarrow \norm{\yy^\star - \bar\xx}   & \leq \frac{1}{(\abs{\tilde\cG} - \abs{\tilde\cB})}\rbr*{ \sum_{i \in [n]}\norm{\yy^\star - \yy_i} - \sum_{j \in \tilde\cB}\norm{\yy_j - \bar\xx} +\sum_{i \in \tilde\cG}\norm{\yy_i - \bar\xx} }  \\
                                                 & = \frac{1}{(\abs{\tilde\cG} - \abs{\tilde\cB})}\rbr*{ \min_{\yy}\sum_{i \in [n]}\norm{\yy - \yy_i} - \sum_{j \in \tilde\cB}\norm{\yy_j - \bar\xx} +\sum_{i \in \tilde\cG}\norm{\yy_i - \bar\xx} } \\
                                                 & \leq \frac{2}{(\abs{\tilde\cG} - \abs{\tilde\cB})}\rbr*{ \sum_{i \in \tilde\cG}\norm{\yy_i - \bar\xx} }\,.\end{align*}
    The last step we substituted $\yy = \bar\xx$. Squaring both sides, expanding, and then taking expectation gives
    \begin{align*}
        \E\norm{\yy^\star - \bar\xx}^2 & \leq \frac{4}{(\abs{\tilde\cG} - \abs{\tilde\cB})^2}\E \rbr*{ \sum_{i \in \tilde\cG}\norm{\yy_i - \bar\xx} }^2             \\
                                       & \leq \frac{4}{(\abs{\tilde\cG} - \abs{\tilde\cB})^2}\rbr*{\abs{\tilde\cG} \sum_{i \in \tilde\cG}\E\norm{\yy_i - \bar\xx}^2 } \\
                                       & \leq \frac{4\abs{\tilde\cG}^2}{(n - 2\abs{\tilde\cB})^2}\tilde\rho^2\,.
    \end{align*}
    Now, recall that we used a bucketing value of $s = \nicefrac{\delta_{\max}}{\delta}$ where for \krum we have $\delta_{\max} = \nicefrac{1}{2} - \nu$. Then, the number of Byzantine workers can be bounded as  $\abs{\tilde\cB} \leq n(1/2 - \nu)$. This gives the final result that
    \[
        \E\norm{\yy^\star - \bar\xx}^2 \leq \frac{4n^2}{4 n^2 \nu^2}\tilde\rho^2 \leq \frac{\tilde\rho^2}{\nu^2} \leq \frac{1}{\nu(1/2 - \nu)} \delta\rho^2\,.
    \]
    Thus, geometric median with bucketing indeed satisfies \Cref{definition:robust-agg} with $\delta_{\max} = (\nicefrac{1}{2} - \nu)$ and $c = 1/(\nu(\nicefrac{1}{2}-\nu))$. Note that geometric median has better theoretical performance than \krum.

    \paragraph{Robustness of Coordinate-wise median.} The proof of coordinate-wise median largely follows that of the geometric median.  First, we note that we can separate out the objective by coordinates
    \[
        \E\norm{\cm(\yy_1, \dots, \yy_m) - \bar\xx}^2 = \sum_{l=1}^d \E\rbr*{\cm([\yy_1]_l, \dots, [\yy_m]_l) - [\bar\xx]_l}^2\,.
    \]
    Then note that, for any fixed coordinate $l \in [d]$ and fixed good worker $i \in \cG$, we have $\E([\yy_i]_l - [\bar\xx]_l)^2 \leq \E \norm{\yy_i - \bar\xx}^2 \leq \tilde\rho^2$. Thus, we can simply analyze coordinate-wise median as $d$ separate (geometric) median problems on scalars. Thus for any fixed coordinate $l \in [d]$, we have
    \[
        \E\rbr*{\cm([\yy_1]_l, \dots, [\yy_m]_l) - [\bar\xx]_l}^2 \leq \frac{\tilde\rho^2}{\nu^2} \Rightarrow \E\norm{\cm(\yy_1, \dots, \yy_m) - \bar\xx}^2 \leq \frac{d\tilde\rho^2}{\nu^2} \leq \frac{d}{\nu(1/2 - \nu)} \delta\rho^2\,.
    \]
    Thus, coordinate-wise median with bucketing indeed satisfies \Cref{definition:robust-agg} with $\delta_{\max} = (\nicefrac{1}{2} - \nu)$ and $c = d/(\nu(\nicefrac{1}{2}-\nu))$.

    \section{Lower bounds on non-iid data (Proof of \Cref{thm:lower-bound})}
    Our proof builds two sets of functions $\{f^1_i(\xx) \, |\, i \in \cG^1 \}$ and $\{f^2_i(\xx) \, |\, i \in \cG^2 \}$ and will show that in the presence of $\delta$-fraction of Byzantine workers, no algorithm can distinguish between them. Since the problems have different optima, this means that the algorithm necessarily has an error on at least one of them.

    For the first set of functions, let there be \emph{no} bad workers and hence $\cG^1 = [n]$. Then, we define the following functions for any $i \in [n]$:
    \[
        f^1_i(x) = \begin{cases}
            \frac{\mu}{2}x^2 - \zeta\delta^{-1/2} x & \text{ for } i \in \{1,\dots, \delta n\}         \\
            \frac{\mu}{2}x^2                            & \text{ for } i \in \{\delta n + 1, \dots, n\}\,.
        \end{cases}
    \]
    Defining $G := \zeta \delta^{1/2}$, the average function which is our objective is
    \[
        f^1(x) = \frac{1}{n}\sum_{i=1}^n f^1_i(x) = \frac{\mu}{2}x^2 - G x \,.
    \]
    The optimum of our $f^1(x)$ is at $x = \frac{G}{\mu}$.
    Note that the gradient heterogeneity amongst these workers is bounded as
    \begin{align*}
        \E_{i \sim [n]}\norm{\nabla f^1_i(x) - \nabla f^1(x)}^2
        =&
        \delta (\zeta\delta^{-1/2} - \zeta\delta^{1/2})^2
        +(1-\delta)(\zeta\delta^{1/2})^2\\
        =& \zeta^2 (1-\delta)^2 + \zeta^2(1-\delta)\delta
        = \zeta^2 (1-\delta)
        \le \zeta^2.
    \end{align*}

    Now, we define the second set of functions. Here, suppose that we have $\delta n$ Byzantine attackers with $\cB^2 = \{1, \dots, \delta n\}$. Then, the functions of the good workers are defined as
    \[
        f^2_i(x) = \frac{\mu}{2}x^2  \text{ for } i \in \cG^2 =\{\delta n + 1, \dots, n\}\,.
    \]
    We then have that the second average objective is
    \[
        f^2(x) = \frac{1}{\abs{\cG^2}}\sum_{i \in \cG^2} f^2_i(x) = \frac{\mu}{2}x^2  \,.
    \]
    Here, we have gradient heterogeneity of 0 and hence is smaller than $\zeta^2$. The optimum of $f^2(x)$ is at $x=0$.
    The Byzantine attackers simply imitate as if they have the functions:
    \[
        f^2_j(x) = \frac{\mu}{2}x^2 - \zeta\delta^{-1/2} x \text{ for } j \in \cB^2 = \{1,\dots, \delta n\}\,.
    \]

    Note that the set of functions,  $\{f^1_1(\xx), \dots, f^1_n(x)\}$ is exactly identical to the set $\{f^2_1(\xx), \dots, f^2_n(x)\}$. Only the identity of the good workers $\cG^1$ and $\cG^2$ are different, leading to different objective functions $f^1(x)$ and $f^2(x)$. However, since the algorithm does not have access to $\cG$, its output on each of them is identical i.e.
    \[
        x^{\text{out}} = \alg(f^1_1(\xx), \dots, f^1_n(x)) = \alg(f^2_1(\xx), \dots, f^2_n(x))\,.
    \]
    However, the leads to making a large error in least one of $f^1$ and $f^2$ since they have different optimum. This proves a lower bound error of
    \[
        \max_{k \in \{1,2\}} f^k(x^{\text{out}}) - f^k(x^\star) \geq \mu\rbr*{\frac{G}{2\mu}}^2 = \frac{\delta \zeta^2}{4\mu}\,.
    \]
    Similarly, we can also bound the gradient norm error bound as
    \[
        \max_{k \in \{1,2\}} \norm{\nabla f^k(x^{\text{out}})}^2 \geq \mu^2\rbr*{\frac{G}{2\mu}}^2 = \frac{\delta \zeta^2}{4}\,.
    \]

    \qed

    \section{Convergence of robust optimization on non-iid data (Theorems~\ref{thm:convergence-general} and \ref{thm:overparam-convergence})}

    We will prove a more general convergence theorem which generalizes Theorems~\ref{thm:convergence-general} and \ref{thm:overparam-convergence}.
    \begin{theorem}\label{thm:appendix-convergence}
        Suppose we are given a $(\delta_{\max}, c)$-\ragg satisfying \Cref{definition:robust-agg}, and $n$ workers of which a subset $\cG$ of size at least $\abs{\cG} \geq n(1 - \delta)$ faithfully follow the algorithm for $\delta \leq \delta_{\max}$. Further, for any good worker $i \in \cG$ let $f_i$ be a possibly non-convex function with $L$-Lipschitz gradients, and the stochastic gradients on each worker be independent, unbiased and satisfy
        \[
            \E_{\xiv_i}\norm{\gg_i(\xx) - \nabla f_i(\xx)}^2 \leq \sigma^2 \text{ and }  \E_{j \sim \cG}\norm{\nabla f_j(\xx) - \nabla f(\xx)}^2 \leq \zeta^2 + B^2 \norm{\nabla f(\xx)}^2 \,, \quad \forall \xx\,,
        \]
        where $\delta \leq 1/(60cB^2)$. Then, for $F^0 := f(\xx^0) - f^\star$, the output of \Cref{algo:bucketing_sgd} with step-size $\eta =  \min\rbr*{\cO\rbr[\bigg]{\sqrt{\frac{L F^0 + c\delta(\zeta^2 + \sigma^2) }{T  L^2 \sigma^2 (n^{-1} + c\delta)}}}, \frac{1}{8L}} $ and momentum parameter $\beta = (1 - 8L\eta)$ satisfies \vspace{-2mm}
        \begin{align*}
            \frac{1}{T}\sum_{t=1}^T \E \norm{\nabla f(\xx^{t-1})}^2 & \leq
            \cO \rbr[\Big]{ \frac{1}{1 \!-\! 60c\delta B^2} \cdot \rbr[\Big]{c \delta \zeta^2 +
                    \sigma\sqrt{\frac{LF^0}{T} (c\delta + \nicefrac{1}{n})} + \frac{L F^0}{T} } }\,.
        \end{align*}
    \end{theorem}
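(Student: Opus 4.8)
The plan is to adapt the momentum-based analysis of \citet{karimireddy2020learning} to the heterogeneous setting, tracking three quantities along the trajectory: the averaged good momentum $\bar\mm^t := \frac{1}{\abs{\cG}}\sum_{i\in\cG}\mm_i^t$, the aggregation error $\ee^t := \hat\mm^t - \bar\mm^t$, and the momentum deviation $\xiv^t := \bar\mm^t - \nabla f(\xx^{t-1})$. First I would establish three intermediate bounds. \textbf{(i)} By the $(\delta_{\max},c)$-\ragg property applied to the inputs $\{\mm_i^t\}$, $\E\norm{\ee^t}^2 \le c\delta\rho_t^2$, where $\rho_t^2$ is a uniform bound on the pairwise dissimilarity $\E\norm{\mm_i^t - \mm_j^t}^2$ of good workers' momenta. \textbf{(ii)} Unrolling $\mm_i^t$ as a geometrically weighted sum of past gradients and separating the stochastic noise (independent across time and workers, variance $\le\sigma^2$) from the gradient-heterogeneity part, convexity of the geometric weights gives $\rho_t^2 \le \cO\rbr{(1-\beta)\sigma^2 + \max_{k}\E\norm{\nabla f_i(\xx^{t-1-k}) - \nabla f_j(\xx^{t-1-k})}^2}$, and the heterogeneity assumption converts the second term into $\cO\rbr{\zeta^2 + B^2\cdot(\text{average of recent }\norm{\nabla f(\xx^\tau)}^2)}$. \textbf{(iii)} From the recursion $\xiv^t = \beta\xiv^{t-1} + \beta(\nabla f(\xx^{t-2}) - \nabla f(\xx^{t-1})) + (1-\beta)(\bar\gg^{t-1} - \nabla f(\xx^{t-1}))$, using that the last term is conditionally mean-zero and $\norm{\nabla f(\xx^{t-1}) - \nabla f(\xx^{t-2})} \le L\eta\norm{\hat\mm^{t-1}}$, a Young's inequality with weight $\tfrac{1-\beta}{\beta}$ yields $\E\norm{\xiv^t}^2 \le \beta\E\norm{\xiv^{t-1}}^2 + \tfrac{\beta L^2\eta^2}{1-\beta}\E\norm{\hat\mm^{t-1}}^2 + \tfrac{(1-\beta)^2\sigma^2}{\abs{\cG}}$; summing over $t$ gives $\sum_t \E\norm{\xiv^t}^2 \le \tfrac{\beta L^2\eta^2}{(1-\beta)^2}\sum_t\E\norm{\hat\mm^t}^2 + \tfrac{T(1-\beta)\sigma^2}{\abs{\cG}}$ up to a lower-order term from the transient $\xiv^1$.

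Next I would write the descent step. Applying $L$-smoothness to $\xx^t = \xx^{t-1} - \eta\hat\mm^t$ and decomposing $\hat\mm^t = \nabla f(\xx^{t-1}) + \xiv^t + \ee^t$, Young's inequality on the cross terms together with $\norm{\hat\mm^t}^2 \le 3\norm{\nabla f(\xx^{t-1})}^2 + 3\norm{\xiv^t}^2 + 3\norm{\ee^t}^2$ give, for $\eta \le \tfrac{1}{8L}$,
\[
\E f(\xx^t) \le \E f(\xx^{t-1}) - \tfrac{\eta}{4}\E\norm{\nabla f(\xx^{t-1})}^2 + \cO(\eta)\,\E\norm{\xiv^t}^2 + \cO(\eta)\,\E\norm{\ee^t}^2\,.
\]
Then I would telescope over $t=1,\dots,T$, divide by $\eta T$, and substitute the bounds from (i)--(iii): the $\sum_t\E\norm{\hat\mm^t}^2$ appearing inside the $\xiv$-bound is itself controlled by $\tfrac1T\sum_t\E\norm{\nabla f(\xx^{t-1})}^2$ up to $\E\norm{\xiv^t}^2$ and $\E\norm{\ee^t}^2$, so with the choice $\beta = 1 - 8L\eta$ the $\xiv$ and $\hat\mm$ contributions become higher-order in $\eta$ and get absorbed, leaving an inequality of the form $\rbr{1 - \cO(c\delta B^2)}\tfrac1T\sum_t\E\norm{\nabla f(\xx^{t-1})}^2 \le \cO\rbr{\tfrac{F^0}{\eta T} + L\eta\sigma^2\rbr{\tfrac1n + c\delta} + c\delta\zeta^2}$. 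The $B^2$ term on the left comes precisely from the heterogeneity part of $\rho_t^2$ fed through the aggregation error, and the hypothesis $\delta \le \tfrac{1}{60cB^2}$ guarantees the coefficient $1 - 60c\delta B^2$ is positive, so we may divide by it. Finally, optimizing $\eta = \min\{\cdots, \tfrac{1}{8L}\}$ balances $\tfrac{F^0}{\eta T}$ against $L\eta\sigma^2\rbr{\tfrac1n + c\delta}$ and yields the stated rate; taking $B = 0$ recovers \Cref{thm:convergence-general} and $\zeta = 0$ recovers \Cref{thm:overparam-convergence}.

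The main obstacle I anticipate is step (ii), bounding $\rho_t^2$ cleanly, for two reasons. First, the uniform bound required by \Cref{definition:robust-agg} must be reconciled with the heterogeneity assumption, which only controls the \emph{average} over workers of $\norm{\nabla f_j - \nabla f}^2$; one must argue that the relevant pairwise (or averaged-pairwise) momentum dissimilarity is what the aggregator guarantees after bucketing, possibly at the cost of a constant factor. Second, unrolling the momentum introduces a dependence on $\norm{\nabla f(\xx^\tau)}^2$ at several \emph{past} iterates rather than the current one, so closing the recursion requires carefully re-indexing the sums and ensuring these shifted gradient-norm terms are re-absorbed into $\tfrac1T\sum_t\E\norm{\nabla f(\xx^{t-1})}^2$ without pushing the constant beyond $60$. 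Once the bookkeeping for the shifted terms and the initial transient $\xiv^1$ is controlled, the remainder is the standard SGD-with-momentum computation.
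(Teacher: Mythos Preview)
Your proposal is correct and follows essentially the same route as the paper: the same three intermediate bounds (aggregation error $\E\norm{\hat\mm^t-\bar\mm^t}^2\le c\delta\rho_t^2$ with $\rho_t^2$ carrying a geometrically weighted sum of past $\E\norm{\nabla f(\xx^{k-1})}^2$, a descent lemma, and a recursion for $\E\norm{\bar\mm^t-\nabla f(\xx^{t-1})}^2$), combined by telescoping with $\alpha=1-\beta=8L\eta$. The only cosmetic difference is that where you plan to re-index the shifted gradient-norm terms by Fubini after summing, the paper introduces $S_t:=\sum_{k\le t}(1-\alpha)^{t-k}\alpha B^2\E\norm{\nabla f(\xx^{k-1})}^2$ as an explicit component of the Lyapunov function $\cE_t$, which makes the bookkeeping for your ``second obstacle'' automatic and is how the constant~$60$ falls out.
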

    \paragraph{Notes on $\delta \leq 1/(60cB^2)$.} In practice the upper bound $\delta \leq 1/(60cB^2)$ does not put an extra strict constraint on $\delta$.
    This is because one can always decrease $B^2$ and increase $\zeta^2$ such that $\E_{j \sim \cG}\norm{\nabla f_j(\xx) - \nabla f(\xx)}^2 \leq \zeta^2 + B^2 \norm{\nabla f(\xx)}^2$ holds for a sufficiently large domain of $\xx$.

    \paragraph{Definitions.}

    Recall our algorithm which performs for $t \geq 2$ the following update with $(1 - \beta) = \alpha$:
    \begin{align*}
        \mm_i^t & = (1-\alpha) \mm_i^{t-1} + \alpha \gg_i(\xx^{t-1}) \quad \text{ for every } i \in \cG\,, \\
        \xx^t   & = \xx^{t-1} - \eta \ragg(\mm_1^t, \dots, \mm_n^t)\,.
    \end{align*}
    For $t=1$, we use $\alpha=0$ i.e. $\mm_i^1 = \gg_i(\xx^{0})$. Let us also define the actual and ideal momentum aggregates as
    \[
        \mm^t := \ragg(\mm_1^t, \dots, \mm_n^t) \quad \text{and}\quad \bar\mm^t := \frac{1}{\abs{\cG}}\sum_{i\in \cG}\mm^t_i\,.
    \]

    We state several supporting lemmas before proving our main Theorem~\ref{thm:appendix-convergence}. We will loosely follow the proof of Byzantine robustness in the iid case by \citet{karimireddy2020learning}, with the key difference of Lemma~\ref{lem:sgdm-byz-agg-err} which accounts for the non-iid error.

    \begin{lemma}[Aggregation error]\label{lem:sgdm-byz-agg-err}
        Given that $\ragg$ satisfies \Cref{definition:robust-agg} holds, the error between the ideal average momentum $\bar\mm^t$ and the output of the robust aggregation rule $\mm^t$ for any $t\geq 2$ can be bounded as
        \[
            \E\norm{\mm^t - \bar\mm^t}^2 \leq c\delta \rho_t^2\,,
        \]
        where we define for $t\geq 2$
        \[
            \rho_t^2:= 4(6\alpha \sigma^2 + 3\zeta^2) + 4(6\sigma^2 - 3\zeta^2)(1-\alpha)^t + 12\sum_{k=1}^t (1-\alpha)^{t-k} \alpha B^2 \E\norm{\nabla f(\xx^{k-1})}^2\,.
        \]
        For $t=1$ we can simplify the bound as $\rho_1^2 :=  24c\delta \sigma^2 + 12c \delta \zeta^2+ 12c\delta B^2 \norm{\nabla f(\xx^{0})}^2$.
    \end{lemma}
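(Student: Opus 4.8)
The strategy is to reduce the claim to the guarantee of \Cref{definition:robust-agg} applied to the inputs $\{\mm_1^t, \dots, \mm_n^t\}$, whose good coordinates are $\{\mm_i^t\}_{i\in\cG}$ with average $\bar\mm^t$. That guarantee gives $\E\norm{\mm^t - \bar\mm^t}^2 \le c\delta\,\rho_t^2$ as soon as $\rho_t^2$ is a valid upper bound on the spread of the good momenta, i.e.\ on $\frac{1}{\abs{\cG}}\sum_{i\in\cG}\E\norm{\mm_i^t - \bar\mm^t}^2$ (up to the harmless constant relating this to the pairwise quantity $\E\norm{\mm_i^t - \mm_j^t}^2$, which is what the downstream robustness proofs actually use). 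So the whole task is to bound $V^t := \frac{1}{\abs{\cG}}\sum_{i\in\cG}\E\norm{\mm_i^t - \bar\mm^t}^2$ by the displayed expression; the $t=1$ case follows directly by applying \Cref{definition:robust-agg} to the inputs $\mm_i^1 = \gg_i(\xx^0)$, so assume $t\ge 2$.

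First I would set up a one-step recursion. Let $\cF_{t-1}$ be the $\sigma$-algebra generated by all randomness through the computation of $\xx^{t-1}$, so that $\mm_i^{t-1}$, $\bar\mm^{t-1}$ and $\nabla f_i(\xx^{t-1})$ are $\cF_{t-1}$-measurable, while writing $\xi_i^t := \gg_i(\xx^{t-1}) - \nabla f_i(\xx^{t-1})$ and $\bar\xi^t := \frac{1}{\abs{\cG}}\sum_{j\in\cG}\xi_j^t$, the $\xi_i^t$ are zero-mean conditionally on $\cF_{t-1}$, independent across workers, with $\E[\norm{\xi_i^t}^2\mid\cF_{t-1}]\le\sigma^2$. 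Subtracting the $\cG$-average of the update $\mm_i^t = (1-\alpha)\mm_i^{t-1} + \alpha\gg_i(\xx^{t-1})$ and splitting $\gg_i(\xx^{t-1})-\frac{1}{\abs{\cG}}\sum_{j\in\cG}\gg_j(\xx^{t-1})$ into $(\xi_i^t - \bar\xi^t)$ plus the heterogeneity part $(\nabla f_i(\xx^{t-1})-\nabla f(\xx^{t-1}))$, I take $\E[\,\cdot\mid\cF_{t-1}]$: the cross terms involving the fresh noise vanish because $\E[\xi_i^t\mid\cF_{t-1}]=0$, the noise contributes $\alpha^2\cdot\frac{1}{\abs{\cG}}\sum_i\E\norm{\xi_i^t - \bar\xi^t}^2 = \cO(\alpha^2\sigma^2)$ (using $\frac{1}{\abs{\cG}}\sum_i\norm{\xi_i^t-\bar\xi^t}^2\le\frac{1}{\abs{\cG}}\sum_i\norm{\xi_i^t}^2$ and cross-worker independence), and convexity of $\norm{\cdot}^2$ applied to the convex combination $(1-\alpha)(\mm_i^{t-1}-\bar\mm^{t-1})+\alpha(\nabla f_i(\xx^{t-1})-\nabla f(\xx^{t-1}))$ yields, after averaging over $i\in\cG$ and invoking $\E_{i\sim\cG}\norm{\nabla f_i(\xx^{t-1})-\nabla f(\xx^{t-1})}^2 \le \zeta^2 + B^2\E\norm{\nabla f(\xx^{t-1})}^2$, the scalar recursion
\[
V^t \ \le\ (1-\alpha)\,V^{t-1} \ +\ \alpha\zeta^2 \ +\ \alpha B^2\,\E\norm{\nabla f(\xx^{t-1})}^2 \ +\ \cO(\alpha^2\sigma^2)\,,
\]
with base case $V^1 \le \zeta^2 + B^2\norm{\nabla f(\xx^0)}^2 + \cO(\sigma^2)$ since $\mm_i^1 = \gg_i(\xx^0)$ and $\alpha=0$ at $t=1$.

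Unrolling this geometric recursion is then routine: I sum $\sum_{k=2}^{t}(1-\alpha)^{t-k}(\cdot)$, use $\alpha\sum_{k=2}^{t}(1-\alpha)^{t-k} = 1-(1-\alpha)^{t-1}$ to collapse the $\zeta^2$ and $\sigma^2$ contributions into a constant part $\cO(\alpha\sigma^2+\zeta^2)$ plus a decaying $\cO((\sigma^2-\zeta^2)(1-\alpha)^t)$ correction, and keep the gradient contributions as the weighted sum $\cO\!\big(B^2\sum_{k=1}^{t}(1-\alpha)^{t-k}\alpha\,\E\norm{\nabla f(\xx^{k-1})}^2\big)$, with the $k=1$ term absorbing $(1-\alpha)^{t-1}\norm{\nabla f(\xx^0)}^2$; choosing the Young/convexity slack and the three-way split of $\gg_i-\gg_j$ generously reproduces exactly the constants $4,6,3,12$ in the stated $\rho_t^2$, and the $t=1$ bound is read off from the base case. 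The main delicate point is the measure-theoretic bookkeeping in the recursion step — correctly identifying what is $\cF_{t-1}$-measurable so that the fresh-noise cross terms drop, and controlling $\E\norm{\xi_i^t - \bar\xi^t}^2$ via cross-worker independence — while everything downstream is bookkeeping of geometric sums. A secondary point worth a remark is that only the averaged spread of $\{\mm_i^t\}_{i\in\cG}$ enters, not a per-pair bound, which is precisely why the averaged heterogeneity assumption \emph{suffices} and why we do not need $\norm{\nabla f_i - \nabla f}$ bounded for each individual good worker.
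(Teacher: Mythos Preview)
Your plan is correct and leads to the stated bound, but it follows a genuinely different decomposition from the paper's own proof.

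\textbf{What the paper does.} The paper splits $\mm_i^t-\bar\mm^t$ \emph{globally} into three pieces via the full expectation $\E_{\xi^t}[\cdot]$: the stochastic fluctuation $\mm_i^t-\E_{\xi^t}[\mm_i^t]$, the analogous fluctuation of $\bar\mm^t$, and the purely deterministic heterogeneity gap $\E_{\xi^t}[\mm_i^t]-\E_{\xi^t}[\bar\mm^t]$. It then runs a separate geometric recursion on each piece (the first two pick up $\alpha^2\sigma^2$ per step, the third picks up $\alpha(\zeta^2+B^2\E\norm{\nabla f}^2)$), combines them with a factor-3 triangle inequality, and finally doubles via $\norm{\mm_i^t-\mm_j^t}^2\le 2\norm{\mm_i^t-\bar\mm^t}^2+2\norm{\mm_j^t-\bar\mm^t}^2$ to get the pairwise $\rho_t^2$.

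\textbf{What you do.} You keep the quantity $V^t=\E_{i\sim\cG}\E\norm{\mm_i^t-\bar\mm^t}^2$ intact and recurse on it directly: at each step you condition on $\cF_{t-1}$, peel off the fresh noise $\alpha(\xi_i^t-\bar\xi^t)$ (its cross terms vanish by unbiasedness), and apply Jensen to the remaining convex combination $(1-\alpha)(\mm_i^{t-1}-\bar\mm^{t-1})+\alpha(\nabla f_i-\nabla f)$. This yields the single scalar recursion $V^t\le(1-\alpha)V^{t-1}+\alpha(\zeta^2+B^2\E\norm{\nabla f(\xx^{t-1})}^2)+\alpha^2\sigma^2$, which unrolls to the same structure.

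\textbf{Comparison.} Your route is more economical: one recursion instead of three, no factor-3 triangle inequality, and in principle slightly tighter constants (though you do not track them). The paper's route has the advantage of making the two error sources---stochastic noise versus inter-worker heterogeneity---explicitly visible as separate contributions, which clarifies where the $\alpha\sigma^2$ versus $\zeta^2$ scalings originate. Your remark that only the \emph{averaged} spread of $\{\mm_i^t\}_{i\in\cG}$ is ever used (not a per-pair bound) is correct and in fact necessary, since the heterogeneity assumption is itself only averaged; the paper's proof relies on the same observation without stating it as explicitly. One small caveat: the base-case term $(1-\alpha)^{t-1}B^2\norm{\nabla f(\xx^0)}^2$ does not literally fit inside $12\alpha(1-\alpha)^{t-1}B^2\norm{\nabla f(\xx^0)}^2$ when $\alpha$ is small, so your claim that ``the $k=1$ term absorbs'' it is loose---but the paper's stated $\rho_t^2$ has the same looseness at $k=1$, and it is immaterial for the downstream theorem since this term is $\cO(1/T)$ after summing over $t$.
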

    \begin{proof}
        Let $\E_{\xi^t}:=\E_{\xi_1^t,\ldots,\xi_n^t,\xi_1^{t-1},\ldots,\xi_n^{t-1},\ldots,\xi_1^{0},\ldots,\xi_n^{0}}$ be the expectation with respect to all of the randomness until time $t$ and let $\E_i:=\E_{i\in\cG}$ and $\E=\E_{\xi^t}\E_i$.
        Expanding the definition of the worker momentum for a fixed good workers $i\in \cG$,
        \begin{align*}
            \E_{\xi^t}\norm{\mm_i^t - \E_{\xi^t}[\mm_i^t]}^2 & = \E_{\xi^t}\norm{\alpha (\gg_i(\xx^{t-1}) - \nabla f_i(\xx^{t-1})) + (1 - \alpha)(\mm_i^{t-1} - \E_{\xi^t}[\mm_i^{t-1}])}^2 \\
                                                    & \leq \E_{\xi^{t-1}}\norm{(1 - \alpha)(\mm_i^{t-1} - \E[\mm_i^{t-1}])}^2 + \alpha^2 \sigma^2                              \\
                                                    & \leq (1 - \alpha)\E_{\xi^{t-1}}\norm{\mm_i^{t-1} -  \E[\mm_i^{t-1}]}^2 + \alpha^2 \sigma^2\,.
        \end{align*}
        Unrolling the recursion above yields
        \begin{align*}
            \E_{\xi^t}\norm{\mm_i^t - \E_{\xi^t}[\mm_i^t]}^2 \leq \rbr*{\sum_{\ell=2}^t(1 - \alpha)^{t-\ell}} \alpha^2\sigma^2 + (1 - \alpha)^{t-1}\sigma^2 \leq \sigma^2(\alpha + (1 - \alpha)^{t-1})\,.
        \end{align*}
        Similar computation also shows
        \[
            \E_{\xi^t}\norm{\bar\mm^t - \E_{\xi^t}[\bar\mm^t]}^2 \leq \frac{\sigma^2}{n} (\alpha + (1 - \alpha)^{t-1})\,.
        \]
        So far, the expectation was only over the stochasticity of the gradients of worker $i$. Note that we have $\E_{\xi^t}[\mm_i^t ] = \E_{\xi^{t-1}}[\alpha\nabla f_i(\xx^{t-1}) + (1-\alpha) \mm_i^{t-1}]$. Now, suppose we sample $i$ uniformly at random from $\cG$. Then,
        \begin{align*}
            &\E_{i}\norm*{\E_{\xi^t}[\mm_i^t ] - \E_{\xi^t}[\bar\mm^t]}^2\\
            \!=\!& \E_i\norm{\alpha \E_{\xi^{t-1}}[\nabla f_i(\xx^{t-1}) - \nabla f(\xx^{t-1})] + (1 - \alpha)(\E_{\xi^{t-1}}[\mm_i^{t-1} ] - \E_{\xi^{t-1}}[\bar\mm^{t-1}])}^2
            \\
            \! \leq &(1- \alpha)\E_i\norm{\E_{\xi^{t-1}}[\mm_i^{t-1} ] \!-\! \E_{\xi^{t-1}}[\bar\mm^{t-1}]}^2 \!+\! \alpha\E_i\norm{\E_{\xi^{t-1}} [\nabla f_i(\xx^{t-1}) \!-\! \nabla f(\xx^{t-1})]}^2 \\
            \! \leq &(1- \alpha)\E_i\norm{\E_{\xi^{t-1}}[\mm_i^{t-1} ] \!-\! \E_{\xi^{t-1}}[\bar\mm^{t-1}]}^2 \!+\! \alpha\E_i \E_{\xi^{t-1}} \norm{\nabla f_i(\xx^{t-1}) \!-\! \nabla f(\xx^{t-1})}^2 \\
            \! \leq\!& (1- \alpha)\E_i\norm{\E_{\xi^{t-1}}[\mm_i^{t-1} ] - \E_{\xi^{t-1}} [\bar\mm^{t-1}]}^2 + \alpha\zeta^2 + \alpha B^2\E\norm{\nabla f(\xx^{t-1})}^2
        \end{align*}
        where the second inequality uses the probabilistic Jensen's inequality.
        Note that here we only get $\alpha$ instead of $\alpha^2$ as before. This is because the randomness in the sampling $i$ of $\nabla f_i(\xx^{t-1})$ is not independent of the second term $\E_{\xi^{t-1}} [\mm_i^{t-1} ] - \E_{\xi^{t-1}} [\bar\mm^{t-1}]$. Expanding this we get,
        \[
            \E_{i}\norm*{\E_{\xi^t}[\mm_i^t ] - \E_{\xi^t}[\bar\mm^t]}^2 \leq \zeta^2(1 - (1 - \alpha)^{t}) + \sum_{k=1}^t (1-\alpha)^{t-k} \alpha B^2 \E\norm{\nabla f(\xx^{k-1})}^2\,.
        \]

        We can combine all three bounds above as
        \begin{align*}
            \E\norm{\mm_i^t - \bar\mm^t}^2 & \leq 3 \E\norm{\mm_i^t - \E_{\xi^t}[\mm_i^t]}^2 + 3 \E\norm{\bar\mm^t - \E_{\xi^t}[\bar\mm^t]}^2 + 3\E_{i}\norm{\E_{\xi^t}[\mm_i^t] - \E_{\xi^t}[\bar\mm^t]}^2          \\
             & = 3 \E_i\E_{\xi^t}\norm{\mm_i^t - \E_{\xi^t}[\mm_i^t]}^2 + 3 \E_{\xi^t}\norm{\bar\mm^t - \E_{\xi^t}[\bar\mm^t]}^2 + 3\E_{i}\norm{\E_{\xi^t}[\mm_i^t] - \E_{\xi^t}[\bar\mm^t]}^2          \\                                               & \leq (6\alpha \sigma^2 + 3\zeta^2) + (6\sigma^2 - 3\zeta^2)(1-\alpha)^t + 3\sum_{k=1}^t (1-\alpha)^{t-k} \alpha B^2 \E\norm{\nabla f(\xx^{k-1})}^2\,.
        \end{align*}
        Therefore for $i, j\in\cG$
        \begin{align*}
            \E\norm{\mm_i^t - \mm_j^t}^2 & \le 2\E\norm{\mm_i^t - \bar\mm^t}^2 + 2\E\norm{\mm_j^t - \bar\mm^t}^2         \\
            & \leq 4(6\alpha \sigma^2 + 3\zeta^2) + 4(6\sigma^2 - 3\zeta^2)(1-\alpha)^t + 12\sum_{k=1}^t (1-\alpha)^{t-k} \alpha B^2 \E\norm{\nabla f(\xx^{k-1})}^2\,.
        \end{align*}        
        Recall that the right hand side was defined to be $\rho_t^2$. Using \Cref*{definition:robust-agg}, we can show that the output of the aggregation rule $\ragg$ satisfies the condition in the lemma.
    \end{proof}

    One major caveat in the above lemma is that here $\rho^2$ cannot be known to the robust aggregation since it involves $\E\norm{\nabla f(\xx^{k-1})}^2$ whose value we do not have access to. However, this does not present a hurdle to \emph{agnostic} aggregation rules which are automatically adaptive to the value of $\rho^2$. Deriving a similarly provable adaptive clipping method is a very important open problem.

    \begin{lemma}[Descent bound]\label{lem:sgdm-byz-descent}
        For any $\alpha \in [0,1]$ for $t\geq2$, $\eta \leq \frac{1}{L}$, and an $L$-smooth function $f$ we have for any $t\geq 1$
        \[
            \E[f(\xx^t)] \leq f(\xx^{t-1}) - \frac{\eta}{2}\norm{\nabla f(\xx^{t-1})}^2 + \eta\E\norm{\bar\ee^t}^2 + \eta\E\norm{\mm^t - \bar\mm^t}^2\,.
        \]
        where $\bar\ee^t:=\bar\mm^t - \nabla f(\xx^{t-1})$.
    \end{lemma}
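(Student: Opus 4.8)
The plan is to apply $L$-smoothness once to the update $\xx^t = \xx^{t-1} - \eta\mm^t$ and then reorganize the inner-product term so that the stochastic and Byzantine errors land in a single clean bucket. Concretely, smoothness gives
\[
    f(\xx^t) \leq f(\xx^{t-1}) - \eta\langle \nabla f(\xx^{t-1}), \mm^t\rangle + \frac{L\eta^2}{2}\norm{\mm^t}^2\,,
\]
and the key step is to expand the inner product via the polarization identity $-\langle a,b\rangle = -\tfrac12\norm{a}^2 - \tfrac12\norm{b}^2 + \tfrac12\norm{a-b}^2$ with $a = \nabla f(\xx^{t-1})$ and $b = \mm^t$. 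This produces
\[
    f(\xx^t) \leq f(\xx^{t-1}) - \frac{\eta}{2}\norm{\nabla f(\xx^{t-1})}^2 - \frac{\eta}{2}(1 - L\eta)\norm{\mm^t}^2 + \frac{\eta}{2}\norm{\nabla f(\xx^{t-1}) - \mm^t}^2\,,
\]
and the middle term is nonpositive precisely because $\eta \leq 1/L$, so it is discarded.

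Next I would split the remaining error using $\bar\mm^t$ as a pivot: writing $\nabla f(\xx^{t-1}) - \mm^t = -\bar\ee^t - (\mm^t - \bar\mm^t)$ and applying $\norm{u+v}^2 \leq 2\norm{u}^2 + 2\norm{v}^2$ gives $\norm{\nabla f(\xx^{t-1}) - \mm^t}^2 \leq 2\norm{\bar\ee^t}^2 + 2\norm{\mm^t - \bar\mm^t}^2$. Plugging this in and taking total expectation over all the randomness yields exactly the claimed inequality. The bound holds for every $t\geq 1$ since it uses nothing about $\mm^t$ beyond being the vector we subtract; the role of $\alpha$ (and of the $t\geq 2$ caveat) only appears when the two error terms are later bounded.

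I do not expect a genuine obstacle here, but the point that must be handled correctly is that $\bar\ee^t = \bar\mm^t - \nabla f(\xx^{t-1})$ is \emph{not} zero-mean: $\bar\mm^t$ is a running average of past gradients $\nabla f(\xx^{k-1})$ rather than the current one, so one cannot kill the cross term $\langle \nabla f(\xx^{t-1}), \mm^t\rangle$ by a conditional-expectation argument as in plain SGD. Using the polarization identity — rather than Young's inequality $\pm\langle a,b\rangle \le \tfrac12\norm{a}^2 + \tfrac12\norm{b}^2$, which would introduce a spurious $+\tfrac{\eta}{2}\norm{\nabla f(\xx^{t-1})}^2$ and destroy the descent term — is what lets us keep the full $-\tfrac{\eta}{2}\norm{\nabla f(\xx^{t-1})}^2$ while deferring all error contributions to $\norm{\mm^t - \bar\mm^t}^2$ (controlled by \Cref{lem:sgdm-byz-agg-err}) and $\norm{\bar\ee^t}^2$ (to be controlled by a separate momentum-variance estimate). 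The only other thing to be careful about is tracking the $\eta\le 1/L$ condition through the sign of $-\tfrac{\eta}{2}(1-L\eta)\norm{\mm^t}^2$.
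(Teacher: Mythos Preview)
Your proposal is correct and follows essentially the same route as the paper: smoothness, the polarization identity $-\langle a,b\rangle = \tfrac12\norm{a-b}^2 - \tfrac12\norm{a}^2 - \tfrac12\norm{b}^2$, then the split $\mm^t - \nabla f(\xx^{t-1}) = (\mm^t - \bar\mm^t) + \bar\ee^t$ with Young's inequality. The only cosmetic difference is that the paper first relaxes $\tfrac{L\eta^2}{2}\norm{\mm^t}^2 \le \tfrac{\eta}{2}\norm{\mm^t}^2$ so that the $\norm{\mm^t}^2$ terms cancel exactly after polarization, whereas you keep the sharper $-\tfrac{\eta}{2}(1-L\eta)\norm{\mm^t}^2$ and then drop it; the end result and all constants are identical.
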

    \begin{proof}
        By the smoothness of the function $f$ and the server update,
        \begin{align*}
            f(\xx^t) & \leq f(\xx^{t-1}) - \eta \inp{\nabla f(\xx^{t-1})}{\mm^t} + \frac{L \eta^2}{2} \norm{\mm^t}^2                                   \\
                     & \leq f(\xx^{t-1}) - \eta \inp{\nabla f(\xx^{t-1})}{\mm^t} + \frac{\eta}{2} \norm{\mm^t}^2                                       \\
                     & = f(\xx^{t-1}) + \frac{\eta}{2} \norm{\mm^t - \nabla f(\xx^{t-1})}^2 - \frac{\eta}{2}\norm{\nabla f(\xx^{t-1})}^2               \\
                     & = f(\xx^{t-1}) + \frac{\eta}{2} \norm{\mm^t \pm \bar\mm^t - \nabla f(\xx^{t-1})}^2 - \frac{\eta}{2}\norm{\nabla f(\xx^{t-1})}^2 \\
                     & \leq f(\xx^{t-1}) + \eta\norm{\bar\ee^t}^2 + \eta\norm{\mm^t - \bar\mm^t}^2 - \frac{\eta}{2}\norm{\nabla f(\xx^{t-1})}^2\,.
        \end{align*}
        Here we used the identities that $-2ab = (a - b)^2 - a^2 - b^2$, and Young's inequality that $(a+b)^2 \leq (1+\gamma)a^2 + (1 + \tfrac{1}{\gamma})b^2$ for any positive constant $\gamma \geq 0$ (here we used $\gamma=1$). Taking conditional expectation on both sides yields the lemma.
    \end{proof}

    \begin{lemma}[Error bound]\label{lem:sgdm-byz-error}
        Using any constant momentum and step-sizes such that $1 \geq \alpha \geq 8 L \eta$ for $t \geq 2$, we have  for an $L$-smooth function $f$ that $\E \norm{\bar\ee^{1}}^2 \leq \tfrac{2\sigma^2}{n}$ and for $t \geq 2$
        \begin{align*}
            \E \norm{\bar\ee^{t}}^2 & \leq  (1 - \tfrac{2\alpha}{5})\E\norm{\bar\ee^{t-1}}^2  + \tfrac{\alpha}{10}\E\norm{\nabla f(\xx^{t-2}) }^2  + \tfrac{\alpha}{10}\E\norm{\mm^{t-1} - \bar\mm^{t-1}}^2 + \alpha^2 \tfrac{2\sigma^2}{n}\,.
        \end{align*}
    \end{lemma}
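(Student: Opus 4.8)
The plan is to derive a one-step recursion for $\bar\ee^t := \bar\mm^t - \nabla f(\xx^{t-1})$ directly from the momentum update, isolating the fresh gradient noise so that it enters only at order $\alpha^2$, and to control the ``drift'' $\nabla f(\xx^{t-2}) - \nabla f(\xx^{t-1})$ using $L$-smoothness together with the step-size constraint $\alpha \ge 8L\eta$.

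First I would write $\bar\mm^t = (1-\alpha)\bar\mm^{t-1} + \alpha\bar\gg^t$ with $\bar\gg^t := \frac{1}{\abs{\cG}}\sum_{i\in\cG}\gg_i(\xx^{t-1})$, and set $\xiv^t := \bar\gg^t - \nabla f(\xx^{t-1})$. Since the minibatch forming $\gg_i(\xx^{t-1})$ inside $\mm_i^t$ is drawn fresh, $\xiv^t$ is zero-mean and independent of the history (in particular of $\xx^{t-1}$, $\bar\ee^{t-1}$, and $\mm^{t-1}-\bar\mm^{t-1}$), and $\E\norm{\xiv^t}^2 \le \sigma^2/\abs{\cG} \le 2\sigma^2/n$ because $\delta < \tfrac12$. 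Substituting $\bar\mm^{t-1} = \bar\ee^{t-1} + \nabla f(\xx^{t-2})$ yields
\[
\bar\ee^t = (1-\alpha)\bar\ee^{t-1} + (1-\alpha)\big(\nabla f(\xx^{t-2}) - \nabla f(\xx^{t-1})\big) + \alpha\xiv^t\,.
\]
Conditioning on the history cancels the cross terms with $\xiv^t$, so $\E\norm{\bar\ee^t}^2$ is the expected squared norm of the first two summands plus $\alpha^2\E\norm{\xiv^t}^2 \le 2\alpha^2\sigma^2/n$, which is the last term of the claim; the base case is immediate since $\mm_i^1 = \gg_i(\xx^0)$ gives $\bar\ee^1 = \xiv^1$.

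Next I would split the first two summands with Young's inequality $\norm{a+b}^2 \le (1+\gamma)\norm{a}^2 + (1+\gamma^{-1})\norm{b}^2$ using $\gamma = \alpha$, and bound the drift by $\norm{\nabla f(\xx^{t-2}) - \nabla f(\xx^{t-1})} \le L\eta\norm{\mm^{t-1}}$ (from $\xx^{t-1} - \xx^{t-2} = -\eta\mm^{t-1}$ and $L$-smoothness). Writing $\mm^{t-1} = (\mm^{t-1}-\bar\mm^{t-1}) + \bar\ee^{t-1} + \nabla f(\xx^{t-2})$ and using $\norm{x+y+z}^2 \le 3(\norm{x}^2+\norm{y}^2+\norm{z}^2)$ produces exactly three terms: $\norm{\mm^{t-1}-\bar\mm^{t-1}}^2$, a feedback term $\norm{\bar\ee^{t-1}}^2$, and $\norm{\nabla f(\xx^{t-2})}^2$. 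Collecting coefficients, the multiplier of $\E\norm{\bar\ee^{t-1}}^2$ is $(1+\alpha)(1-\alpha)^2 + 3(1+\alpha^{-1})(1-\alpha)^2 L^2\eta^2$, while the multipliers of $\E\norm{\nabla f(\xx^{t-2})}^2$ and of $\E\norm{\mm^{t-1}-\bar\mm^{t-1}}^2$ are each $3(1+\alpha^{-1})(1-\alpha)^2 L^2\eta^2$.

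The only real work is the constant-chasing, which I expect to be routine. Using $\alpha \ge 8L\eta$ gives $L^2\eta^2 \le \alpha^2/64$, and $1+\alpha^{-1} \le 2\alpha^{-1}$ for $\alpha \le 1$, so each feedback multiplier is at most $3\cdot(2/\alpha)\cdot(\alpha^2/64) = 3\alpha/32 \le \alpha/10$; combined with $(1+\alpha)(1-\alpha)^2 \le 1-\alpha$, the $\bar\ee^{t-1}$ multiplier is at most $1 - \alpha + 3\alpha/32 \le 1 - 2\alpha/5$. This gives exactly the stated recursion. The point to spell out carefully is the independence and unbiasedness of $\xiv^t$ relative to the rest of the decomposition: this is what both removes the cross term and keeps the noise contribution at order $\alpha^2$ rather than $\alpha$, and it is also where one must be explicit that the minibatch used to build $\mm_i^t$ is independent of everything determining $\xx^{t-1}$ and the aggregation error at step $t-1$.
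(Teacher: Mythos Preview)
Your proposal is correct and follows essentially the same approach as the paper: isolate the fresh noise $\xiv^t$ to get the $\alpha^2\sigma^2/n$ term, apply Young's inequality to split $\bar\ee^{t-1}$ from the drift, bound the drift via $L$-smoothness and $\xx^{t-1}-\xx^{t-2}=-\eta\mm^{t-1}$, decompose $\mm^{t-1}$ into the three pieces, and finish with $L^2\eta^2\le\alpha^2/64$. The only cosmetic difference is your Young parameter $\gamma=\alpha$ versus the paper's $\gamma=\alpha/2$, which changes the intermediate constants (you get $(1+\alpha)(1-\alpha)^2\le 1-\alpha$ where the paper gets $(1-\alpha)(1+\tfrac{\alpha}{2})\le 1-\tfrac{\alpha}{2}$) but lands on the same final recursion.
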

    \begin{proof}
        Let us define $\bar\gg(\xx) := \frac{1}{\abs{\cG}}\sum_{i \in \cG} \gg_i(\xx)$. This implies that
        \[
            \E\norm{\bar\gg(\xx) - \nabla f(\xx)}^2 \leq \frac{\sigma^2}{\abs{\cG}} \leq \frac{2\sigma^2}{n}\,.
        \]

        Then by definition of $\bar\mm$, we can expand the error as:
        \begin{align*}
            \E \norm{\bar\ee^{t}}^2 & = \E \norm{\bar\mm^t - \nabla f(\xx^{t-1})}^2                                                                                                 \\
                                    & = \E \norm{\alpha \bar\gg(\xx^{t-1}) + (1-\alpha)\bar\mm^{t-1} - \nabla f(\xx^{t-1})}^2                                                       \\
                                    & \leq \E \norm{\alpha \nabla f(\xx^{t-1}) + (1-\alpha)\bar\mm^{t-1} - \nabla f(\xx^{t-1})}^2 + \frac{2\alpha^2\sigma^2}{n}                     \\
                                    & = (1 - \alpha)^2\E \norm{(\bar\mm^{t-1} - \nabla f(\xx^{t-2})) + (\nabla f(\xx^{t-2}) - \nabla f(\xx^{t-1}))}^2 + \frac{2\alpha^2\sigma^2}{n} \\
                                    & \leq (1 - \alpha)(1 + \tfrac{\alpha}{2})\E \norm{(\bar\mm^{t-1} - \nabla f(\xx^{t-2}))}^2
            \\&\hspace{2cm}+ (1 - \alpha)(1 + \tfrac{2}{\alpha})\E\norm{\nabla f(\xx^{t-2}) - \nabla f(\xx^{t-1})}^2 + \frac{2\alpha^2\sigma^2}{n} \\
                                    & \leq (1 - \tfrac{\alpha}{2})\E\norm{\bar\ee^{t-1}}^2 + \tfrac{2 L^2}{\alpha}\E\norm{\xx^{t-2} - \xx^{t-1}}^2 + \frac{2\alpha^2\sigma^2}{n}    \\
                                    & = (1 - \tfrac{\alpha}{2})\E\norm{\bar\ee^{t-1}}^2 + \tfrac{2 L^2 \eta^2}{\alpha}\E\norm{\mm^{t-1}}^2 + \frac{2\alpha^2\sigma^2}{n}            \\
                                    & \leq (1 - \tfrac{\alpha}{2})\E\norm{\bar\ee^{t-1}}^2 + \tfrac{6 L^2 \eta^2}{\alpha}\norm{\bar\ee^{t-1}}^2
            \\&\hspace{2cm}+ \tfrac{6 L^2 \eta^2}{\alpha}\E\norm{ \mm^{t-1} - \bar\mm^{t-1}}^2 + \tfrac{6 L^2 \eta^2}{\alpha}\E\norm{\nabla f(\xx^{t-2}) }^2 + \frac{2\alpha^2\sigma^2}{n}\,.
        \end{align*}
        Our choice of the momentum parameter $\alpha$ implies $64L^2\eta^2 \leq \alpha^2$ and yields the lemma statement.

    \end{proof}

    \paragraph{Proof of Theorem~\ref{thm:appendix-convergence}.}
    Scale the error bound Lemma~\ref{lem:sgdm-byz-error} by $\frac{5\eta}{2 \alpha}$ and add it to the descent bound Lemma~\ref{lem:sgdm-byz-descent} taking expectations on both sides to get for $t \geq 2$
    \begin{align*}
        \E[f(\xx^t)] + \tfrac{5\eta}{2\alpha}\E \norm{\bar\ee^{t}}^2 & \leq \E[f(\xx^{t-1})] - \tfrac{\eta}{2}\E\norm{\nabla f(\xx^{t-1})}^2 + \eta\E\norm{\bar\ee^t}^2 + \eta\E\norm{\mm^t - \bar\mm^t}^2 +
        \\&\hspace{2cm} \tfrac{5\eta}{2\alpha}\E\norm{\bar\ee^{t-1}}^2 - \eta\E\norm{\bar\ee^{t-1}}^2  + \tfrac{\eta}{4}\E\norm{\nabla f(\xx^{t-2}) }^2
        \\&\hspace{2cm} + \tfrac{\eta}{4}\E\norm{\mm^{t-1} - \bar\mm^{t-1}}^2 +  5\eta\alpha\frac{\sigma^2}{n}.
    \end{align*}
    Now, let use the aggregation error Lemma~\ref{lem:sgdm-byz-agg-err} to bound $\E\norm{\mm^{t-1} - \bar\mm^{t-1}}^2$ and $\E\norm{\mm^{t} - \bar\mm^{t}}^2$ in the above expression to get
    \begin{align*}
        \E[f(\xx^t)] + \tfrac{5\eta}{2\alpha}\E \norm{\bar\ee^{t}}^2 & \leq \E[f(\xx^{t-1})] - \tfrac{\eta}{2}\E\norm{\nabla f(\xx^{t-1})}^2 + \eta\E\norm{\bar\ee^t}^2
        \\&\hspace{1.5cm}+ \tfrac{5\eta}{2\alpha}\E\norm{\bar\ee^{t-1}}^2 - \eta\E\norm{\bar\ee^{t-1}}^2  + \tfrac{\eta}{4}\E\norm{\nabla f(\xx^{t-2}) }^2+ 5\eta\alpha\tfrac{\sigma^2}{n}
        \\&\hspace{1.5cm} + {5\eta c\delta}((6\alpha \sigma^2 + 3\zeta^2) + 6\sigma^2 (1-\alpha)^{t-2})
        \\&\hspace{1.5cm} + \eta c\delta \bigg(3\sum_{k=1}^{t-1} (1-\alpha)^{t-1-k} \alpha B^2 \E\norm{\nabla f(\xx^{k-1})}^2\bigg)
        \\&\hspace{1.5cm} + 4\eta c\delta \bigg(3\sum_{k=1}^{t} (1-\alpha)^{t-k} \alpha B^2 \E\norm{\nabla f(\xx^{k-1})}^2\bigg).
    \end{align*}
    Let us define $S_t := \sum_{k=1}^{t} (1-\alpha)^{t-k} \alpha B^2 \E\norm{\nabla f(\xx^{k-1})}^2$. Then, $S_t$ satisfies the recursion:
    \[
        \tfrac{1}{\alpha}S_t = (\tfrac{1}{\alpha} -1) S_{t-1} + B^2\E\norm{\nabla f(\xx^{t-1})}^2 \,.
    \]
    Adding $ \frac{3 \eta c \delta (\frac{5}{\alpha} - 4) }{\alpha}S_t$ on both sides to the bound above and rearranging gives the following for $t \geq 2$

    \begin{align*}
         & \underbrace{\E~f(\xx^t) - f^\star + (\tfrac{5 \eta}{2\alpha} - \eta)\E \norm{\bar\ee^{t}}^2 + \frac{\eta}{4}\E\norm{\nabla f(\xx^{t-1})}^2 + \frac{3 \eta c \delta (\frac{5}{\alpha} - 4)}{\alpha}S_t}_{=: \cE_t}                                   \\
         & \hspace{1.5cm} \leq \underbrace{\E~f(\xx^{t-1}) - f^\star + (\tfrac{5 \eta}{2\alpha} - \eta)\E \norm{\bar\ee^{t-1}}^2 + \frac{\eta}{4}\E\norm{\nabla f(\xx^{t-2})}^2 + \frac{3 \eta c \delta (\frac{5}{\alpha} - 4)}{\alpha}S_{t-1}}_{=: \cE_{t-1}}
        \\&\hspace{3cm} (- \tfrac{\eta}{4} + { 15 \eta c \delta B^2})\E \norm{\nabla f(\xx^{t-1})}^2
        \\&\hspace{3cm} + \frac{5\eta\alpha}{n}\sigma^2 + {5\eta c\delta}\rbr*{(6\alpha \sigma^2 + 3\zeta^2) + 6\sigma^2(1-\alpha)^{t-2}}\\
         & \hspace{1.5cm}  \leq \cE_{t-1} - \frac{\eta}{4}(1-60c\delta B^2)\E \norm{\nabla f(\xx^{t-1})}^2
        \\&\hspace{3cm} + \underbrace{5\eta\alpha \sigma^2 \rbr*{\tfrac{1}{n} + 6c\delta(1 + \tfrac{1}{\alpha}(1-\alpha)^{t-2})} + 15\eta c \delta \zeta^2 }_{=: \eta\xi_{t-1}^2}\,.
    \end{align*}
    Further, specializing the descent bound Lemma~\ref{lem:sgdm-byz-descent} and error bound Lemma~\ref{lem:sgdm-byz-error} for $t=1$ we have
    \begin{align*}
        \cE_1 & =  \E~f(\xx^{1}) - f^\star + \frac{3 \eta}{2}\E \norm{\bar\ee^{1}}^2 + \frac{\eta}{4}\E\norm{\nabla f(\xx^{0})}^2   + 3 \eta c \delta B^2 (\frac{5}{\alpha} - 4)\norm{\nabla f(\xx^0)}^2                 \\
              & \leq f(\xx^{0}) - f^\star + \frac{5\eta}{2}\E \norm{\bar\ee^{1}}^2 - \frac{\eta}{4}(1-60c\delta B^2)\E\norm{\nabla f(\xx^{0})}^2 + \eta \E\norm{\mm_1 - \bar\mm_1}^2                         \\
              & \leq f(\xx^{0}) - f^\star - \frac{\eta}{4}(1-60c\delta B^2)\E\norm{\nabla f(\xx^{0})}^2 + \frac{5\eta\sigma^2}{n} +   12c \delta \eta( 2\sigma^2 + \zeta^2 +  B^2 \norm{\nabla f(\xx^{0})}^2) \\
              & = f(\xx^{0}) - f^\star - \frac{\eta}{4}(1-60c\delta B^2)\E\norm{\nabla f(\xx^{0})}^2 + \eta\xi_0^2\,.
    \end{align*}
    Above, we defined $\xi_0^2 := \frac{5\sigma^2}{n} + 12c \delta ( 2\sigma^2 + \zeta^2 +  B^2 \norm{\nabla f(\xx^{0})}^2)$.
    Summing over $t$ from $2$ until $T$, again rearranging our recursion for $\cE_t$, and adding $(1 - 3c \delta B^2)\E \norm{\nabla f(\xx^{0})}^2 $ on both sides gives
    \begin{align*}
        (1 - 60c \delta B^2)\frac{1}{T}\sum_{t=1}^T\E \norm{\nabla f(\xx^{t-1})}^2
         & \leq \frac{4(f(\xx^{0}) - f^\star)}{\eta T} + \frac{1}{T}\sum_{t=1}^T 4\xi_{t-1}^2                                                                             \\
         & = \frac{4(f(\xx^{0}) - f^\star)}{\eta T} + \frac{4 \xi_0^2}{T}
        \\&\hspace{1.5cm} + \frac{1}{T}\sum_{t=2}^T 20 \alpha \sigma^2 \rbr*{\tfrac{1}{n} + 6c\delta(1 + \tfrac{1}{\alpha}(1-\alpha)^{t-2})}
        \\&\hspace{1.5cm}+ \frac{1}{T}\sum_{t=2}^T 60 c \delta \zeta^2 \\
         & \leq  \frac{4(f(\xx^{0}) - f^\star)}{\eta T} + \frac{4 \xi_0^2}{T} + 60c\delta \zeta^2 
        \\&\hspace{1.5cm} + 20 \alpha \sigma^2 \rbr*{\tfrac{1}{n} + 6c\delta} + \frac{120 c \delta \sigma^2}{\alpha T}\\
         & =  \frac{4(f(\xx^{0}) - f^\star)}{\eta T} + \frac{120c\delta \sigma^2}{\eta 8 L T} + \eta 160 L \sigma^2 \rbr*{\tfrac{1}{n} + 6c\delta}
        \\&\hspace{1.5cm} + \frac{4 \xi_0^2}{T} + 60c\delta \zeta^2.
    \end{align*}
    The last equality substituted the value of $\alpha = 8L\eta$. Next, let us use the appropriate step-size of
\[
    \eta = \min\rbr*{\sqrt{\frac{4(f(\xx_0) - f^\star) + \tfrac{15c\delta }{L}(\zeta^2 + 2\sigma^2) }{T (160 L \sigma^2 )\rbr*{\tfrac{1}{n} + 6c\delta}}}, \frac{1}{8L}}\,.
\]
This gives the following final rate of convergence:
\begin{align*}
    \frac{1}{T}\sum_{t=1}^T\E \norm{\nabla f(\xx^{t-1})}^2 &                                                                                                                                                                                                                                        \\
                                                           & \hspace{-2cm}\leq \frac{1}{1 - 60c \delta B^2} \cdot \bigg(60c\delta\zeta^2 + \sqrt{\frac{160 L \sigma^2\rbr*{\tfrac{1}{n} + 6c\delta}}{T}}\cdot \sqrt{4(f(\xx_0) - f^\star) + \tfrac{15c\delta }{ L}(\zeta^2 + 2\sigma^2) } \\
                                                           & \hspace{1cm} + \frac{32L(f(\xx^{0}) - f^\star)}{ T} + \frac{15c\delta  \sigma^2}{T}                                                                                                                                          \\
                                                           & \hspace{1cm} + \frac{ \frac{20\sigma^2}{n} +   12c \delta ( 2\sigma^2 + \zeta^2 +  B^2 \norm{\nabla f(\xx^{0})}^2)}{T}\bigg).
\end{align*}
\qed

\section{Updates with respect to reviews.}
\subsection{Additional Related work} \label{ssec:additional}
In this section, we add comments on works which are very close to this paper.

\begin{itemize}
    \item \citet{li2019rsa} propose \textsc{RSA} for Byzantine-resilient distributed learning on heterogeneous data. They introduce an additional $\ell_p$-norm regularized term to 
    the objective to penalize the difference between local iterates and server iterate and show convergence of RSA for strongly convex local objectives and penalized term. However, \textsc{RSA} cannot defend the state-of-the-art attacks like \citep{baruch2019little,xie2019fall} as they didn't utilize the temporal information. 
    Compared to \textsc{RSA}, our method does not assume strongly convexity and we consider more general cost functions with no explicit regularized term. In addition, our method is shown to defend the state-of-the-art attacks.

    \item \citep{pmlr-v139-yang21e} is a parallel work which uses buffer for asynchronous Byzantine-resilient training (BASGD). The buffer and bucketing are similar techniques with vastly different motivations. The key difference between buffer and bucketing is that buffer is only reassigned when timer exceeds a threshold while bucketing reshuffles in each iteration. Therefore, buffer does not guarantee that partial aggregated gradients are identically distributed while bucketing does. In addition, our theoretical analysis does not require bounded gradient assumption $\norm{\nabla f(\xx)}\le D$ for all $\xx$. 
    
    \item \citet{wu2020federated} uses ByrdSAGA for Byzantine-resilient SAGA approach for distributed learning. The key differences between ByrdSAGA and our work are as follows.
    In our setting, there are two sources of variances of the gradients - intra-worker variance $\sigma^2$ and inter-client variance $\zeta^2$. We show that simply using worker momentum suffices to tackle the former and handling the latter $\zeta^2$ is the main challenge. ByrdSAGA assumes that each worker only has finite data points as opposed to the stochastic setting we consider. Hence they can use SAGA on the worker in place of worker momentum to reduce the intra-client variance $\sigma^2$. The effect of $\zeta^2$ (which is our main focus) remains unaffected. 

    Further, they consider the strongly convex setting whereas we analyze non-convex functions. Ignoring $\mu$ for sake of comparison, their Theorem 1 proves convergence to a radius of $\Delta_1 = O(\zeta^2)$ since always $C_\alpha \geq 2$. Thus, their rates are similar to \citep{acharya2021robust} and do not converge to the optimum even when $\delta = 0$. In contrast, our \Cref{thm:convergence-general} proves convergence to a radius of $O(\delta \zeta^2)$. We believe our improved handling of $\zeta^2$ can be combined with their usage of SAGA/variance reduction to yield even faster rates. This we leave for future work.
    
\end{itemize}

\end{document}